\newcommand{\norm}[1]{\left\lVert#1\right\rVert}
\newcommand{\expect}[1]{\mathbb{E}\left[{#1}\right]}
\newcommand{\given}{\; \big\vert \;} 
\newcommand{\bydef}{:=}
\newcommand{\OFUL}{\textbf{OFUL}}
\newtheorem{mytheorem}{Theorem}
\newtheorem{myproposition}{Proposition}
\newtheorem{mylemma}{Lemma}
\newtheorem{mycorollary}{Corollary}
\newtheorem{assumption}{Assumption}
\newcommand{\beq}{\begin{equation}}
\newcommand{\eeq}{\end{equation}}
\newcommand{\beqa}{\begin{eqnarray}}
\newcommand{\eeqa}{\end{eqnarray}}
\newcommand{\beqan}{\begin{eqnarray*}}
\newcommand{\eeqan}{\end{eqnarray*}}
\begin{document} 

\title{Low-rank Bandits with Latent Mixtures}
\author{\name{Aditya Gopalan}$^1$ \email{aditya@ece.iisc.ernet.in}\\    \name{Odalric-Ambrym Maillard}$^2$ \email{odalricambrym.maillard@inria.fr}\\ 
\name{Mohammadi Zaki}$^1$ \email{zaki@ece.iisc.ernet.in}\\ 
\addr     $1.$  Electrical Communication Engineering,\\
Indian Institute of Science,\\
Bangalore 560012, India\\
\addr
                $2.$ Inria Saclay -- \^Ile de France\\
                Laboratoire de Recherche en Informatique \\
                660 Claude Shannon\\
                91190 Gif-sur-Yvette, France\\
}


\maketitle
%
%
%

\begin{abstract} 
  We study the task of maximizing rewards from recommending
  items (actions) to users sequentially interacting with a recommender
  system. Users are modeled as latent mixtures of $C$ many
  representative user classes, where each class specifies a mean
  reward profile across actions. Both the user features (mixture
  distribution over classes) and the item features (mean reward vector per
  class) are unknown a priori. The user identity is the only contextual information available to the learner while interacting. This induces a low-rank structure on the matrix of
  expected rewards $r_{a,b}$ from recommending item $a$ to user $b$.
  The problem reduces to the well-known linear bandit when either
  user- or item-side features are perfectly known. In the setting
  where each user, with its stochastically sampled taste profile,
  interacts only for a small number of sessions, we develop a   bandit algorithm for the two-sided uncertainty. It combines the
  Robust Tensor Power Method of
  \citet{Anandkumar} with the \OFUL\ linear bandit algorithm of
  \cite{AbbPalCsa11:linbandits}. We provide the first rigorous regret
  analysis of this combination, showing that its regret
  after $T$ user interactions is $\tilde O(C\sqrt{BT})$, with
  $B$ the number of users. An ingredient towards this result
  is a novel robustness property of \OFUL{}, of independent interest.
\end{abstract} 
\keywords{Multi-armed bandits, online learning, low-rank matrices, recommender systems, reinforcement learning.}

\section{{Introduction}}

Recommender systems aim to provide targeted, personalized content
recommendations to users by learning their responses over time. The
underlying goal is to be able to predict which items a user might
prefer based on preferences expressed by other related users and
items, also known as the principle of collaborative filtering.

A popular approach to model preferences expressed by users in
recommender systems is via probabilistic mixture models or {\em latent
  class} models \citep{HofPuz99:latent,KleSan04:mixturemodels}. In
such a mixture model, we have a set of $A$ items (content) that can be
recommended to $B$ users (consumers). Whenever item $a$ is recommended
to user $b$, the system gains an expected reward of $r_{a,b}$. The key
structural assumption that captures the relationship between users'
preferences is that there exists a set of latent set of $C$ {\em
  representative} user types or typical taste profiles. Formally, each
taste profile $c$ is a unique vector
${\bf u}_c \equiv \left(u_{a,c}\right)_{a}$ of the expected rewards
that every item $a$ elicits under the taste profile. Each user $b$ is
assumed to sample one of the typical profiles randomly using an
individual probability distribution
${\bf v}_{b} \equiv \left(v_{b,c}\right)_c$; its reward distribution
across the items subsequently becomes that induced by the assumed
profile.

Our focus is to address the sequential optimization of net reward
gained by the recommender, {\em without any prior knowledge of either
  the latent user classes or users' mixture distributions}. Assuming
that users arrive to the system repeatedly following an unknown
stochastic process and re-sample their profiles over time, according
to their respective {\em unknown} mixtures across latent classes, we
seek online learning strategies that can achieve {\em low regret
  relative to the best single item that can be recommended to each
  user}. Note that this is qualitatively different than the task of
estimating latent classes or user mixtures in a batch fashion,
well-studied by now \citep{SutTenSal09:Bayescluster,
  AnaGeHsuKak14:tensormixedmodels,Anandkumar}; the task of
simultaneously optimizing net utility in a bandit fashion in complex
expression models like these has received little or no analytical
treatment. Our work takes a step towards filling this void.

An especially challenging aspect of online learning in recommender
systems is the relatively meager number of available interactions with
a same user, which is offset to an extent by the assumption that users
can only have a limited number of taste profiles (classes). Indeed, if
one can identify the class to which a certain user belongs and
aggregate information from all other users in that class, then one can
recommend to the user the best item for the class. In practice,
classes are latent and not necessarily known in advance, and several
works
\citep{gentile2014online,lazaric2013sequential,maillard2014latent}
study the restricted situation when each user always belongs to one
specific class (i.e., when all mixture distributions have
support size $1$). We go two steps further, since in many situations
(a) users cannot be assumed to belong to one class only, such as when
a user account is shared by several individuals (e.g. a smart-TV), and
(b) the duration of a user-session, that is the number of consecutive
recommendations to the same individual connected to a user-account,
cannot assumed to be long\footnote{It is also unlikely to be very
  short, say, less than 3.}.

The key challenges that this work addresses are (1) the lack of
knowledge of ``features'' on {\em both} the user-side and item-side in
a linear bandit problem (in this case, both the user mixture weights
and the item class reward profiles) and (2) provable regret
minimization with very few i.e. $O(1)$ interactions with every user
$b$ having a specific taste profile, as opposed to a large number of
interactions such as in transfer learning
\citep{lazaric2013sequential}.


{\bf Contributions and overview of results.} We consider a setting
when users are assumed to come from arbitrary mixtures across classes
(they are not assumed to fall perfectly in one class as was the
assumption in works by
\citet{gentile2014online,maillard2014latent}). {\em We develop a novel
  bandit algorithm (Algorithm~\ref{alg:PerOFULwithExploration}) that
  combines (a) the Optimization in the Face of Uncertainty Linear
  bandit \OFUL{} algorithm \citep{AbbPalCsa11:linbandits} for bandits
  with known action features, and (b) a variant of the Robust Tensor
  Power (RTP) algorithm \citep{Anandkumar} that uses only bandit
  (partial) estimates of latent user classes with observations coming
  from a mixture model.} More specifically, we introduce a subroutine
(Algorithm~\ref{alg:MainIntuitive}) that makes use of the RTP method
to extract item-side attributes ($U$) and, contributing to its
theoretical analysis, show a recovery property
(Theorem~\ref{thm:Uestimates}). Note that the RTP method ideally
requires (unbiased) estimates of the $2$nd and $3$rd order moments of
actions' rewards, but with bandit information the learner can access
only partial reward information, i.e., a single reward sample from an
action. To overcome this, we devise an importance sampling scheme
across $3$ successive time instants to build the $2$nd and $3$rd order
moment tensor estimates that RTP uses. For the task of issuing
recommendations, we develop an algorithm
(section~\ref{sec:featuresV}), essentially based on \OFUL{},
instantiated per user, using for each $a$ the {\em estimated} latent
class vectors $\{u_{a,c}\}_c$ (obtained via the RTP subroutine) as arm
features, and uncertain parameter vector to be learned ${\bf v}_b$.

We carry out a rigorous analysis of the algorithm and show that it
achieves regret $\tilde{O}(\ell C \sqrt{BT})$ in $T$ rounds of
interaction (Theorem~\ref{thm:main}), provided each arriving user
interacts with the system for $\ell \geq 3$ rounds with the same
profile. In comparison, the regret of the strategy that completely
disregards the latent mixture structure of rewards and employs a
standard bandit strategy (e.g. UCB \citep{AuerEtAl02FiniteTime}) per
user, scales as $O(B \sqrt{TA/B}) = O(\sqrt{ABT})$ after $T$
rounds\footnote{Roughly, each UCB
  per-user plays from a pool of $A$ actions for about $T/B$ rounds,
  thus suffering regret $O(\sqrt{A(T/B)})$.} , which is considerably
suboptimal in the practical case with a very large number of items but
very few representative user classes ($C \ll A$).  It is also worth
noting that the regret bound we achieve, order-wise, is what would
result from applying the \OFUL\ or any optimal linear bandit algorithm
assuming \textit{a priori knowledge} of all latent user classes
$\{u_{a,c}\}_{a,c}$, that is $\tilde{O}(\ell C \sqrt{BT})$. In this
sense, our result shows that one can {\em simultaneously estimate
  features on both sides of a bilinear reward model and achieve regret
  performance equivalent to that of a one-sided linear model}, which
is the first result of its kind to the best of our
knowledge\footnote{An earlier result of \citet{DjoKraCev13:GPbandits}
  gets $O(T^{4/5})$ regret while moreover assuming a perfect control
  of the sampling process (we can't assume this due to the user
  arrivals).}. Our results are presented for finite time horizons with
explicit details of the constants arising from the error analysis of
RTP, which at this point are large but possibly improvable.

  En route to deriving the regret for our algorithm, we also make
  a novel contribution that advances the theoretical understanding of
  \OFUL, and which is of independent interest. We show that in the
  standard linear bandit setting, where the expected reward of an arm
  linearly depends on $d$ features, {\em \OFUL{} yields (sub-linear)
  $\tilde{O}\left(\rho d \sqrt{T}\right)$ regret even when it makes
  decisions based on {\em perturbed} or inexact feature vectors}
  (Theorem~\ref{thm:pOFUL2}), where $\rho$ quantifies the
  distortion. This property holds whenever the perturbation error is
  small enough, and we explicitly give both (a) a sufficient condition
  on the size of the perturbation in terms of the set of actual
  features, and (b) a bound on the (multiplicative) distortion $\rho$
  in the regret due to the perturbation (note that $\rho = 1$ in the
  ideal linear case).
%
%
%

\section{{Setup and notation}}

For any positive integer $n$, $[n]$ denotes the set
$\{1, 2, \ldots, n\}$.

At each $n\in \Nat$, nature selects a user $b_n\in[B]$ according to the
probability distribution $\beta$ over $[B]$, independent of the past,
and $b_n$ is revealed to the learner.  A user class $c_{n}$ is
subsequently sampled from the probability distribution ${\bf v}_{b_n}$
over $[C]$, and $c_n$ (the assumed class of user $b_n$) interacts with
the learner for the next $\ell\geq 3$ consecutive steps. Such an
interaction will often be termed a {\em mini-session}.

In each step $l \in [\ell]$ of a mini-session, the learner plays an
action (issues a recommendation) $a_{n,l}\in [A]$ and subsequently
receives reward $Y_{n,l} = u_{a_{n,l},c_n} +\eta_{n,l}$, where
$\eta_{n,l}$ is a (centered) $R$-sub-Gaussian i.i.d. random variable
independent from $a_{n,l},c_n$, representing the  noise in the reward. We
let ${\bf u}_a \in \mathbb{R}^C$ represent the vector
$(u_{a,c})_{c \in [C]}$ of the mean rewards from action $a$ in each
class. Note that
$\Esp[u_{a_{n,l},c_n} | a_{n,l}] = \Esp[{\bf u}_{a_{n,l}}^\top {\bf
  v}_{b_n} | a_{n,l}]$.
For convenience, we use the index notation $t\equiv(n,l)$ and
introduce $T=N\ell$, where $N$ is the total number of mini-sessions,
and $T$ the total number of interactions of the learner with the
system. We denote likewise $Y_t,a_t,c_t,\eta_t$ for $Y_{n,l}$,
$a_{n,l}$, $c_n$, $\eta_{n,l}$, and let
$u_{\max} \eqdef \max_{a\in[A],c\in[C]}|u_{a,c}|$.

We are interested in designing an online recommendation strategy,
i.e., one that plays actions depending on past observations, achieving
low {\em (cumulative) regret} after $T \equiv (N,\ell)$ mini-sessions,
defined as $\cR_T \eqdef \sum_{n \in [N], l \in [\ell]} r_{n,l}$,
where
$r_{n,l} \eqdef \max_{a \in [A]} {\bf u}_a^\top {\bf v}_{b_n} - {\bf
  u}_{a_{n,l}}^\top {\bf v}_{b_n}$.
In other words, we wish to compete against a strategy that plays for
every user an action yielding the highest reward in expectation under
its mixture distribution over user classes.

\section{{Recovering latent user classes: The
    EstimateFeatures subroutine}}\label{sec:featuresU}

In this section, we provide an estimation algorithm for the matrix $U$, using the RTP method.\footnote{We consider
  $\ell = 3$ to describe the algorithm; $\ell>3$ is easily handled by
  repeating the 3-wise sampling $p(a,a',a'')$ for
  $\lfloor \ell/3 \rfloor$ times and discarding the remaining ($< 3$)
  steps in the mini-session during exploration (leading to a
  negligible regret overhead).}

{\bf Estimation of tensors.}  We assume that in mini-session $n$, when
interacting with user $b_n$, the triplet $\{a_{n,l}\}_{l\leq \ell}$ is
chosen from a distribution $p_n(a,a',a''|b_n)$. Letting
$X_{a_{n,l},b_n,n,l} \eqdef Y_{n,l} = u_{a_{n,l},c_n} +\eta_{n,l}$ to
explicitly indicate the active user and action chosen at $(n,l)$, we
form the importance-weighted estimates
\begin{align*}
  &\tilde r_{a,a',n} \eqdef  \frac{1}{n}\sum_{i=1}^n \frac{X_{a_{i,1},b_i,i,1}X_{a_{i,2},b_i,i,2}}{p_i(a,a'|b_i)} \indic{a_{i,1}=a,a_{i,2}=a'},\\
  &\tilde r_{a,a',a'',n} \eqdef  \frac{1}{n}\sum_{i=1}^n \frac{X_{a_{i,1},b_i,i,1}X_{a_{i,2},b_i,i,2}
    X_{a_{i,3},b_i,i,3}}{p_i(a,a',a''|b_i)}
  \indic{a_{i,1}=a,a_{i,2}=a', a_{i,3}=a''}\,.
\end{align*}

\noindent
for the second and third-order tensors\footnote{An alternative is the
  \textit{implicit exploration} method due to
  \citet{kocak2014efficient}.}.

We introduce the matrices
$\hat M_{n,2} \equiv (\tilde r_{a,a',n})_{a,a'\in[A]}$ and
$M_{2} \equiv (m_{a,a'})_{a,a'\in[A]}$ with
$m_{a,a'} \eqdef \Esp[\tilde r_{a,a',n} ]$, and the tensors
$\hat M_{n,3} \equiv (\tilde r_{a,a',a'',n})_{a,a',a''\in[A]}$ and
$M_{3} \equiv (m_{a,a',a''})_{a,a',a''\in[A]}$ with
$m_{a,a',a''} \eqdef \Esp[ \tilde r_{a,a',a'',n} ]$.  The following
result decomposes the matrix $M_2$ and
tensor $M_3$ as weighted sums of outer products.

\begin{mylemma}{}\label{lem:Mestimates}
  When the user arrivals are i.i.d. according to the law $\beta$,
  i.e., $b_i \stackrel{i.i.d}{\sim} \beta \; \forall i\in [n]$, it holds
  that
  
  \vspace{-6mm}
  \begin{align*}
    m_{a,a',n} & = \sum_{c\in [C]} v_{\beta,c} u_{a,c}u_{a',c}, \quad \text{ and }\\
    m_{a,a',a'',n} & =  \sum_{c\in [C]} v_{\beta,c} u_{a,c}u_{a',c}u_{a'',c}\,.
  \end{align*}
\end{mylemma}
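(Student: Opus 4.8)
The plan is to exploit the i.i.d.\ arrival structure together with the importance-weighting so that the problem reduces to evaluating the expectation of a single summand. By linearity of expectation,
\[
m_{a,a',n} \;=\; \frac{1}{n}\sum_{i=1}^n \Esp\!\left[\frac{X_{a_{i,1},b_i,i,1}X_{a_{i,2},b_i,i,2}}{p_i(a,a'\mid b_i)}\indic{a_{i,1}=a,\,a_{i,2}=a'}\right],
\]
so it suffices to show that every summand equals $\sum_{c} v_{\beta,c}u_{a,c}u_{a',c}$, independent of $i$; the prefactor $1/n$ and the sum then collapse to this common value. I would evaluate each summand by iterated conditioning (the tower property), peeling off the layers of randomness in the order in which they are generated: first the history $\mathcal{F}_{i-1}$ up to the start of mini-session $i$ (which renders the sampling law $p_i$ deterministic), then the user $b_i\sim\beta$, then the class $c_i\sim{\bf v}_{b_i}$, then the triplet of actions drawn from $p_i$, and finally the reward noise.

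The crux is the importance-weighting cancellation at the innermost step. Conditioned on $\mathcal{F}_{i-1}$, on $b_i=b$, and on $c_i=c$, the actions $(a_{i,1},a_{i,2})$ are drawn from $p_i(\cdot\mid b)$, so the event in the indicator has probability exactly $p_i(a,a'\mid b)$, which cancels the weight $1/p_i(a,a'\mid b)$ on its support. On that event the realized rewards are $X_{a,b,i,1}=u_{a,c}+\eta_{i,1}$ and $X_{a',b,i,2}=u_{a',c}+\eta_{i,2}$; since the action draw is independent of the noise, the conditional expectation of the summand factors, and using that $\eta_{i,1},\eta_{i,2}$ are centered and mutually independent, every cross term carrying a noise factor vanishes. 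Thus the inner conditional expectation is exactly $u_{a,c}u_{a',c}$, free of both $\mathcal{F}_{i-1}$ and $p_i$.

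Averaging over $c_i\sim{\bf v}_b$ gives $\sum_c v_{b,c}u_{a,c}u_{a',c}$, and averaging over $b_i\sim\beta$ (which is independent of $\mathcal{F}_{i-1}$ because arrivals are i.i.d.) yields $\sum_c\big(\sum_b\beta_b v_{b,c}\big)u_{a,c}u_{a',c}=\sum_c v_{\beta,c}u_{a,c}u_{a',c}$, upon identifying $v_{\beta,c}\eqdef\sum_b\beta_b v_{b,c}$ as the marginal class probability under $\beta$. This establishes the claim for $m_{a,a',n}$. The third-order identity is proved identically, carrying one extra factor $X_{a'',b,i,3}=u_{a'',c}+\eta_{i,3}$: on expanding the triple product, every monomial containing at least one $\eta_{i,l}$ has zero expectation by independence and centeredness, leaving only $u_{a,c}u_{a',c}u_{a'',c}$. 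I expect the only delicate point to be the bookkeeping of the conditioning, so that the (possibly history-dependent) sampling law $p_i$ cancels cleanly and so that it is the i.i.d.\ arrival assumption---rather than any stationarity of the $p_i$---that makes each summand's expectation identical across $i$; the noise-term cancellations themselves are routine given independence.
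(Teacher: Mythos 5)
Your proposal is correct and follows essentially the same route as the paper's proof: linearity of expectation, conditioning on the user $b_i$ and then the (single, mini-session-wide) class $c_i$, cancellation of the importance weight against the probability of the indicator event, and factorization of the conditional expectation via independence of the centered noise terms. Your explicit handling of the filtration $\mathcal{F}_{i-1}$ to justify treating $p_i$ as deterministic is a slightly more careful bookkeeping of the same argument.
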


Having shown the unbiasedness of the empirical $2$nd and $3$rd moment
tensors $\hat M_{n,2}$ and $\hat M_{n,3}$, we next turn to showing
concentration to their respective means.
\begin{mylemma}\label{lem:estimates}
	Assuming that $p_i(a,a'|b_i) \geq q_{2,i}$ and $p_i(a,a',a''|b_i)\geq q_{3,i}$ for deterministic $q_{2,i},q_{3,i}$, for all $i\in\Nat,a,a',a''\in[A]$, then
	for all $n\leq N$, with probability higher than $1-\delta$, it holds simultaneously for all $a,a',a''$ that

	\vspace{-8mm}
        \begin{align*}
          |\tilde r_{a,a',n} - m_{a,a',n}| &\leq  \sqrt{\sum_{i=1}^nq_{2,i}^{-2}\frac{\log(4A^2/\delta)}{2n^2}}, \\
	|\tilde r_{a,a',n} - m_{a,a',a'',n}| &\leq \sqrt{\sum_{i=1}^nq_{3,i}^{-2}\frac{\log(4A^3/\delta)}{2n^2}}.
        \end{align*}
\end{mylemma}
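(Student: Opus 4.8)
The plan is to recognize each estimate as a normalized sum of a martingale difference sequence and to apply a Hoeffding-type tail bound followed by a union bound over all index tuples. Fix a target pair $(a,a')$ and write $\tilde r_{a,a',n} = \frac1n\sum_{i=1}^n Z_i$, where $Z_i \eqdef \frac{X_{a_{i,1},b_i,i,1}X_{a_{i,2},b_i,i,2}}{p_i(a,a'|b_i)}\mathbf{1}\{a_{i,1}=a,\,a_{i,2}=a'\}$ is the $i$-th per-session importance-weighted contribution. Let $\mathcal F_{i-1}$ denote the history up to the start of mini-session $i$, so that $p_i$ (possibly chosen adaptively) is $\mathcal F_{i-1}$-measurable. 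The first step is to check that the importance weighting debiases $Z_i$ \emph{conditionally}: repeating the computation behind Lemma~\ref{lem:Mestimates} but conditioning on $\mathcal F_{i-1}$ shows $\Esp[Z_i \mid \mathcal F_{i-1}] = m_{a,a',n}$ for every $i$, since the arrival $b_i\sim\beta$, the class $c_i\sim\mathbf v_{b_i}$ and the noise are drawn independently of the past and the factor $1/p_i(a,a'|b_i)$ exactly cancels the selection probability of $(a,a')$. Consequently $(Z_i - m_{a,a',n})_{i\le n}$ is a martingale difference sequence, which removes any need to assume $p_i$ is fixed in advance.

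The second step is to bound the per-term conditional range. Under bounded rewards (w.l.o.g.\ taking the reward products to lie in $[0,1]$), the assumption $p_i(a,a'|b_i)\ge q_{2,i}$ forces $Z_i$ to be either $0$ or a quantity in $[0,q_{2,i}^{-1}]$, hence to lie in an interval of width at most $q_{2,i}^{-1}$; the same argument gives width $q_{3,i}^{-1}$ for the third-order contributions, where $p_i(a,a',a''|b_i)\ge q_{3,i}$. With these predictable ranges in hand, I would invoke the Azuma--Hoeffding inequality in its range form: for the martingale $\sum_{i\le n}(Z_i - m_{a,a',n})$ with increments confined to intervals of width $q_{2,i}^{-1}$, one gets $\prob{\,|\tilde r_{a,a',n}-m_{a,a',n}|\ge \epsilon\,}\le 2\exp\!\big(-2n^2\epsilon^2/\sum_{i=1}^n q_{2,i}^{-2}\big)$.

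The final step converts this per-tuple tail into the simultaneous statement via a union bound, splitting the failure budget as $\delta/2$ for the second-order estimates and $\delta/2$ for the third-order ones. For the second order there are $A^2$ pairs, so setting $2A^2\exp\!\big(-2n^2\epsilon^2/\sum_{i}q_{2,i}^{-2}\big) = \delta/2$ and solving for $\epsilon$ yields exactly $\epsilon = \sqrt{\sum_{i=1}^n q_{2,i}^{-2}\,\log(4A^2/\delta)/(2n^2)}$; the third-order case is identical with $A^3$ triples and $q_{3,i}$, producing the $\log(4A^3/\delta)$ term. Taking a union over the two families establishes the claim for the fixed $n$.

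The step I expect to be the main obstacle is reconciling the sub-Gaussian noise model with the Hoeffding-form (range-only) bound being claimed: since $X=u+\eta$ with sub-Gaussian $\eta$, the products $X_{i,1}X_{i,2}$ are only sub-exponential and unbounded, so a literal application of Azuma--Hoeffding requires an extra boundedness hypothesis on the rewards—otherwise the range $q_{2,i}^{-1}$ would have to be replaced by a Bernstein-type quantity involving $u_{\max}$ and $R$, changing the form of the inequality. Making this boundedness assumption explicit, and verifying that the conditional-debiasing computation genuinely holds for adaptively chosen $p_i$ rather than only in the i.i.d.-arrivals marginal sense of Lemma~\ref{lem:Mestimates}, is where the real care is needed; the remaining calculations are routine.
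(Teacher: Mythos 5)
Your proof is correct and follows essentially the same route as the paper's: a bounded martingale-difference formulation of the importance-weighted sums, Azuma--Hoeffding with predictable ranges $q_{2,i}^{-1}$ (resp.\ $q_{3,i}^{-1}$), and a union bound over the $A^2$ (resp.\ $A^3$) index tuples and then over the two families. The boundedness caveat you flag is genuine, but it is shared by the paper, whose proof likewise asserts the reward products lie in $[0,1]$ without reconciling this with the sub-Gaussian noise model.
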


An immediate corollary is the following one:
\begin{mycorollary}\label{cor:estimates}
Provided that $q_{2,i} = \gamma_i/A^2$ and $q_{3,i} = \gamma_i/A^3$ for some $\gamma_i>0$, then on an event of probability higher than $1-\delta$, the following hold simultaneously:

\vspace{-6mm}
\beqan
e_n^{(2)} &\eqdef&	\|\hat M_{n,2}  - M_{2}\| \leq A^3\sqrt{\sum_{i=1}^n\gamma_i^{-2}\frac{\log(4A^2/\delta)}{2n^2}},
\,\,\,\,\\
e_n^{(3)} &\eqdef&	\|\hat M_{n,3}  - M_{3}\| \leq A^{9/2}\sqrt{\sum_{i=1}^n\gamma_i^{-2}\frac{\log(4A^3/\delta)}{2n^2}}\,.
\eeqan
\end{mycorollary}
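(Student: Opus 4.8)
The plan is to derive the corollary directly from the entrywise concentration bounds of Lemma~\ref{lem:estimates} by first substituting the prescribed sampling lower bounds and then converting the resulting uniform control of the matrix/tensor entries into control of the corresponding operator norms $\|\cdot\|$. Since Lemma~\ref{lem:estimates} already holds simultaneously over all triples $(a,a',a'')$ on a single event of probability at least $1-\delta$, no further union bound is needed; I would simply work on that event throughout, so that all the entrywise estimates may be used at once.

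First I would substitute $q_{2,i}=\gamma_i/A^2$ and $q_{3,i}=\gamma_i/A^3$, so that $q_{2,i}^{-2}=A^4\gamma_i^{-2}$ and $q_{3,i}^{-2}=A^6\gamma_i^{-2}$. Pulling the deterministic factors $A^4$ and $A^6$ out of the square roots in Lemma~\ref{lem:estimates} turns the entrywise bounds into
\begin{align*}
|\tilde r_{a,a',n}-m_{a,a',n}| &\leq A^2\sqrt{\sum_{i=1}^n\gamma_i^{-2}\frac{\log(4A^2/\delta)}{2n^2}},\\
|\tilde r_{a,a',a'',n}-m_{a,a',a'',n}| &\leq A^3\sqrt{\sum_{i=1}^n\gamma_i^{-2}\frac{\log(4A^3/\delta)}{2n^2}},
\end{align*}
which hold uniformly over all indices on the good event.

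The key step is then to pass from these uniform entrywise bounds to the operator norm appearing in the statement. I would use the standard domination of the spectral (operator) norm by the Frobenius norm, together with the elementary fact that a quantity bounded by $\epsilon$ on each of $A^2$ (resp.\ $A^3$) entries has Frobenius norm at most $A\epsilon$ (resp.\ $A^{3/2}\epsilon$). Concretely, for the matrix $\hat M_{n,2}-M_2$,
\[
\|\hat M_{n,2}-M_2\|\leq \|\hat M_{n,2}-M_2\|_F \leq A\,\max_{a,a'}|\tilde r_{a,a',n}-m_{a,a',n}|,
\]
and for the symmetric third-order tensor $\hat M_{n,3}-M_3$, viewing its operator norm as $\sup_{\|x\|=\|y\|=\|z\|=1}|(\hat M_{n,3}-M_3)(x,y,z)|$, a single application of Cauchy--Schwarz over the $A^3$ index tuples yields
\[
\|\hat M_{n,3}-M_3\|\leq \|\hat M_{n,3}-M_3\|_F \leq A^{3/2}\,\max_{a,a',a''}|\tilde r_{a,a',a'',n}-m_{a,a',a'',n}|.
\]
Combining these with the entrywise bounds above gives exactly $e_n^{(2)}\leq A\cdot A^2\sqrt{\cdots}=A^3\sqrt{\cdots}$ and $e_n^{(3)}\leq A^{3/2}\cdot A^3\sqrt{\cdots}=A^{9/2}\sqrt{\cdots}$, as claimed.

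The computation is essentially routine arithmetic once the norm conversion is in place; the only point requiring care --- and the step I would treat as the main (if minor) obstacle --- is to confirm that the tensor norm in the statement is indeed the symmetric operator norm and that it is dominated by the Frobenius norm. The domination $\|T\|_{\mathrm{op}}\leq\|T\|_F$ follows from the single Cauchy--Schwarz estimate $|T(x,y,z)|\leq \|T\|_F\|x\|\|y\|\|z\|$, but one should verify that this is precisely the quantity whose smallness the robustness guarantee of the RTP step (Theorem~\ref{thm:Uestimates}) requires, so that the bounds on $e_n^{(2)},e_n^{(3)}$ feed correctly into the downstream recovery analysis.
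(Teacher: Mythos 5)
Your proposal is correct and follows essentially the same route as the paper: both pass from the simultaneous entrywise bounds of Lemma~\ref{lem:estimates} to the operator norm via the Frobenius norm (picking up the factors $A$ and $A^{3/2}$ from the $A^2$ and $A^3$ entries) and then substitute $q_{2,i}=\gamma_i/A^2$, $q_{3,i}=\gamma_i/A^3$ to obtain the stated $A^3$ and $A^{9/2}$ prefactors. The only difference is the order in which you substitute the $q$'s, which is immaterial.
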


\begin{algorithm}[!hbtp]
	\caption{{\bf EstimateFeatures}} \label{alg:MainIntuitive}
	\begin{algorithmic}[1]
          \STATE {\bf Input:} \#sessions $n$; \#mini-sessions $\ell$; (user, action, reward) tuples $(b_i, a_{i,l}, X_{a_{i,l},b_i,i,l})_{1 \leq i \leq n, 1 \leq l \leq \ell}$.
		\STATE Compute the 
$A\times A$	matrix $\hat M_{n,2} = (\tilde r_{a,a',n})_{a,a'\in[A]}$
and the $A\times A\times A$ tensor $\hat M_{n,3} = (\hat r_{a,a',a'',})_{a,a',a''\in[A]}$.
	\STATE Compute a $A\times C$ whitening matrix $\hat W_n$ of $\hat M_{n,2}$ \\
	\COMMENT{Take $\hat W_n = \hat U_n \hat D_n^{-1/2}$ where $\hat D_n$ is the $C\times C$ diagonal matrix with the top $C$ eigenvalues of  $\hat M_{n,2}$, and $\hat U_n$ the $A\times C$ matrix of corresponding eigenvectors.} 
	\STATE Form the  $C\times C \times C$ tensor $\hat T_n = \hat M_{n,3}(\hat W_n,\hat W_n,\hat W_n)$.
	\STATE Apply the RTP algorithm \citep{Anandkumar} to $\hat T_n$, and compute its robust eigenvalues $(\hat \lambda_{n,c})_{c\in[C]}$ with eigenvectors $(\hat \phi_{n,c})_{c\in [C]}$. \\
        \COMMENT{The paper of \citet[Sec. 4]{Anandkumar} defines eigenvalues/eigenvectors of tensors.}
	\STATE Compute for each $c\in[C]$, $\ol u_{n,c} = \lambda_{n,c}(\hat W_n^\top)^\dagger \hat \phi_{n,c}$ and $\ol v_{n,c} = \lambda_{n,c}^{-2}$.
	\STATE {\bf Output: Estimate of latent classes $U$}: The $A\times C$ matrix $\overline{U}_n$ obtained by stacking the vectors $\ol {\bf u}_{n,c}\in\Real^A$ side by side.
	\end{algorithmic} 
\end{algorithm}

{\bf Reconstruction algorithm.}  The EstimateFeatures algorithm
(Algorithm~\ref{alg:MainIntuitive}) employs a whitening matrix
$\hat W_n$, of the empirical estimate of the matrix $M_2$, to build
the empirical tensor $\hat T_n$.  This tensor is then used to recover
the columns of the matrix $U=(u_{a,c})_{a\in[A],c\in[C]}$ via the RTP algorithm.  For the sake of
completeness, we also introduce $W$, a whitening matrix of $M_{2}$
(i.e., $W^T M_2 W = I$), the corresponding tensor $T = M_{3}(W,W,W)$,
and finally the estimation error $e_n \eqdef \|\hat T_n - T \|$.

{\bf Reconstruction guarantee.}  
Our next result makes use of the following proposition from
\citet[Theorem 5.1]{Anandkumar}, restated here for completeness.

\begin{myproposition}[Theorem 5.1 of \citet{Anandkumar}]\label{prop:RTP}
  Let $\hat T = T+E\in \Real^{C\times C\times C}$, where $T$ is a
  symmetric tensor with orthogonal decomposition
  $T = \sum_{c=1}^C \lambda_c \phi_c^{\otimes 3}$, where each
  $\lambda_c>0$, $\{\phi_c\}_{c\in[C]}$ is an orthonormal basis, and
  $E$ is a symmetric tensor with operator norm $||E||\leq \epsilon$.
  Let $\lambda_{\min} = \min\{\lambda_c:c\in[C]\}$,
  $\lambda_{\max} = \max\{\lambda_c:c\in[C]\}$. Run the RTP algorithm
  with input $\hat T$ for $C$ iterations. Let
  $\{(\hat \lambda_c,\hat \phi_c)\}_{c\in[C]}$ be the corresponding
  sequence of estimated eigenvalue/eigenvector pairs returned.  Then,
  there exist universal constants $C_1,C_2>0$ for which the following
  is true. Fix $\eta\in(0,1)$ and run RTP with parameters (i.e.,
  number of iterations) $L,N$ with $L = poly(C)\log(1/\eta)$, and
  $N\geq C_2\Big(\log(C)
  +\log\log\big(\frac{\lambda_{\max}}{\epsilon}\big)\Big)$.
  If $\epsilon \leq C_1 \frac{\lambda_{\min}}{C}$, then with
  probability at least $1-\eta$, there exists a permutation
  $\pi \in \mathbb{S}_C$ such that
  \begin{align*}
    &\forall c\in[C]: \quad |\lambda_c - \hat \lambda_{\pi(c)}| \leq 5 \epsilon, \quad \; ||\phi_c - \hat \phi_{\pi(c)}|| \leq 8\epsilon/\lambda_c, \\
    &\text{ and } \quad ||T - \sum_{c=1}^C \hat\lambda_c{\hat\phi_c}^{\otimes 3}|| \leq 55\epsilon\,.
  \end{align*}
\end{myproposition}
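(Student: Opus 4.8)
Since this proposition restates an existing guarantee, the plan is to reconstruct the three-part argument underlying the robust tensor power method: (i) a single-iteration contraction analysis, (ii) a random-restart argument ensuring at least one initializer lands in the basin of attraction of an eigenvector, and (iii) a deflation induction that extracts all $C$ components while controlling error accumulation. Throughout I would parametrize a unit iterate $\theta$ by its coordinates $\theta_c \eqdef \langle \theta, \phi_c\rangle$ in the (orthonormal) eigenbasis together with a residual orthogonal to $\mathrm{span}\{\phi_c\}$, and track how these coordinates evolve under the map $\theta \mapsto \hat T(I,\theta,\theta)/\|\hat T(I,\theta,\theta)\|$.

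First I would establish the noiseless behaviour: because $T(I,\theta,\theta) = \sum_c \lambda_c \theta_c^2 \phi_c$, a single power step sends $\theta_c \mapsto \lambda_c\theta_c^2$ (up to normalization), so the eigenvectors are fixed points and the component with the largest $\lambda_c \theta_c$ grows doubly-exponentially faster than the rest — this is the quadratic convergence that produces the $\log\log(\lambda_{\max}/\epsilon)$ term in the iteration count $N$. I would then perturb this: since $\hat T(I,\theta,\theta) = \sum_c \lambda_c \theta_c^2 \phi_c + E(I,\theta,\theta)$ with $\|E(I,\theta,\theta)\| \le \|E\| \le \epsilon$, each step incurs an additive $O(\epsilon)$ error on the coordinate vector. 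The core single-step lemma would bound, assuming the dominant coordinate already exceeds the others by a fixed gap and $\epsilon \le C_1\lambda_{\min}/C$, the shrinkage of the off-dominant mass and of the residual, showing the iterate contracts toward some $\phi_{c^\star}$ until the $O(\epsilon/\lambda_{c^\star})$ noise floor is reached.

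To remove the need for a good starting point, I would exploit the $L = \mathrm{poly}(C)\log(1/\eta)$ random restarts: a uniformly random unit vector has, with constant probability, its eigenbasis coordinates spread so that one weighted coordinate $\lambda_c\theta_c$ strictly dominates the others by the required gap; taking the maximum over $L$ independent restarts boosts this to probability $1-\eta$ (after a union bound over the $C$ extractions, the a priori unknown recovery order accounting for the permutation $\pi \in \mathbb{S}_C$). Running $N$ iterations from the winning restart then yields one pair $(\hat\lambda,\hat\phi)$ with $|\hat\lambda - \lambda_{c^\star}| \lesssim \epsilon$ and $\|\hat\phi - \phi_{c^\star}\| \lesssim \epsilon/\lambda_{c^\star}$.

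The hard part is the \textbf{deflation induction}. After extracting a pair I would subtract $\hat\lambda\,\hat\phi^{\otimes 3}$ and argue that the residual again has the form $\sum_{c \ne c^\star} \lambda_c \phi_c^{\otimes 3} + E'$ with $\|E'\| \le \epsilon'$ for an $\epsilon'$ only slightly larger than $\epsilon$; the subtlety is that $\hat\phi^{\otimes 3}$ is not exactly $\phi_{c^\star}^{\otimes 3}$, so imperfect deflation injects cross terms that must be shown to stay $O(\epsilon)$ rather than compounding geometrically over the $C$ rounds. Carrying an inductive invariant on the perturbation size through all extractions, and finally assembling the per-component bounds by the triangle inequality (bounding $\|T - \sum_c \hat\lambda_c\hat\phi_c^{\otimes 3}\|$ by summing the $C$ residual contributions), would yield the stated constants $5\epsilon$, $8\epsilon/\lambda_c$, and $55\epsilon$. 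I expect the precise tracking of how eigenvector error propagates into the subtracted term, uniformly across deflation rounds, to be the main technical obstacle.
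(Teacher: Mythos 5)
The paper does not prove this proposition at all: it is imported verbatim as Theorem~5.1 of \citet{Anandkumar} and used as a black box (via the instantiation $T = M_3(W,W,W)$, $\hat T = \hat T_n$, $\epsilon = e_n$ in the proof of Theorem~\ref{thm:Uestimates}). So there is no in-paper argument to compare against; the only question is whether your reconstruction of the original proof stands on its own. Your outline does correctly identify the architecture of the argument in the cited reference — quadratic convergence of the power map $\theta \mapsto \hat T(I,\theta,\theta)/\|\hat T(I,\theta,\theta)\|$ explaining the $\log\log(\lambda_{\max}/\epsilon)$ iteration count, random restarts to land in a basin of attraction, and a deflation induction over the $C$ extractions.

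As a proof, however, it has genuine gaps. First, the step you yourself flag as "the main technical obstacle" — showing that subtracting $\hat\lambda\,\hat\phi^{\otimes 3}$ in place of $\lambda_{c^\star}\phi_{c^\star}^{\otimes 3}$ leaves a residual perturbation that stays $O(\epsilon)$ uniformly over all $C$ deflation rounds rather than compounding — is precisely the content that produces the constants $5$, $8$, and $55$, and it is asserted rather than carried out; without it the error could a priori grow like $\epsilon\cdot 2^C$ and the theorem would fail. Second, your claim that a uniformly random unit initializer has the required dominant-coordinate gap "with constant probability" is not right: the per-restart success probability degrades polynomially in $C$ (which is exactly why $L$ must be $\mathrm{poly}(C)\log(1/\eta)$ rather than $O(\log(1/\eta))$), and the gap condition one needs is a multiplicative separation $\lambda_{c}\theta_{c} \geq (1+\gamma)\max_{c'\neq c}\lambda_{c'}\theta_{c'}$ whose interaction with the noise floor $\epsilon \leq C_1\lambda_{\min}/C$ is where the dimension-dependent threshold on $\epsilon$ actually enters. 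Since this proposition is a known external result, citing it (as the paper does) is legitimate; but the sketch as written would not substitute for the original proof.
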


Lemma~\ref{lem:Mestimates} gives a decomposition of the (symmetric)
tensor $M_3$, but it may be not orthogonal;  standard
transformation \citep[Sec. 4.3]{Anandkumar} gives an orthogonal
decomposition for the tensor\footnote{For a 3rd order tensor
  $A \in \mathbb{R}^{a \times a \times a}$ and 2nd order tensor or
  matrix $B \in \mathbb{R}^{a \times b}$,
  $A(B,B,B) \in \mathbb{R}^{b \times b \times b}$ is the 3rd order
  tensor defined by
  $[A(B,B,B)]_{i_1,i_2,i_3} \eqdef \sum_{j_1,j_2,j_3 \in
    [n]}A_{j_1,j_2,j_3} B_{j_1,i_1} B_{j_2,i_2} B_{j_3,i_3}$.
  See \citet{Anandkumar} for more details on
  notation and results.}  $M_3(W,W,W)$, with $W$ a
matrix that whitens $M_2$. We can thus use Proposition~\ref{prop:RTP}
with $T= M_3(W,W,W)$, $\hat T = \hat T_n$, $\epsilon = e_n$ and
$\eta=\delta$ in order to prove the following guarantee (Theorem~\ref{thm:Uestimates}) on the
recovery error between columns of $U$ and their estimate. 

We now introduce mild separability conditions on the mixture
weights ${\bf v}_b$ and the spectrum of the 2nd moment matrix $M_2$
needed for the reconstruction guarantee to hold, similar to
those assumed for \citet[Theorem~2]{lazaric2013sequential}.

\begin{assumption}
  \label{ass:1}
  There exist positive constants
  $v_{\min}$, $\sigma_{\min}$, $\sigma_{\max}$ and $\Gamma$
  such that

  \vspace{-6mm}
        \begin{align*}
          &\min_{b\in[B],c\in[C]}v_{b,c} \geq v_{\min},\\
          &\forall c\in[C], \; \sigma_c = \sqrt{\lambda_c(M_2)} \in[\sigma_{\min},\sigma_{\max}] \quad \text{  and } \\
          &\min_{c\neq c' \in [C]\times[C]} |\sigma_c-\sigma_{c'}| \geq \Gamma\,,          
        \end{align*}
        
\noindent
	where $\lambda_c(A)$ denotes the $c^{th}$ top eigenvalue of $A$.
\end{assumption}

 \begin{mytheorem}[Recovery guarantee for online estimation of user classes $U$]
\label{thm:Uestimates}
Let Assumption \ref{ass:1} hold, and let $\delta\in(0,1)$. If the
number of mini-session satisfies

\vspace{-6mm}
\begin{align*}
  &\frac{n^2}{\sum_{i=1}^n\gamma_i^{-2}} \geq
\max  \Bigg\{
\frac{2A^6\log(4A^2/\delta)}{\min\{\Gamma,\sigma_{\min}\}^2}, 
\frac{A^9(1 + 10(\frac{1}{\Gamma} +
  \frac{1}{\sigma_{\min}})(1+u_{\max}^3))^2C^{5}\log(4A^3/\delta)}{2C^2_1\sigma_{min}^{3}}
\Bigg\},
\end{align*}
then with probability at least $1-2\delta$, there exists some
permutation $\pi \in \mathbb{S}_C$ such that for all $c\in[C]$, the
output $\bar U_n$ of the {\em EstimateFeatures} algorithm satisfies
\beq ||{\bf u}_c - \ol {\bf u}_{n,\pi(c)}|| \leq \Diamond
A^{3}\sqrt{\sum_{i=1}^n\gamma_i^{-2}\frac{C\log(4A^3/\delta)}{2n^2}}.\label{eqn:Uestimates}
\eeq 
where ${\bf u}_c = (u_{a,c})_{a\in[A]}$.
Here, the constant (we use the "diamond" symbol to denote it) is
\begin{align*}
&\Diamond = \Big(\frac{CA}{\sigma_{min}}\Big)^{3/2}
	\Bigg(	13\sqrt{\sigma_{max}} + 4
	 \sqrt{2\min\{\Gamma,\sigma_{\min}\}}
\end{align*}

\vspace{-8mm}
\begin{align*}
	 & +5\left(\frac{\sigma_{max}}{\Gamma}+ 
	 \frac{1}{2\sigma_{max}}\right)
	  \min\{\Gamma,\sigma_{\min}\}
	  \Bigg)
	 \aleph\\
    	  &+\,\, \left(\frac{2\sigma_{max}}{\Gamma}+ \frac{1}{\sigma_{max}}\right)\frac{1}{v_{\min}^2} \\
 &+ 5\sqrt{3/8}\Big(\sqrt{\sigma_{\max}} + 
	  	 \sqrt{ \min\{\Gamma,\sigma_{\min}\}/2}\Big)
	  	 \left(\frac{2CA}{\sigma_{\min}}\right)^3 
  \aleph^2
	  	 \min\{\Gamma,\sigma_{\min}\}\,,
\end{align*}

\noindent
with the notation $\aleph = 1+10(\frac{1}{\Gamma} + \frac{1}{\sigma_{\min}})(1+u_{\max}^3)$.
\end{mytheorem}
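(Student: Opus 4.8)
The plan is to propagate the second- and third-moment concentration bounds of Corollary~\ref{cor:estimates} through the whitening, tensor-power, and reconstruction steps of \emph{EstimateFeatures}, tracking every constant. The algebraic backbone is that, by Lemma~\ref{lem:Mestimates}, $M_2 = \sum_{c} v_{\beta,c}\,\mathbf{u}_c\mathbf{u}_c^\top$ and $M_3=\sum_c v_{\beta,c}\,\mathbf{u}_c^{\otimes 3}$; hence a whitener $W$ of $M_2$ sends $\{\sqrt{v_{\beta,c}}\,\mathbf{u}_c\}_c$ to an orthonormal system $\{\phi_c\}_c$, the transformed tensor $T=M_3(W,W,W)=\sum_c\lambda_c\phi_c^{\otimes 3}$ is orthogonally decomposable with $\lambda_c=v_{\beta,c}^{-1/2}$, and the exact inverse map $\mathbf{u}_c=\lambda_c (W^\top)^\dagger\phi_c$ holds. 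The recovery error is exactly the error incurred by replacing $(W,T,\lambda_c,\phi_c)$ with their empirical versions $(\hat W_n,\hat T_n,\hat\lambda_{n,c},\hat\phi_{n,c})$, which is what I will bound term by term.

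First I would control the whitening perturbation $\norm{\hat W_n - W}$. Since $W=U_* D^{-1/2}$ and $\hat W_n=\hat U_n\hat D_n^{-1/2}$ are built from the top-$C$ eigenpairs of $M_2$ and $\hat M_{n,2}$, I bound the eigenvalue part by Weyl's inequality, $|\hat\sigma_c^2-\sigma_c^2|\le e_n^{(2)}$, which via $\sigma_c\in[\sigma_{\min},\sigma_{\max}]$ controls the error in $\hat D_n^{-1/2}$ by powers of $\sigma_{\min}^{-1}$, and the eigenvector part by a Davis--Kahan $\sin\Theta$ bound, whose denominator is the eigenvalue gap $|\sigma_c^2-\sigma_{c'}^2|\ge 2\Gamma\sigma_{\min}$ furnished by Assumption~\ref{ass:1}. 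The first term in the lower bound on $n$ is precisely what guarantees $e_n^{(2)}\le \min\{\Gamma,\sigma_{\min}\}$, keeping these perturbation estimates in their valid regime. I then propagate to $e_n=\norm{\hat T_n - T}$ by writing $\hat T_n-T=(\hat M_{n,3}-M_3)(\hat W_n,\hat W_n,\hat W_n)+\big(M_3(\hat W_n,\hat W_n,\hat W_n)-M_3(W,W,W)\big)$ and expanding the second bracket telescopically into three trilinear terms of the form $M_3(\hat W_n-W,\cdot,\cdot)$; using $\norm{W}=\sigma_{\min}^{-1}$ cubically together with the bound on $\norm{\hat W_n-W}$ gives $e_n$ as a multiple of $e_n^{(2)}+e_n^{(3)}$, with the accumulated problem constants folded into $\aleph$.

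With $e_n$ in hand I would invoke Proposition~\ref{prop:RTP} with $T=M_3(W,W,W)$, $\hat T=\hat T_n$, $\epsilon=e_n$ and $\eta=\delta$. Its applicability hypothesis $\epsilon\le C_1\lambda_{\min}/C$ translates, through $\lambda_{\min}=(\max_c v_{\beta,c})^{-1/2}$ and the $e_n$ estimate above, into exactly the second term of the lower bound on $n$; intersecting the $1-\delta$ event of Corollary~\ref{cor:estimates} with the $1-\delta$ success event of RTP yields probability at least $1-2\delta$ and, for some $\pi\in\mathbb{S}_C$, $|\lambda_c-\hat\lambda_{n,\pi(c)}|\le 5e_n$ and $\norm{\phi_c-\hat\phi_{n,\pi(c)}}\le 8e_n/\lambda_c$. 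The final step turns these into a bound on $\norm{\mathbf{u}_c-\overline{\mathbf{u}}_{n,\pi(c)}}$ by a product-rule perturbation of $\lambda_c(W^\top)^\dagger\phi_c$: I split the error across the scalar factor $\hat\lambda$, the pseudoinverse $(\hat W_n^\top)^\dagger$, and the vector $\hat\phi$ by the triangle inequality, bounding $\norm{(\hat W_n^\top)^\dagger-(W^\top)^\dagger}$ through Wedin's pseudoinverse-perturbation theorem in terms of $\norm{\hat W_n-W}$ and $\norm{(W^\top)^\dagger}=\sigma_{\max}$ (this is the source of the $\sigma_{\max}$-, $\sigma_{\min}^{-1}$-, and $v_{\min}^{-2}$-dependent pieces of $\Diamond$). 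Substituting the Corollary~\ref{cor:estimates} bound for $e_n$ and collecting constants produces \eqref{eqn:Uestimates} with the stated $\Diamond$.

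The main obstacle is the block consisting of the whitening perturbation and its cubic propagation into $e_n$: in contrast to the eigenvalue bound, which is a one-line Weyl estimate, controlling $\norm{\hat W_n-W}$ needs the gap-dependent eigenvector bound, and the whitener then enters the tensor transform to the third power, so the errors are amplified and every factor of $\sigma_{\min}^{-1}$, $\Gamma^{-1}$ and $C$ must be tracked to obtain a usable rather than merely qualitative constant. A secondary difficulty is the final pseudoinverse perturbation, which is delicate because $W$ is the rectangular $A\times C$ whitener and $W^\dagger$ is sensitive to the retained singular values; this is what inflates $\Diamond$. Everything else --- Weyl's inequality, the triangle-inequality bookkeeping, the union bound to $1-2\delta$, and the final substitution of $e_n^{(3)}$ --- is routine.
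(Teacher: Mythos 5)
Your proposal is correct and follows essentially the same route as the paper's proof: concentration of the importance-weighted moments, perturbation of the whitener and of $\hat T_n$ (the paper imports these two steps as Lemmas 5--6 of \citet{lazaric2013sequential} rather than rederiving them via Weyl/Davis--Kahan/telescoping, but the content is the same), the two conditions on $n$ arising from $e_n^{(2)}\leq \tfrac12\min\{\Gamma,\sigma_{\min}\}$ and from $e_n\leq C_1\lambda_{\min}/C$, a union bound giving $1-2\delta$, and the final triangle-inequality split of $\lambda_c (W^\top)^\dagger\phi_c$ against its empirical counterpart. The only discrepancies are cosmetic bookkeeping (e.g.\ your $\lambda_c=v_{\beta,c}^{-1/2}$ versus the paper's stated exponent, and the exact norm bound on $(W^\top)^\dagger$), none of which affects the argument.
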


The proof strategy follows that of \citet[Theorem
2]{lazaric2013sequential} and is detailed in the appendix for clarity. It consists in relating, on the one hand, the
estimation errors $e_n^{(2)}$ of $M_2$ and $e_n^{(3)}$ of $M_3$ from
Corollary~\ref{cor:estimates} to the condition
$\epsilon \leq C_1 \frac{\lambda_{\min}}{C}$, and, on the other hand,
relating the reconstruction error on the columns of $U$ to the control
on the terms $|\lambda_c - \hat \lambda_{\pi(c)}|$ and
$||\phi_c - \hat \phi_{\pi(c)}||$ coming from
Proposition~\ref{prop:RTP}.  We note that the bound appearing in the
condition on the number of mini-sessions is potentially large (due to
the terms $A^6, C^5$, etc.). This is due to the combination of the RTP
method with the importance sampling scheme, and it remains unclear if
the bound can be significantly improved within this framework.
\section{{Recovering latent mixture distributions ($v_b$):
    robustness of the OFUL algorithm}}\label{sec:featuresV}

In order to recover the weights vectors ${\bf v}_{b}\in\Real^C$ and
thus the matrix $V$, it would be tempting to use again an instance of
the RTP method but this time to aggregate across actions, i.e., by
forming a $B\times B$ and $B\times B\times B$ tensor.  Unfortunately,
aggregation of elements of $U$ fails for two reasons: First, we do not
have different views across users $b$, contrary to what we have for
actions $a$. It is thus hopeless to be able to form an estimate of the
2nd and 3rd moment tensors as before. Second, and rather technically,
convex combinations of the $\{u_{a,c}\}_{a\in[A]}$ need not be
positive. This prevents the application of the RTP method which
requires positive weights to work.

We thus consider a different strategy that uses an algorithm designed
for linear bandits. However since the feature matrix $U$ is unknown a
priori and can only be estimated, we need to work with perturbed
features. A first solution is to propagate the additional error
resulting from the error on the features in the standard proof of
\OFUL.  However, this leads to a sub-optimal regret that is no longer
scaling as $\tilde O(\sqrt{T})$ with the time horizon. We overcome
this hurdle by showing in Theorem~\ref{thm:pOFUL2} a robustness
property of \OFUL\, of independent interest, which aids us in
controlling the regret of the overall latent class algorithm
(Algorithm~\ref{alg:PerOFULwithExploration}).
Consider \OFUL\ run with perturbed (not necessarily linearly
realizable) rewards. Formally, consider a finite action set
$\mathcal{A} = \{1, 2, \ldots, A\}$ and distinct feature vectors
$\{\bar{\bf u}_a \in \mathbb{R}^{C \times 1}\}_{a \in \mathcal{A}}$.
Let
$\bar{U}^\top \bydef [\bar{\bf u}_1 \; \bar{\bf u}_2 \; \ldots \;
\bar{\bf u}_A] \in \mathbb{R}^{C \times A}$.
The expected reward when playing action $A_t = a$ at time $t$ is
denoted by $m_a \bydef \expect{Y_t \given A_t = a}$, with
${\bf m} \bydef \left( m_a \right)_{a \in \mathcal{A}}$. Let us assume
that there exists a unique optimal action for the expected rewards
${\bf m}$, i.e., $\arg \max_{a \in \mathcal{A}} m_a = \{a^\star \}$,
with the regret at time $n$ being
$\kR_n \bydef \sum_{t=1}^n \left( m_{a^\star} - m_{A_t} \right)$.  The
key point here is that ${\bf m}$ need not be linearly realizable
w.r.t. the actions' features -- we will not require that
$\min_{{\bf v} \in \mathcal{R}^C} \norm{{\bf m} - \bar{U} {\bf v}}$ be
$0$.
\begin{algorithm}[!hbtp]
  \caption{\OFUL\ (Optimism in Face of Uncertainty for Linear
    bandits) \citep{AbbPalCsa11:linbandits}} \label{alg:OFUL}
	\begin{algorithmic}
          \REQUIRE{Arms' features $\bar U$, regularization parameter $\lambda$, norm parameter $R_\Theta$}
          \FOR{all times $t \geq 1$}
          \STATE {\bf 1.} Form the $C \times (t-1)$ matrix
          $\bar{\mathbf{U}}_{1:t-1} \bydef [\bar{\bf u}_{A_1} \;
          \bar{\bf u}_{A_2} \ldots \; \bar{\bf u}_{A_{t-1}}]$
          consisting of all arm features played up to time $t-1$, and
          $\mathbf{Y}_{1:t-1} \bydef (Y_1, \ldots, Y_{t-1})^\top$. Set
          $V_{t-1} \bydef \lambda I + \sum_{s=1}^{t-1} \bar{\bf u}_{A_s}
          \bar{\bf u}_{A_s}^\top$.  
          \STATE {\bf 2.} Choose the action          
          
          \vspace{-8mm}
          \begin{align*}
            &A_t \in \arg \max_{a \in \mathcal{A}} \max_{{\bf v} \in \cC_{t-1}}
              \bar{\bf u}_a^\top {\bf v}, \quad \text{where} \\
            &\cC_{t-1} \bydef \{{\bf v} \in \mathbb{R}^C: \norm{{\bf v} -
                        \hat{{\bf v}}_{t-1}}_{V_{t-1}} \leq D_{t-1} \}, \\
            &D_{t-1} \bydef R \sqrt{2 \log \left( \frac{\det(V_{t-1})^{1/2}
                      \lambda^{-C/2}}{\delta} \right)}\!+\! \lambda^{1/2} R_\Theta \\
            &\hat{{\bf v}}_{t-1} \bydef V_{t-1}^{-1}\bar{\mathbf{U}}_{1:t-1}
                                  \mathbf{Y}_{1:t-1.}.
          \end{align*}
          \vspace{-8mm}
          \ENDFOR
	\end{algorithmic} 
\end{algorithm}

{\bf \OFUL\ Regret with linearly realizable rewards.} The \OFUL\
algorithm is stated for the sake of clarity as Algorithm
\ref{alg:OFUL}.  Before studying the linearly non-realizable case, we
record the well-known regret bound for it in the unperturbed
case, that is when
$\forall a\in[A], m_a = \bar{\bf u}_{a}^\top{\bf v}^\star$ for some
unknown ${\bf v}^\star$.

\begin{mytheorem}[{\OFUL} regret \citep{AbbPalCsa11:linbandits}]
  Assume that $||{\bf v}^\star||_2\leq R_\Theta$, and that for all
  $a\in\cA$, $||\bar{\bf u}_a||_2 \leq R_\cX$,
  $|\langle \bar{\bf u}_a,{\bf v}^\star \rangle| \leq 1$.
Then
with probability at least $1-\delta$, the regret of 
\OFUL\ satisfies: $\forall n\geq 0$,

\vspace{-8mm}
\begin{align*}
   \kR_n &\leq 4  \sqrt{nC\log(1+nR_\cX^2/(\lambda C)} \quad \times \\ 
  &\Big(\lambda^{1/2}R_\Theta + R\sqrt{2\log(1/\delta)+C\log(1+nR_\cX^2/(\lambda C)})\,,
\end{align*}

\noindent
provided that the regularization parameter $\lambda$ is chosen such
that $\lambda \geq \max\left\{1,R_\cX^2, 1/R_\Theta^2\right\}$.
\end{mytheorem}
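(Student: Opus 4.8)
The plan is to follow the standard optimism analysis of \OFUL, which decomposes into three ingredients: a high-probability confidence ellipsoid containing ${\bf v}^\star$, a per-step regret bound obtained from the optimistic action rule, and an elliptical-potential (log-determinant) summation.

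First I would establish that, with probability at least $1-\delta$, the true parameter ${\bf v}^\star$ lies in the confidence set $\cC_{t-1}$ simultaneously for all $t\geq 1$. This is the self-normalized tail inequality of \citet{AbbPalCsa11:linbandits}: since each noise term $Y_s - \bar{\bf u}_{A_s}^\top{\bf v}^\star$ is conditionally $R$-sub-Gaussian and $\hat{\bf v}_{t-1} = V_{t-1}^{-1}\bar{\bf U}_{1:t-1}{\bf Y}_{1:t-1}$ is the ridge estimator, a supermartingale / method-of-mixtures argument yields $\norm{\hat{\bf v}_{t-1} - {\bf v}^\star}_{V_{t-1}} \leq D_{t-1}$ uniformly in $t$, where $D_{t-1}$ is precisely the radius appearing in the algorithm (the additive $\lambda^{1/2} R_\Theta$ term absorbing the ridge bias via $\norm{{\bf v}^\star}_2\leq R_\Theta$). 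I would invoke this concentration as a black box.

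Next, on this event, I would bound the instantaneous regret. Let $\tilde{\bf v}_t \in \cC_{t-1}$ be the parameter selected together with $A_t$. By optimism, $\bar{\bf u}_{A_t}^\top \tilde{\bf v}_t = \max_{a\in\cA}\max_{{\bf v}\in\cC_{t-1}}\bar{\bf u}_a^\top{\bf v} \geq \bar{\bf u}_{a^\star}^\top{\bf v}^\star = m_{a^\star}$, using ${\bf v}^\star\in\cC_{t-1}$. Hence $m_{a^\star} - m_{A_t} \leq \bar{\bf u}_{A_t}^\top(\tilde{\bf v}_t - {\bf v}^\star)$, and Cauchy--Schwarz in the $V_{t-1}$-geometry gives $m_{a^\star} - m_{A_t} \leq \norm{\bar{\bf u}_{A_t}}_{V_{t-1}^{-1}}\,\norm{\tilde{\bf v}_t - {\bf v}^\star}_{V_{t-1}} \leq 2 D_{n-1}\,\norm{\bar{\bf u}_{A_t}}_{V_{t-1}^{-1}}$, the last step using the triangle inequality (both $\tilde{\bf v}_t$ and ${\bf v}^\star$ lie in $\cC_{t-1}$) and monotonicity of $t\mapsto D_{t-1}$.

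Finally I would sum over $t=1,\ldots,n$. Capping each term via $|m_{a^\star}-m_{A_t}|\leq 2$ (which follows from $|\langle\bar{\bf u}_a,{\bf v}^\star\rangle|\leq 1$) and applying Cauchy--Schwarz over the sum yields $\kR_n \leq 2 D_{n-1}\sqrt{n}\,\big(\sum_{t=1}^n \min(1,\norm{\bar{\bf u}_{A_t}}^2_{V_{t-1}^{-1}})\big)^{1/2}$. The elliptical potential lemma bounds $\sum_{t=1}^n \min(1,\norm{\bar{\bf u}_{A_t}}^2_{V_{t-1}^{-1}}) \leq 2\log(\det V_n/\det(\lambda I))$, and the determinant--trace (AM--GM) inequality together with $\norm{\bar{\bf u}_a}_2\leq R_\cX$ gives $\det V_n \leq (\lambda + nR_\cX^2/C)^C$, whence $\log(\det V_n/\det(\lambda I)) \leq C\log(1 + nR_\cX^2/(\lambda C))$. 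Substituting this and the explicit form of $D_{n-1}$ (bounding the log-det inside it the same way) reproduces the stated bound; the condition $\lambda\geq\max\{1,R_\cX^2,1/R_\Theta^2\}$ is exactly what makes the truncation and the normalization of $D_{t-1}$ go through cleanly. The main obstacle is the first step --- the self-normalized concentration guaranteeing ${\bf v}^\star\in\cC_{t-1}$ for all $t$ simultaneously; everything after it is a deterministic chain of Cauchy--Schwarz estimates and the log-determinant potential argument.
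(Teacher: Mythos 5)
Your proposal is correct and follows exactly the standard optimism-plus-self-normalized-concentration-plus-elliptical-potential argument of \citet{AbbPalCsa11:linbandits}, which is the proof the paper relies on (it restates this theorem without reproving it, and its own robustness proof of Theorem~\ref{thm:pOFUL2} reuses the very same skeleton: confidence set, the bound $r_t \leq 2D_{t-1}\norm{\bar{\bf u}_{A_t}}_{V_{t-1}^{-1}}$, capping at $2$, Cauchy--Schwarz over the sum, and the log-determinant lemma). No gaps; the only step taken as a black box, the uniform-in-$t$ self-normalized tail bound, is exactly what the cited reference supplies.
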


{\bf  Regret of \OFUL\ with Perturbed Features.} 
We make a structural definition to present the result. Let
$\alpha(\bar{U}) \bydef \max_{J} \norm{\mathbf{A}_J^{-1}}_2$, where
$\mathbf{A} = \left[ \begin{array}{c}
                       \bar{U} \\
                       I_{C} \end{array} \right] \in \mathbb{R}^{(A+C)
  \times C}$,
  $\mathbf{A}_J$ is the $C \times C$ submatrix of $\mathbf{A}$ formed
  by picking rows $J$, and $J$ ranges over all size-$C$ subsets of
  full-rank rows of $\mathbf{A}$.  We will require for our purposes
  that $\alpha(\bar{U}^\mathsf{T})$ is not too large. For intuition
  regarding $\alpha$, we refer to \citet{forsgren1996linear} (the
  final 3 paragraphs of p. 770, Corollary 5.4 and section 7). We
  remark that the condition that $\alpha(\bar{U}^\mathsf{T})$ be small
  is analogous to a $\gamma$-incoherence type property commonly used
  in prior work \citep[Assumption A2]{BreCheSha14:latentCF}, stating
  that two distinct feature vectors ${\bf u}_c$ and ${\bf u}_{c'}$,
  $c \neq c'$, must have a minimum angle separation.

  Let ${\bf v}^\circ \in \mathbb{R}^C$ be arbitrary with $\ell^2$ norm
  at most $R_\Theta$ (it helps to think of $\bar{U} {\bf v}^\circ$ as
  an approximation of ${\bf m}$),
  $\epsilon_a \bydef m_a - \bar{\bf u}_a^\top {\bf v}^\circ$,
  $\epsilon \bydef (\epsilon_a)_{a \in \mathcal{A}} \in \mathbb{R}^A$.
  We now state a robustness result for \OFUL\ potentially of
  independent interest.

\begin{mytheorem}[\OFUL\ robustness property]
  \label{thm:pOFUL2}
  Suppose $||{\bf v}^\circ||_2\!\leq\! R_\Theta$, $\lambda\! \geq\! \max\! \left\{\!1,R_\cX^2,1\!/{4 R_{\Theta}^2} \right\}$, $\forall a\!\in\!\cA$,
  $||\bar{\bf u}_a||_2 \leq R_\cX$ and $|m_a| \leq 1$.
  If the deviation from linearity satisfies
  
  \vspace{-3mm}
   \beq \norm{\epsilon}_2
  \equiv \norm{{\bf m} - \bar{U} {\bf v}^\circ}_2 < \min_{a \neq
    a^\star} \; \frac{\bar{\bf u}_{a^\star}^\top {\bf v}^\circ -
    \bar{\bf u}_{a}^\top {\bf v}^\circ}{2\alpha(\bar{U}^\top)
    \norm{\bar{\bf u}_{a^\star} - \bar{\bf
        u}_{a}}_2}, \label{eqn:ass2} \eeq 
        
\noindent
        then, with probability at
  least $1 - \delta$ for all $T \geq 0$,
  
  \vspace{-2mm}
  \begin{align*}
    \kR_T &\leq 8 \rho' \sqrt{TC \log\left(1 +
      \frac{TR_\cX^2}{\lambda C}\right)} 
    \left(\lambda^{1/2}R_\Theta +
    R\sqrt{2 \log\frac{1}{\delta} + C \log
      \left(1+\frac{TR_\cX^2}{\lambda C}\right)} \right), 
  \end{align*}
  
\noindent
 where
    $\rho' \bydef \max\left\{1,  \max_{a \neq a^\star} \; \frac{m_{a^\star} -
      m_a}{\bar{\bf u}_{a^\star}^\top {\bf v}^\circ - \bar{\bf u}_a^\top {\bf v}^\circ } \right\}$.
 
\end{mytheorem}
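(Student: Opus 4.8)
The plan is to dominate the true regret by the \emph{linear} regret against $a^\star$ and then fall back on the standard \OFUL\ analysis. Because $\rho'$ satisfies $m_{a^\star}-m_a \le \rho'\,(\bar{\bf u}_{a^\star}^\top{\bf v}^\circ-\bar{\bf u}_a^\top{\bf v}^\circ)$ for every $a$ (note that (\ref{eqn:ass2}) can only hold when every linear gap $\bar{\bf u}_{a^\star}^\top{\bf v}^\circ-\bar{\bf u}_a^\top{\bf v}^\circ$ is strictly positive, so $a^\star$ is the strict linear optimum and $\rho'$ is well defined), we get at once $\kR_T \le \rho'\sum_{t=1}^T(\bar{\bf u}_{a^\star}^\top{\bf v}^\circ-\bar{\bf u}_{A_t}^\top{\bf v}^\circ)$. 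It then suffices to control this linear regret by the unperturbed \OFUL\ bound. The starting point is the exact decomposition $\hat{\bf v}_{t-1}-{\bf v}^\circ = -\lambda V_{t-1}^{-1}{\bf v}^\circ + V_{t-1}^{-1}\sum_{s=1}^{t-1}\bar{\bf u}_{A_s}\eta_s + V_{t-1}^{-1}\sum_{s=1}^{t-1}\bar{\bf u}_{A_s}\epsilon_{A_s}$, where $\eta_s = Y_s - m_{A_s}$ is the centered sub-Gaussian noise and the final summand is the bias created by non-realizability.

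The crux — and the reason that naively propagating $\epsilon$ through the \OFUL\ proof fails — is that this bias must be bounded \emph{uniformly in the play counts}: a crude estimate only gives $\sqrt{\sum_s\epsilon_{A_s}^2}$, which grows linearly in $t$ and destroys the $\sqrt{T}$ rate. Instead I would rewrite the bias as a weighted ridge pseudoinverse applied to the \emph{fixed} vector $\epsilon\in\Real^A$: with $n_a$ the number of pulls of arm $a$, it equals $(\lambda I+\bar U^\top\mathrm{diag}(n_a)\bar U)^{-1}\bar U^\top\mathrm{diag}(n_a)\,\epsilon$. Forsgren's uniform bound on weighted pseudoinverses (via the augmented matrix that appends $I_C$ to encode the ridge penalty) then yields $\norm{V_{t-1}^{-1}\sum_s\bar{\bf u}_{A_s}\epsilon_{A_s}}_2 \le \alpha(\bar U^\top)\norm{\epsilon}_2$, independently of how often each arm has been played. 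Denoting this bias vector ${\bf b}_{t-1}$, the two remaining terms are exactly what the self-normalized confidence bound of \citet{AbbPalCsa11:linbandits} controls, so on an event of probability at least $1-\delta$, simultaneously for all $t$, the shifted target ${\bf v}^\circ+{\bf b}_{t-1}$ lies in the ellipsoid $\cC_{t-1}$.

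With ${\bf v}^\circ+{\bf b}_{t-1}\in\cC_{t-1}$ I would run the usual optimism argument about this shifted target. Optimism gives $\bar{\bf u}_{a^\star}^\top({\bf v}^\circ+{\bf b}_{t-1}) \le \max_{{\bf v}\in\cC_{t-1}}\bar{\bf u}_{A_t}^\top{\bf v}$, while the ellipsoid-diameter bound gives $\max_{{\bf v}\in\cC_{t-1}}\bar{\bf u}_{A_t}^\top{\bf v} \le \bar{\bf u}_{A_t}^\top({\bf v}^\circ+{\bf b}_{t-1}) + 2D_{t-1}\norm{\bar{\bf u}_{A_t}}_{V_{t-1}^{-1}}$. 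Subtracting, the bias terms collect into $(\bar{\bf u}_{A_t}-\bar{\bf u}_{a^\star})^\top{\bf b}_{t-1}$, which by Cauchy--Schwarz and the uniform bias bound is at most $\norm{\bar{\bf u}_{a^\star}-\bar{\bf u}_{A_t}}_2\,\alpha(\bar U^\top)\norm{\epsilon}_2$; by hypothesis (\ref{eqn:ass2}) this is strictly below half the linear gap $\bar{\bf u}_{a^\star}^\top{\bf v}^\circ-\bar{\bf u}_{A_t}^\top{\bf v}^\circ$. Absorbing it leaves the clean per-step inequality $\bar{\bf u}_{a^\star}^\top{\bf v}^\circ-\bar{\bf u}_{A_t}^\top{\bf v}^\circ \le 4D_{t-1}\norm{\bar{\bf u}_{A_t}}_{V_{t-1}^{-1}}$, identical to the unperturbed \OFUL\ bound except for the factor $2$ swallowed by the bias. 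Summing over $t$ via the elliptical-potential (determinant) lemma reproduces the standard \OFUL\ guarantee, whose leading constant $4$ is thereby doubled; multiplying by $\rho'$ gives the stated bound with constant $8\rho'$.

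The main obstacle is the middle step: establishing the \emph{uniform}, play-count-independent control $\norm{{\bf b}_{t-1}}_2\le\alpha(\bar U^\top)\norm{\epsilon}_2$ on the non-realizability bias. This is exactly what the structural constant $\alpha$ is engineered to supply, and it is what separates this robustness result from the failed approach of folding $\epsilon$ directly into the confidence radius.
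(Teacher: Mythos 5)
Your proposal is correct and follows essentially the same route as the paper's proof: the same decomposition of $\hat{\bf v}_{t-1}$, the same play-count-independent bound on the non-realizability bias via Forsgren's weighted-pseudoinverse result applied to the augmented matrix (the paper's Lemma~\ref{lem:parameterbias}), the same absorption of the bias into half the linear gap under condition~\eqref{eqn:ass2} (the paper's Lemma~\ref{lem:criticalepsilon}), and the same $\rho'$ conversion from linear to true gaps. The only cosmetic difference is that the paper routes the argument through an ``approximate regret'' $r_t^+$ with respect to the shifted target (showing its optimal action is still $a^\star$) before clipping and applying the elliptical-potential lemma, whereas you absorb the bias directly at the per-step level; both yield the same $8\rho'$ constant.
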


Theorem~\ref{thm:pOFUL2} essentially states that when the deviation of
the actual mean reward vector from the subspace spanned by the feature
vectors is small, the \OFUL\ algorithm continues to enjoy a favorable
$O(\sqrt{T})$ regret up to a factor $\rho' \geq 1$. The quantity
$\rho'$ in the result is a geometric measure of the distortion in the
arms' actual rewards ${\bf m}$ with respect to the (linear)
approximation $\bar{U} {\bf v}^\circ$. We control this quantity in the
next paragraph. (Note that $\rho' = 1$ in the perfectly linearly
realizable case $\epsilon = 0$, and this gives back the standard
\OFUL\ regret up to a universal multiplicative constant.)


{\bf Applying the Robust analysis of \OFUL\ to the Low-rank Bandit setup.}
In this paragraph, we translate Theorem \ref{thm:pOFUL2} to our Low Rank
Bandit (LRB) setting in which OFUL uses feature vectors with noisy
perturbations (estimated by, say, a Robust Tensor Power (RTP)
algorithm). Throughout this section, we fix a user
$b$.

\begin{table}[htbp]
  \begin{center}
    \begin{tabular}{c|c}
      \OFUL & LRB \\
      \hline
      ${\bf v}^\circ$ & ${\bf v}_b \equiv \left(v_{b,c} \right)_{c \in [C]} \in \mathbb{R}^C$ \\
      $\bar{U}$ & $\bar{U}_n \in \mathbb{R}^{A \times C}$ \\ 
      ${\bf m}$ & ${\bf m}_b\equiv U {\bf v}_b \in \mathbb{R}^A$ \\
          $a^\star$ & $a_b^\star \bydef \arg \max_{a \in [A]} {\bf u}_a^T {\bf v}_b$ \\
          $\epsilon_a$ & $({\bf u}_a - \bar{{\bf u}}_{n,a})^\top {\bf v}_b$ \\
          $\epsilon \equiv (\epsilon_a)_{a \in \mathcal{A}}$ & $(U - \bar{U}_{n}) {\bf v}_b$   
        \end{tabular}
  \end{center}
  \caption{Correspondences between \OFUL\ and Low Rank Bandit (LRB) quantities at time $n$ and for user $b$}
  \label{tab:pOFUL-LRB}
\end{table}

We can now translate Theorem \ref{thm:pOFUL2}  thanks to the correspondence with the perturbed OFUL setting:  In our low-rank bandit setting, the matrix $\bar{U}=\bar{U}_n$ depends on the reconstruction algorithm 
at mini-session $n$. Moreover, the optimal action  $a^\star \equiv a^\star_b$ now depends on the user $b$. We denote for a user $b\in [B]$ the minimum gap across suboptimal actions to be $g_b\eqdef\min_{a\neq a^\star_b} ({\bf u}_{a_b^\star}-{\bf u}_a)^\top {\bf v}_b$.
Likewise, the error vector $\epsilon$ depends on $b,n$.
Its norm $||\epsilon||_2$ appears in the condition \eqref{eqn:ass2} 
and the definition of $\rho$, and is controlled by the reconstruction error of Theorem~\ref{thm:Uestimates}. It decays with the number of mini-sessions $n$.

We define $\alpha_n \eqdef \alpha(\bar{U}_n)$, $\alpha_\star \eqdef \alpha(U)$
and use $\max_{b}||{\bf v}_b||$ for $R_\Theta$,
%
Using these notations, and adapting the proof of Theorem~\ref{thm:pOFUL2} to handle a variable $\ol U_n$, we can now translate 
the result of the perturbed \OFUL\ to our LRB setting:

\begin{mylemma}\label{lem:pOFULSetting} 
  Let $0 < \delta \leq 1$ and $b \in [B]$. Provided that the number
  of mini-sessions $n_0$ satisfies
  $\frac{n_0^2}{\sum_{i=1}^{n_0}\gamma_i^{-2}} \geq
  \varhexagon_{b,\delta}$, where we introduced the notation
  
  \vspace{-9mm}
\begin{align*}
\varhexagon_{b,\delta} =   &\max\Bigg\{ \frac{2A^6\log(4A^2/\delta)}{\min\{\Gamma,\sigma_{\min}\}^2}, \\
%
                     &\frac{A^9(1 + 10(\frac{1}{\Gamma} + \frac{1}{\sigma_{\min}})(1+u_{\max}^3))^2C^{5}\log(4A^3/\delta)}{2C^2_1\sigma_{min}^{3}} \quad \times \\
                     &
                        \Diamond^2 A^6C^2\log(4A^3/\delta) \quad \times \\
  &\max\bigg\{
                        2\alpha_\star^2,
                        \frac{8A||{\bf v}_b||_2^2}{g_b^2},
                        \frac{2^7\alpha_\star^2 Cu_{\max}^2 ||{\bf v}_b||_2^2}{g_b^2}
                        +\frac{1}{2}
                        \bigg\}
                        \Bigg\},
\end{align*}

\vspace{-3mm}
  then with probability at least $1 - 2\delta$,
  $||\epsilon||_2= ||(U - \bar{U}_{n}) {\bf v}_b||_2$ is small enough that for any $n \geq n_0$,
  condition~\eqref{eqn:ass2} is satisfied.
  Consequently, Theorem~\ref{thm:pOFUL2} applies with 
  
  \vspace{-6mm}
  \begin{align*}
    &R_\Theta=\max_{b}||{\bf v}_b||_2, \quad R_\cX =
  \max_{a\in\cA}||{\bf u}_a||_2 + \frac{\sqrt{A}}{2\alpha_\star},  \quad\text{and} \\
    &\rho'\equiv
  \rho'_{n,b} \leq 2.
  \end{align*}
\end{mylemma}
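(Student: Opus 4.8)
The plan is to instantiate Theorem~\ref{thm:pOFUL2} through the dictionary of Table~\ref{tab:pOFUL-LRB}, taking ${\bf v}^\circ = {\bf v}_b$, $\bar U = \bar U_n$, ${\bf m} = U{\bf v}_b$ and $\epsilon = (U-\bar U_n){\bf v}_b$, so that the whole task reduces to (i) making $\|\epsilon\|_2$ small enough that condition~\eqref{eqn:ass2} holds and (ii) showing the distortion $\rho'$ is at most $2$. First I would observe that the first branch of $\varhexagon_{b,\delta}$ is exactly the first term in the hypothesis of Theorem~\ref{thm:Uestimates}, while the second branch dominates its second term (the extra factor $\Diamond^2 A^6 C^2\log(4A^3/\delta)\cdot\max\{\cdots\}$ is at least $1$); hence $n_0^2/\sum_i\gamma_i^{-2}\ge\varhexagon_{b,\delta}$ triggers Theorem~\ref{thm:Uestimates} and, on an event of probability at least $1-2\delta$, the column bound~\eqref{eqn:Uestimates}. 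Writing $E_n$ for its right-hand side (so $E_n^2\le\Diamond^2A^6\,\frac{C\log(4A^3/\delta)}{2}\cdot\frac{\sum_i\gamma_i^{-2}}{n^2}$), I would convert it into the two quantities actually needed: relabelling the estimated classes so $\pi=\mathrm{id}$ and using $\|U-\bar U_n\|_F^2=\sum_c\|{\bf u}_c-\bar{\bf u}_{n,c}\|_2^2\le CE_n^2$, Cauchy--Schwarz gives $\|\epsilon\|_2=\|\sum_c v_{b,c}({\bf u}_c-\bar{\bf u}_{n,c})\|_2\le\sqrt C\,\|{\bf v}_b\|_2\,E_n$, while the dual identity $\|U-\bar U_n\|_F^2=\sum_a\|{\bf u}_a-\bar{\bf u}_{n,a}\|_2^2$ bounds each row (action) error by $\|{\bf u}_a-\bar{\bf u}_{n,a}\|_2\le\sqrt C\,E_n$.

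Next I would verify condition~\eqref{eqn:ass2}. The key algebraic step is $\bar{\bf u}_{n,a}^\top{\bf v}_b=m_a-\epsilon_a$, so for the true optimal arm $a^\star=a_b^\star$ the estimated gap equals $(m_{a^\star}-m_a)-(\epsilon_{a^\star}-\epsilon_a)\ge g_b-2\|\epsilon\|_2$, whence once $\|\epsilon\|_2\le g_b/4$ the numerator of the right-hand side of~\eqref{eqn:ass2} is at least $g_b/2$. For the denominator I would use $\|\bar{\bf u}_{n,a^\star}-\bar{\bf u}_{n,a}\|_2\le 2R_\cX$ together with the stability bound $\alpha(\bar U_n^\top)\le 2\alpha_\star$; the three entries of the inner $\max\{\cdot\}$ in $\varhexagon_{b,\delta}$ then do separate jobs. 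All three requirements follow from the single inequality on $\frac{\sum_i\gamma_i^{-2}}{n^2}$ implied by the second branch: the term $2\alpha_\star^2$ forces $CE_n^2\le\frac{1}{4\alpha_\star^2}$, i.e. the row error $\sqrt C E_n\le\frac{1}{2\alpha_\star}$ (needed both for stability and to give $\|\bar{\bf u}_{n,a}\|_2\le\max_a\|{\bf u}_a\|_2+\frac{\sqrt A}{2\alpha_\star}=R_\cX$, since $\frac{1}{2\alpha_\star}\le\frac{\sqrt A}{2\alpha_\star}$); the term $\frac{8A\|{\bf v}_b\|_2^2}{g_b^2}$ forces $\|\epsilon\|_2\lesssim g_b/\sqrt A$ (handling the $\sqrt A$ part of $2R_\cX$); and $\frac{2^7\alpha_\star^2 Cu_{\max}^2\|{\bf v}_b\|_2^2}{g_b^2}$ forces $\|\epsilon\|_2\lesssim g_b/(\alpha_\star u_{\max}\sqrt C)$ (handling the $2\alpha_\star\|{\bf u}_a\|_2$ part, using $\|{\bf u}_a\|_2\le\sqrt C\,u_{\max}$). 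Combining the numerator lower bound with the denominator upper bound yields~\eqref{eqn:ass2}.

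With $\|\epsilon\|_2\le g_b/4$ already secured, the distortion bound is immediate: since $\rho'_{n,b}=\max\{1,\max_{a\ne a^\star}\frac{m_{a^\star}-m_a}{(m_{a^\star}-m_a)-(\epsilon_{a^\star}-\epsilon_a)}\}$ and $|\epsilon_{a^\star}-\epsilon_a|\le 2\|\epsilon\|_2\le g_b/2\le\frac12(m_{a^\star}-m_a)$, each denominator is at least half its numerator, giving $\rho'_{n,b}\le 2$. It then remains only to record the parameters $R_\Theta=\max_b\|{\bf v}_b\|_2$ (valid as $\|{\bf v}^\circ\|_2=\|{\bf v}_b\|_2\le R_\Theta$) and $R_\cX$ as above, and to note that $|m_a|=|{\bf u}_a^\top{\bf v}_b|\le u_{\max}$ is bounded as Theorem~\ref{thm:pOFUL2} requires, the mixture weights summing to one.

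The main obstacle is the stability estimate $\alpha(\bar U_n^\top)\le 2\alpha_\star$. Unlike the additive errors above, $\alpha$ is defined through inverses of size-$C$ submatrices of $[\bar U_n;\,I_C]$, so the argument must (a) apply a Neumann-series bound $\|(A_J+\Delta_J)^{-1}\|_2\le\|A_J^{-1}\|_2/(1-\|A_J^{-1}\|_2\|\Delta_J\|_2)$ on each full-rank submatrix, which needs $\|\Delta_J\|_2\le\|U-\bar U_n\|_2\le\sqrt C\,E_n\le\frac{1}{2\alpha_\star}$ (exactly what the $2\alpha_\star^2$ term provides), and (b) rule out that a submatrix near-singular for $U$ becomes full-rank for $\bar U_n$ with an inflated inverse norm -- here the $I_C$ block keeps the relevant submatrices well-conditioned and is what makes the control possible. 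A secondary point is uniformity over $n\ge n_0$: because the recovery error decreases as $n^2/\sum_i\gamma_i^{-2}$ grows, the threshold once met at $n_0$ stays met, so~\eqref{eqn:ass2} is preserved for every $n\ge n_0$ on the same high-probability event (a union bound over $n$, if a strictly simultaneous guarantee is wanted, is absorbed into the logarithmic constants).
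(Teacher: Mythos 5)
Your proposal follows essentially the same route as the paper's proof: invoke the recovery guarantee of Theorem~\ref{thm:Uestimates}, propagate the column error to $\|\epsilon\|_2$, to the per-action feature errors, and to the bound $\alpha(\bar U_n)\le 2\alpha_\star$ via a perturbation bound on inverses of the size-$C$ submatrices of $[\bar U_n;\,I_C]$, then verify condition~\eqref{eqn:ass2} by lower-bounding the perturbed gaps and upper-bounding the denominator, and finally deduce $\rho'\le 2$ and the stated $R_\cX$. The only deviations are cosmetic (a Frobenius-norm row bound and a $2R_\cX$ denominator estimate in place of the paper's $\sqrt{AC}\,\aleph_n$ and triangle-inequality estimates), which shifts some numerical constants in the inner $\max$ by small factors --- a looseness that the paper's own derivation of $\varhexagon_{b,\delta}$ already shares.
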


Thus, provided that the total number of mini-sessions of interaction (not necessarily corresponding to interactions with user $b$) is large enough,
then the \OFUL\ algorithm run during interactions with user $b$ will achieve a controlled regret. However, we want to warn that the $\varhexagon_{b,\delta}$ resulting from the RTP method, especially the second term of the max, 
may be potentially large, although being a constant.

\section{{Putting it together: Online Recommendation
    algorithm}}\label{sec:Recommend}

This section details our main contributions for recommendations in the
context of mini-sessions of interactions with unknown mixtures of
latent profiles: first Algorithm~\ref{alg:PerOFULwithExploration} that
combines RTP with \OFUL, and then a regret analysis in
Theorem~\ref{thm:main}.

The recommendation algorithm we propose
(Algorithm~\ref{alg:PerOFULwithExploration}) uses the RTP method to
estimate the matrix $U$ and then applies \OFUL\ to determine an
optimistic action.  Importantly, it finally outputs a distribution
that mixes the optimistic action with a uniform exploration.  The
mixture coefficient goes to $0$ with the number of rounds, thus
converging to playing \OFUL\ only. It ensures that the importance
sampling weights are bounded away from $0$ in the beginning.

{\bf Main analytical result: Regret bound}
%

\begin{mytheorem}[Regret of Algorithm~\ref{alg:PerOFULwithExploration}]
\label{thm:main}
With Assumption \ref{ass:1} holding, let $\delta\in(0,1)$,
$\varhexagon_\delta = \max_{b\in[B]} \varhexagon_{b,\delta}$ (from
Lemma~\ref{lem:pOFULSetting}), and let $n_0$ be the first mini-session
at which
    $\frac{n_0^2}{\sum_{i=1}^{n_0}\gamma_i^{-2}} \geq \varhexagon_{\delta}$.
%
    The regret of Algorithm~\ref{alg:PerOFULwithExploration} at time
    $T=N\ell$ (acting for $N$ mini-sessions of length $\ell$) using
    internal instances of \OFUL\ parameterized by $\delta > 0$
    satisfies
    
    \vspace{-6mm}
\begin{align*}
\Esp[\kR_T]  \leq   &16
    \sqrt{BT C \log\left(1 +
    \frac{TR_\cX^2}{\lambda C}\right)} 
  \left(\lambda^{1/2}R_\Theta + R\sqrt{2 \log\frac{1}{\delta} + C
    \log
    \left(1+\frac{T R_\cX^2}{\lambda C}\right)} \right) \\
  & +\ell(n_0-1 + \sum_{n=n_0}^N \gamma_n) + 3\delta T\,,
\end{align*}

\noindent
provided that $\lambda \geq \min\{1,R^2_\cX,1/R^2_\Theta\}$, with
$R_\Theta\geq \max_{b}||{\bf v}_b||_2$,
$R_\cX \geq \max_{a\in\cA}||{\bf u}_a||_2 +
\frac{\sqrt{A}}{2\alpha_\star}$.
Consequently, choosing $\delta = 1/T$ and
$\gamma_n = \sqrt{\log(n+1)/n}$, $n \in \Nat$, say, yields the order 
$\Esp[\kR_T] = O\left( C \sqrt{BT} \log T \right).$

\end{mytheorem}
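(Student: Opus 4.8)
The plan is to decompose the cumulative regret $\kR_T$ across mini-sessions into three disjoint contributions — a burn-in phase, the uniform-exploration steps, and the OFUL-exploitation steps — and to control each separately, with the exploitation term supplying the dominant $\tilde O(C\sqrt{BT})$ order through a per-user application of the robust bound of Theorem~\ref{thm:pOFUL2} combined with a Cauchy--Schwarz aggregation over users.

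First I would split off the first $n_0-1$ mini-sessions, during which the estimate $\bar U_n$ produced by EstimateFeatures is not yet accurate enough for Lemma~\ref{lem:pOFULSetting} to guarantee condition~\eqref{eqn:ass2}. Under the reward normalization $|m_a|\le 1$ the instantaneous regret at each of the $\ell$ steps of such a mini-session is bounded by a constant, so this phase contributes at most $\ell(n_0-1)$, matching the first burn-in term. Next I would isolate the steps on which the algorithm plays the uniform-exploration distribution rather than the OFUL-optimistic action; since the mixture coefficient at mini-session $n$ is $\gamma_n$ and an explored mini-session costs at most $\ell$ in regret, the expected exploration cost over $n\ge n_0$ is at most $\ell\sum_{n=n_0}^N \gamma_n$, giving the second burn-in term. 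These exploration steps are exactly what keeps the importance-sampling weights bounded below, so Corollary~\ref{cor:estimates} and hence Theorem~\ref{thm:Uestimates} remain applicable throughout.

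For the exploitation steps I would condition on the global good event $G$ on which (i) the recovery guarantee of Theorem~\ref{thm:Uestimates} holds and (ii) each per-user OFUL confidence set is valid; by Lemma~\ref{lem:pOFULSetting}, on $G$ and for every $n\ge n_0$ the deviation $\|(U-\bar U_n){\bf v}_b\|_2$ is small enough that condition~\eqref{eqn:ass2} holds with $\rho'\le 2$. Writing $T_b$ for the (random) number of interactions assigned to user $b$, so that $\sum_b T_b = T$ deterministically, Theorem~\ref{thm:pOFUL2} applied to the OFUL instance of user $b$ yields per-user regret at most $16\sqrt{T_b C\log(1+T_bR_\cX^2/(\lambda C))}\big(\lambda^{1/2}R_\Theta + R\sqrt{2\log(1/\delta)+C\log(1+T_bR_\cX^2/(\lambda C))}\big)$, the factor $16=8\cdot 2$ absorbing $\rho'\le 2$. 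Upper-bounding each logarithm by its value at $T_b=T$ and summing over users, the only user-dependence left is $\sum_b\sqrt{T_b}$, which Cauchy--Schwarz bounds pathwise by $\sqrt{B\sum_b T_b}=\sqrt{BT}$; this produces exactly the leading term.

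Finally I would account for the complementary event $G^c$: by Theorem~\ref{thm:Uestimates} and Lemma~\ref{lem:pOFULSetting} its probability is at most $3\delta$ ($2\delta$ for recovery failure plus $\delta$ for the OFUL confidence set), and on it I bound the per-user regret trivially by $T_b$, so taking expectations gives $\sum_b 3\delta\,\Esp[T_b] = 3\delta\,\Esp\!\big[\sum_b T_b\big] = 3\delta T$, the last additive term. Collecting the four contributions yields the stated bound, and substituting $\delta=1/T$ and $\gamma_n=\sqrt{\log(n+1)/n}$ (so that $\sum_n\gamma_n = O(\sqrt{N\log N})$ and $3\delta T=3$) leaves the leading term $O(C\sqrt{BT}\log T)$, since $\log(1+TR_\cX^2/(\lambda C))=O(\log T)$ makes the product of the two square-root factors scale as $C\log T$. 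The main obstacle I anticipate is the coupling between the time-varying global estimate $\bar U_n$ and the per-user OFUL instances: Lemma~\ref{lem:pOFULSetting} and Theorem~\ref{thm:pOFUL2} are stated for a fixed feature matrix, so care is needed to argue that once $n\ge n_0$ the perturbation condition~\eqref{eqn:ass2} persists uniformly as $\bar U_n$ is re-estimated, and to ensure that the failure-probability bookkeeping aggregates to $3\delta T$ rather than $3B\delta T$ by charging each user's bad-event regret against its own $\Esp[T_b]$.
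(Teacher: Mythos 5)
Your proposal is correct and follows essentially the same route as the paper: the same three-way decomposition into the $\ell(n_0-1)$ burn-in, the $\ell\sum_{n\ge n_0}\gamma_n$ exploration cost, and the per-user robust-\OFUL{} term aggregated by Cauchy--Schwarz into $\sqrt{BT}$, with the $3\delta T$ failure-event accounting handled identically. The coupling issue you flag at the end is exactly what the paper resolves with its time-varying extension of Theorem~\ref{thm:pOFUL2} (Theorem~\ref{thm:pOFUL2ext} in Appendix~\ref{sec:TimeVar}), so your anticipated obstacle is real but surmountable by the argument you sketch.
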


{\bf Discussion.}  (1) The regret of
Algorithm~\ref{alg:PerOFULwithExploration} scales with $T$ similar to
that of an \OFUL\ algorithm run with perfect knowledge of the feature
matrix $U$: $\tilde O(C\sqrt{BT})$.  This is a non-trivial result as
$U$ is not assumed to be known a priori and is estimated by
Algorithm~\ref{alg:PerOFULwithExploration} using tensor methods.

\begin{algorithm}[!htbp]
	\caption{Per-user \OFUL\ with exploration} \label{alg:PerOFULwithExploration}
	\begin{algorithmic}[1]	
          \REQUIRE Parameters $\lambda$, $R_\Theta$ for \OFUL,
          exploration rate parameters $\gamma_n$, $n \geq 1$.
		\FOR{mini-session $n=1,\dots,N$}
		\STATE Get user $b_n$.
		\STATE Let $p_n \sim \mbox{Bernoulli}(\gamma_n)$
		\IF{$p_n=0$}
                \STATE \COMMENT{Carry out an {\bf ESTIMATE} mini-session}
		\FOR{step $k=1,2,\dots,\ell$}	
		\STATE Output $a_{n,k} \sim \text{Uniform}([A])$.
		\ENDFOR

                \STATE Let $\ol{U}_{n} = \textbf{EstimateFeatures}$
                (Algorithm \ref{alg:MainIntuitive}) with input
                $(b_i, a_{i,l}, X_{a_{i,l},b_i,i,l})_{
                1\!\leq\! i\leq\! n, 1\! \leq\! l \leq\! \ell, p_n\! =\! 0}$
                \\\COMMENT{Update feature estimates using samples from
                  previous {\bf ESTIMATE} mini-sessions}

		\ELSE
                \STATE \COMMENT{Carry out an {\bf \OFUL\ }mini-session}
                \FOR{step $k=1,2,\dots,\ell$} 

                \STATE Run one iteration of \OFUL\ (Algorithm
                \ref{alg:OFUL}) with features $\ol{U}_n$, parameters
                $\lambda$ and $R_\Theta$, and past actions and rewards
                $(a_{i,l}, X_{a_{i,l},b_i,i,l})$,
                $1 \leq i < n, 1 \leq l \leq \ell$, for which
                $p_i = 1$ and $b_i = b_n$

                \COMMENT{An instance of \OFUL\ for each user using
                  current feature estimates, and observed actions and
                  rewards from previous {\bf \OFUL\ }mini-sessions}

                \STATE Output action $a_{n,k}$ returned by \OFUL\
		\ENDFOR		
		\ENDIF
		\ENDFOR		
	\end{algorithmic} 
\end{algorithm}

(2)
One can also compare the result with the regret of ignoring the
mixture (low-rank) structure and simply running an instance of UCB per
user, which would scale as $O(\sqrt{ABT})$. This becomes highly
suboptimal when the number of actions/items $A$ is much larger than
the number of user types $C$, demonstrating the gain from leveraging
the mixed linear structure of the problem. Note also that we do not
need a specific user to interact for a long time but for as few as
$\ell\geq3$ consecutive steps, contrary for instance to the transfer
method \citep{lazaric2013sequential}, where a large number of
consecutive interaction steps with the same user is required.

(3) It
is worthwhile to contrast the result and approach with that in
\citet{DjoKraCev13:GPbandits} -- the authors there incur
an additional regret term due to the error in approximately estimating
the low-rank matrix, which requires additional tuning ending up with a
regret of $O(T^{4/5})$. On the other hand, we avoid this
approximation error by showing and exploiting the robustness property
of OFUL, which guarantees $\sqrt{T}$ regret as soon as the estimated
features $\tilde{U}$ are within a small radius of the actual ones.

The result (and analysis) does come with a caveat that the
model-dependent term $\varhexagon_\delta$, although being independent on
the time horizon $T$, is potentially large. With $\gamma_n$ set as
in Theorem~\ref{thm:main}, it appears as an additive exponential constant term in
the regret\footnote{With additional prior knowledge of $\gamma_n$, the
  dependence of the additive term can be made polynomial in
  $\varhexagon_\delta$: choosing
  $\gamma_n = \min\{1,\sqrt{\varhexagon_\delta/n}\}$, it holds that
  $\ell(n_0-1 + \sum_{n=n_0}^N \gamma_n) \leq 2\sqrt{\varhexagon_\delta
    \ell T}+ \ell\,.$}.
This arises 
from the RTP method, and it is currently
unclear if this term can be significantly reduced with the current
line of analysis.
Numerical evidence, however, indicates that no such large additive
constant enters into the regret (Section \ref{sec:Recommend}). Also,
on the bright side, note that $\varhexagon_\delta$ does not need to be
known by the algorithm.

{\bf Numerical Results.} The performance of the low-rank bandit
strategy (Algorithm \ref{alg:PerOFULwithExploration}) is shown in
Figure \ref{fig:plot}, simulated for $20$ users arriving uniformly at
random, $3$ user classes and $200$ actions. Both the latent class
matrix $U_{200 \times 3}$ the mixture matrix $V_{20 \times 3}$ are
random one-shot instantiations. The proposed algorithm (Algorithm \ref{alg:PerOFULwithExploration}), with two different
exploration rate schedules $\tilde{O}(n^{-1/2})$ and
$\tilde{O}(n^{-1/3})$ ('RTP+OFUL(sqrt)' and 'RTP+OFUL(cuberoot)' in the figure), is compared with (a) basic UCB ('UCB' in the figure) ignoring the
linear structure of the problem (i.e., UCB per-user with $200$
actions), (b) \OFUL\ per-user with complete knowledge of the user
classes and $p_n = 1$ always, i.e., no exploration mini-sessions, and (c) An implementation of the Alternating Least Squares estimator \citep{takacs2012alternating,mary2014bandits} for the matrix $U$ along with \OFUL\ per-user. The
proposed algorithm, with the theoretically suggested exploration
$\tilde{O}(n^{-1/2})$, is observed to exploit the latent structure
considerably better than simple UCB, and is not too far from the
unrealistic \OFUL\ strategy which enjoys the luxury of latent class
information. It is also competitive with performing Alternating Least Squares, which does not come with analytically sound performance guarantees in the bandit learning setting. Also, the large additive constants in the theoretical
bounds for Algorithm \ref{alg:PerOFULwithExploration} do not manifest here.

%

\begin{figure}[ht]
\begin{center}
\centerline{\includegraphics[width=0.92\columnwidth]{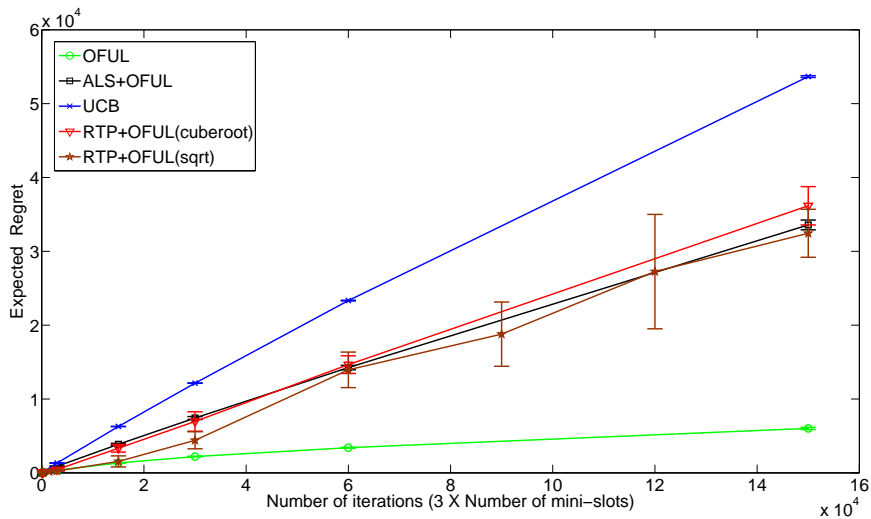}}

\vspace{-3mm}
\caption{Regret of the proposed algorithm (`RTP+OFUL' or Algorithm
  \ref{alg:PerOFULwithExploration}) for two different exploration rate
  schedules, compared with (a) independent UCB per-user, (b)
  \OFUL\ per-user with perfect knowledge of latent classes $U$, and (c) Alternating Least Squares estimation for the matrix $U$, along with \OFUL\ per-user. Here,
  $B = 20$ users, $C = 3$ classes, and $A = 200$, with
  randomly generated $U$ and $V$. Plots show the sample mean
  of cumulative regret with time, with $1$ standard deviation-error
  bars over $10$ sample experiments.}
\label{fig:plot}
\end{center}
\vskip -0.2in
\end{figure} 

{\bf Related work.}
\label{sec:relatedwork}
The popular low-rank matrix completion problem studies the recovery
$U$ and $V$ given a small number of entries sampled at random from
$UV^T$ with both $U$ and $V$ being tall matrices, see for instance
\citet{jain2013low} and citations therein. However, its setting is
different than ours for several reasons. It typically deals with batch
data arising from a sampling process that is not active but uniform
across entries of $UV^T$. Further, it requires sensing operators
having strong properties (such as the RIP property), and most
importantly, the performance metric is not regret but reconstruction
error (Frobenius or $2$-norm). 

In the linear bandit literature
\citep{AbbPalCsa11:linbandits,rusmevichientong2010linearly,Dani2008Lin}, the key
constraining assumption is that either user side ($V$) or item side
($U$) features are precisely and completely known a priori. In
contrast, the problem of low regret recommendation across users with
latent mixtures does not afford us the luxury of knowing either $U$ or
$V$, and so they must be learnt ``on the fly''. Another related work
in the context of bandit type schemes for latent mixture model
recommender systems is that of \citet{BreCheSha14:latentCF}, in which,
under the very specific uniform mixture model for all users, they
exhibit strategies with good regret.

\citet{nguyen2014cold} consider an alternating
minimization type scheme in linear bandit models with two-sided
uncertainty (an alternative model involving latent
``factors''). However no rigorous guarantees are given for the bandit
schemes they present; moreover, it is not known if alternating
minimization finds global minima in general. Another related work is
in the transfer learning setting from \citet{lazaric2013sequential}:
The method combines the RTP method
\citep{Anandkumar,anandkumar2012method} essentially with a
standard UCB \citep{AuerEtAl02FiniteTime}, but however works in the
setting of a large number interactions with a same user, without
assuming access to ``user ids''.  As a result, the regret bound in
this setting scales linearly with the number of rounds. Our result in
this paper shows that with additional access to just user identifiers,
we can reduce the regret rate to be sublinear in time. 

The RTP method has been used as a processing step to the EM algorithm
in crowdsourcing \citep{zhang2014spectral}, but only convergence
properties are considered, which is not enough to provide regret guarantees.

On the theoretical side, our contribution generalizes the setting of
{\em clustered bandits} \citep{maillard2014latent, gentile2014online}
in which a hard clustering model is assumed (one user is assigned to
one class, or equivalently mixture distributions can only have support
size $1$). In particular, \citet{maillard2014latent} specifically
highlight the benefit of a collaborative gain across users against
using a vanilla UCB for each user. However their setting is less
general than assuming a soft clustering of users (one user corresponds
to a mixture of classes) across various ``representative'' taste
profiles as we study here.

The Alternating Least-Squares (ALS) method
\citep{takacs2012alternating,mary2014bandits} has been shown to yield
promising experimental results in similar settings where both $U$ and
$V$ are unknown. However, no theoretical guarantees are known for this
algorithm that may converge to a local optimum in general.

The work of \citet{ValMunKve14:spectral} studies
stochastic bandits with a linear model over a low-rank (graph
Laplacian) structure. However, they assume complete knowledge of the
graph and hence knowledge of the eigenvectors of the Laplacian,
converting it into a bilinear problem with only \textit{one}-sided
uncertainty. This is in contrast to our setup where {\em both} $U$,
$V$ are completely uncertain.

Perhaps the closest work to ours is that of
\citet{DjoKraCev13:GPbandits} where the authors develop a flexible
approach for bandit problems in high dimension but with
low-dimensional reward dependence. They use a two-phase algorithm:
First a low-rank matrix completion technique (the Dantzig selector)
estimates the feature-reward map, then a Gaussian Process-UCB (GP-UCB)
bandit algorithm controls the regret, and show that if after $n$ iterations
the approximation error between the feature matrix and its estimate is
less then $\eta$, the final regret is given by the sum of the regret
of GP-UCB when given perfect knowledge of the features and of
$n + \eta(T-n)$ (due to the learning phase and approximation
error). This results in an overall regret scaling with $O(T^{4/5})$.
We depart from their results in two fundamental ways: Firstly, they
have the possibility of uniformly sampling the entries (a common
assumption in low-rank matrix completion techniques). We do not have
this luxury in our setting as we do not control the process of user
arrivals, that is not constrained to be uniform.  Secondly, we prove
and exploit a novel robustness property (see
Theorem~\ref{thm:UpOFUL2}) of the bandit subroutine we use (\OFUL\ in
our case instead of GP-UCB), which allows us to effectively eliminate
the approximation error in their work and obtain a $O(\sqrt{T})$
regret bound (see Theorem~\ref{thm:main}).

\section{{Conclusion \& Directions}}

We consider a full-blown latent class mixture model in
which users are described by unknown mixtures across unknown user
classes, more general and challenging than when
users are assumed to fall perfectly in one class
\citep{gentile2014online,maillard2014latent}.

We provide the first provable sublinear regret
guarantees in this setting, when both the canonical classes and user
mixture weights are completely unknown, which we believe is striking
when compared to existing work in the setting, e.g.,
alternate minimization typically gets stuck in local minima.  We
currently use a combination of noisy tensor factorization and linear
bandit techniques, and control the uncertainty in the estimates
resulting from each one of these techniques. This enables us to
effectively recover the latent class
structure.

Future directions include reducing the numerical constant (e.g. using
an alternative to RTP), and studying how to combine our work with the
aggregation of user parameters suggested in
\citet{maillard2014latent}.

\bibliography{localbiblio}

\begin{thebibliography}{25}
\providecommand{\natexlab}[1]{#1}
\providecommand{\url}[1]{\texttt{#1}}
\expandafter\ifx\csname urlstyle\endcsname\relax
  \providecommand{\doi}[1]{doi: #1}\else
  \providecommand{\doi}{doi: \begingroup \urlstyle{rm}\Url}\fi

\bibitem[Abbasi-Yadkori et~al.(2011)Abbasi-Yadkori, Pal, and
  Szepesvari]{AbbPalCsa11:linbandits}
Yasin Abbasi-Yadkori, David Pal, and Csaba Szepesvari.
\newblock {Improved Algorithms for Linear Stochastic Bandits}.
\newblock In \emph{Proc. NIPS}, pages 2312--2320, 2011.

\bibitem[Anandkumar et~al.(2012)Anandkumar, Hsu, and
  Kakade]{anandkumar2012method}
Animashree Anandkumar, Daniel Hsu, and Sham~M Kakade.
\newblock A method of moments for mixture models and hidden markov models.
\newblock \emph{arXiv preprint arXiv:1203.0683}, 2012.

\bibitem[Anandkumar et~al.(2014{\natexlab{a}})Anandkumar, Ge, Hsu, and
  Kakade]{AnaGeHsuKak14:tensormixedmodels}
Animashree Anandkumar, Rong Ge, Daniel Hsu, and Sham~M Kakade.
\newblock A tensor approach to learning mixed membership community models.
\newblock \emph{Journal of Machine Learning Research}, 15\penalty0
  (1):\penalty0 2239--2312, 2014{\natexlab{a}}.

\bibitem[Anandkumar et~al.(2014{\natexlab{b}})Anandkumar, Ge, Hsu, Kakade, and
  Telgarsky]{Anandkumar}
Animashree Anandkumar, Rong Ge, Daniel Hsu, Sham~M. Kakade, and Matus
  Telgarsky.
\newblock Tensor decompositions for learning latent variable models.
\newblock \emph{J. Mach. Learn. Res.}, 15\penalty0 (1):\penalty0 2773--2832,
  January 2014{\natexlab{b}}.

\bibitem[Auer et~al.(2002)Auer, Cesa-Bianchi, and
  Fischer]{AuerEtAl02FiniteTime}
P.~Auer, N.~Cesa-Bianchi, and P.~Fischer.
\newblock {Finite-time analysis of the multiarmed bandit problem}.
\newblock \emph{Machine Learning}, 47\penalty0 (2):\penalty0 235--256, 2002.

\bibitem[Bresler et~al.(2014)Bresler, Chen, and Shah]{BreCheSha14:latentCF}
Guy Bresler, George~H Chen, and Devavrat Shah.
\newblock A latent source model for online collaborative filtering.
\newblock In \emph{Proc. NIPS 27}, pages 3347--3355. Curran Associates, Inc.,
  2014.

\bibitem[Dani et~al.(2008)Dani, Hayes, and Kakade]{Dani2008Lin}
Varsha Dani, Thomas~P. Hayes, and Sham~M. Kakade.
\newblock {Stochastic Linear Optimization under Bandit Feedback}.
\newblock In \emph{Proc. COLT}, 2008.

\bibitem[Djolonga et~al.(2013)Djolonga, Krause, and
  Cevher]{DjoKraCev13:GPbandits}
Josip Djolonga, Andreas Krause, and Volkan Cevher.
\newblock High-dimensional {G}aussian process bandits.
\newblock In \emph{Proc. NIPS}, pages 1025--1033, 2013.

\bibitem[Forsgren(1996)]{forsgren1996linear}
Anders Forsgren.
\newblock On linear least-squares problems with diagonally dominant weight
  matrices.
\newblock \emph{SIAM Journal on Matrix Analysis and Applications}, 17\penalty0
  (4):\penalty0 763--788, 1996.

\bibitem[Gentile et~al.(2014)Gentile, Li, and Zappella]{gentile2014online}
Claudio Gentile, Shuai Li, and Giovanni Zappella.
\newblock Online clustering of bandits.
\newblock In \emph{Proc. ICML}, pages 757--765, 2014.

\bibitem[Gheshlaghi~Azar et~al.(2013)Gheshlaghi~Azar, Lazaric, and
  Brunskill]{transfer}
Mohammad Gheshlaghi~Azar, Alessandro Lazaric, and Emma Brunskill.
\newblock Sequential transfer in multi-armed bandit with finite set of models.
\newblock In \emph{Proc. NIPS}, pages 2220--2228. Curran Associates, Inc.,
  2013.

\bibitem[Hofmann and Puzicha(1999)]{HofPuz99:latent}
Thomas Hofmann and Jan Puzicha.
\newblock Latent class models for collaborative filtering.
\newblock In \emph{IJCAI}, volume~99, pages 688--693, 1999.

\bibitem[Jain et~al.(2013)Jain, Netrapalli, and Sanghavi]{jain2013low}
Prateek Jain, Praneeth Netrapalli, and Sujay Sanghavi.
\newblock Low-rank matrix completion using alternating minimization.
\newblock In \emph{Proc. ACM Symposium on Theory Of computing (STOC)}, pages
  665--674. ACM, 2013.

\bibitem[Kleinberg and Sandler(2004)]{KleSan04:mixturemodels}
Jon Kleinberg and Mark Sandler.
\newblock Using mixture models for collaborative filtering.
\newblock In \emph{Proc. ACM Symposium on Theory Of Computing (STOC)}, pages
  569--578. ACM, 2004.

\bibitem[Koc{\'a}k et~al.(2014)Koc{\'a}k, Neu, Valko, and
  Munos]{kocak2014efficient}
Tom{\'a}{\v{s}} Koc{\'a}k, Gergely Neu, Michal Valko, and R{\'e}mi Munos.
\newblock Efficient learning by implicit exploration in bandit problems with
  side observations.
\newblock In \emph{Proc. NIPS}, pages 613--621, 2014.

\bibitem[Lazaric et~al.(2013)Lazaric, Brunskill, et~al.]{lazaric2013sequential}
Alessandro Lazaric, Emma Brunskill, et~al.
\newblock Sequential transfer in multi-armed bandit with finite set of models.
\newblock In \emph{Proc. NIPS}, pages 2220--2228, 2013.

\bibitem[Maillard and Mannor(2014)]{maillard2014latent}
Odalric-Ambrym Maillard and Shie Mannor.
\newblock Latent bandits.
\newblock In \emph{Proc. ICML}, pages 136--144, 2014.

\bibitem[Mary et~al.(2014)Mary, Gaudel, and Philippe]{mary2014bandits}
J{\'e}r{\'e}mie Mary, Romaric Gaudel, and Preux Philippe.
\newblock Bandits warm-up cold recommender systems.
\newblock \emph{arXiv preprint arXiv:1407.2806}, 2014.

\bibitem[Nguyen et~al.(2014)Nguyen, Mary, and Preux]{nguyen2014cold}
Hai~Thanh Nguyen, J{\'e}r{\'e}mie Mary, and Philippe Preux.
\newblock Cold-start problems in recommendation systems via contextual-bandit
  algorithms.
\newblock \emph{arXiv preprint, arXiv:1405.7544}, 2014.

\bibitem[Rusmevichientong and Tsitsiklis(2010)]{rusmevichientong2010linearly}
Paat Rusmevichientong and John~N Tsitsiklis.
\newblock Linearly parameterized bandits.
\newblock \emph{Mathematics of Operations Research}, 35\penalty0 (2):\penalty0
  395--411, 2010.

\bibitem[Stewart et~al.(1990)Stewart, Sun, and Jovanovich]{stewart1990matrix}
Gilbert~W Stewart, Ji-guang Sun, and Harcourt~Brace Jovanovich.
\newblock \emph{Matrix perturbation theory}, volume 175.
\newblock Academic press New York, 1990.

\bibitem[Sutskever et~al.(2009)Sutskever, Tenenbaum, and
  Salakhutdinov]{SutTenSal09:Bayescluster}
Ilya Sutskever, Joshua~B. Tenenbaum, and Ruslan~R Salakhutdinov.
\newblock Modelling relational data using {B}ayesian clustered tensor
  factorization.
\newblock In \emph{Proc. NIPS}, pages 1821--1828. Curran Associates, Inc.,
  2009.

\bibitem[Tak{\'a}cs and Tikk(2012)]{takacs2012alternating}
G{\'a}bor Tak{\'a}cs and Domonkos Tikk.
\newblock Alternating least squares for personalized ranking.
\newblock In \emph{ACM Conference on Recommender systems}, 2012.

\bibitem[Valko et~al.(2014)Valko, Munos, Kveton, and
  Koc{\'a}k]{ValMunKve14:spectral}
Michal Valko, R{\'e}mi Munos, Branislav Kveton, and Tom{\'a}{\v{s}} Koc{\'a}k.
\newblock {Spectral Bandits for Smooth Graph Functions}.
\newblock In \emph{Proc. ICML}, 2014.

\bibitem[Zhang et~al.(2014)Zhang, Chen, Zhou, and Jordan]{zhang2014spectral}
Yuchen Zhang, Xi~Chen, Dengyong Zhou, and Michael~I Jordan.
\newblock Spectral methods meet {EM}: A provably optimal algorithm for
  crowdsourcing.
\newblock In \emph{Proc. NIPS}, pages 1260--1268, 2014.

\end{thebibliography}

\appendix

\section{Proofs of Lemmas~\ref{lem:Mestimates} and \ref{lem:estimates}}

{\bf Proof of Lemma~\ref{lem:Mestimates}}
This result holds by construction of the estimates $\tilde r_{a,a',n}$
and $\tilde r_{a,a',a'',n}$.
Note that

\beqan
\Esp\big[\tilde r_{a,a',n}\big] &=& \frac{1}{n}\sum_{i=1}^n\sum_{b\in[B]}
\Esp\bigg[
\frac{X_{a_{i,1},b_i,i,1}X_{a_{i,2},b_i,i,2}}{p_i(a,a'|b_i)} \indic{a_{i,1}=a,a_{i,2}=a'}\bigg|b_i=b\bigg]\beta(b)\\
&=&
\frac{1}{n}
\sum_{i=1}^n\sum_{b\in[B]}
\sum_{c\in[C]}
\Esp\bigg[
\frac{X_{a_{i,1},b,i,1}X_{a_{i,2},b,i,2}}{p_i(a,a'|b)} \indic{a_{i,1}=a,a_{i,2}=a'}\bigg|b_i=b,c_i=c\bigg]v_{b,c}\beta(b)\\
&=&
\frac{1}{n}
\sum_{i=1}^n\sum_{b\in[B]}
\sum_{c\in[C]}
\Esp\bigg[
X_{a,b,i,1}X_{a',b,i,2}\bigg|b_i=b,c_i=c\bigg]v_{b,c}\beta(b)\\
&\stackrel{(a)}{=}&
\frac{1}{n}
\sum_{i=1}^n\sum_{b\in[B]}
\sum_{c\in[C]}
\Esp\bigg[
X_{a,b,i,1}\bigg|b_i=b,c_i=c\bigg]\Esp \bigg[X_{a',b,i,2}\bigg|b_i=b,c_i=c\bigg]v_{b,c}\beta(b)\\
&=&
\frac{1}{n}
\sum_{i=1}^n\sum_{b\in[B]}
\sum_{c\in[C]}u_{a,c}u_{a',c}v_{b,c}\beta(b)\\
&=&\sum_{c\in[C]}(\sum_{b\in[B]}v_{b,c}\beta(b))u_{a,c}u_{a',c}\\
&=&\sum_{c\in[C]}v_{\beta,c}u_{a,c}u_{a',c}\,,
\eeqan
where $(a)$ holds by independence of the sample generated 
by user $b$ when in the same class $c$. Note that $c_i$ is the same for all $\ell=1,2,3$ interaction steps, that is $c_i=c_{i,1}=c_{i,2}=c_{i,3}$, where $c_{i,\ell}$ is the class corresponding to sample $X_{a,b,i,\ell}$.
This is the reason why we get $u_{a,c}u_{a',c}v_{b,c}$
and not a product $u_{a,c}u_{a',c}v_{b,c}^2$ for instance.

\bigskip
{\bf Proof of Lemma~\ref{lem:estimates}}
Since the rewards generated by each source $a,b$ are i.i.d., the estimate $\tilde r_{a,a',n}$ is a sum of i.i.d. random variables bounded in $[0,1]$, re-weighted by the probability weights $p_i(a,a'|b_i)$, which are measurable functions of the past. Assuming that there exists some
deterministic $q_{2,i}>0$ such that
$\forall  i \in\Nat,  p_i(a,a'|b_i)\geq q_{2,i}$, we can thus apply a version of 
Azuma-Hoeffding inequality for bounded martingale difference sequence.
Let us recall that by this inequality, for a deterministic time $s$, and $(Y_m)_{m\leq s}\in[0,1]$ being a bounded martingale difference sequence,  then for all $\delta\in(0,1)$  it holds
\beqan
\Pr( |\frac{1}{s} \sum_{i=1}^s  Y_i|  \geq \sqrt{\frac{\log(2/\delta)}{2s}}) \leq \delta\,.
\eeqan

In our case, $Y_i = \frac{X_{a_{i,1},b_i,i,1}X_{a_{i,2},b_i,i,2}}{p_i(a,a'|b_i)} \indic{a_{i,1}=a,a_{i,2}=a'}
- m_{a,a'}$, and we deduce that
\beqan
\Pr( |\tilde r_{a,a',n}-m_{a,a'}|  \geq \sqrt{\sum_{i=1}^nq_{2,i}^{-2}\frac{\log(2/\delta)}{2n^2}}) \leq \delta\,.
\eeqan

Likewise, we get that
\beqan
\Pr( |\tilde r_{a,a',a'',n}-m_{a,a',a''}|  \geq \sqrt{\sum_{i=1}^nq_{3,i}^{-2}\frac{\log(2/\delta)}{2n^2}}) \leq \delta\,.
\eeqan

Taking a union bound over the actions in each case, and then over the two events concludes the proof.
$\square$

\bigskip
{\bf Proof of Corollary~\ref{cor:estimates}}
From Lemma~\ref{lem:estimates},  we deduce that on an event of probability higher than $1-\delta$, it holds simultaneously that
\beqan
e_n^{(2)} &\eqdef&	\|\hat M_{n,2}  - M_{2}\| \leq A\sqrt{\sum_{m=1}^nq_{2,m}^{-2}\frac{\log(4A^2/\delta)}{2n^2}})
\,\, \text{ and }\,\,\\
e_n^{(3)} &\eqdef&	\|\hat M_{n,3}  - M_{3}\| \leq A^{3/2}\sqrt{\sum_{m=1}^nq_{3,m}^{-2}\frac{\log(4A^3/\delta)}{2n^2}})\,.
\eeqan

This indeed holds by relating the norm of the matrix (tensor) with each of the elements.
We conclude by replacing the values of $q_{2,i}$ and $q_{3,i}$.
\section{Proof of Theorem~\ref{thm:Uestimates}}

We prove in this section a slightly more detailed result, namely, the following:

{\bf Theorem~\ref{thm:Uestimates}.}
\textit{
Assume that $\{\gamma_i\}_{i\geq 1}$ are chosen such that $n^{-2}\sum_{i=1}^n\gamma_i^{-2}\stackrel{n}{\to} 0$.
Let $\lambda_{\min}$
be the minimum robust eigenvalue of the tensor $T=M_3(W,W,W)$.
Let $\delta\in(0,1)$. Provided that 
\beqan
\frac{n^2}{\sum_{i=1}^n\gamma_i^{-2}} \geq
\max\bigg\{ \frac{2A^6\log(4A^2/\delta)}{\min\{\Gamma,\sigma_{\min}\}^2},
\frac{A^9(1 + 10(\frac{1}{\Gamma} + \frac{1}{\sigma_{\min}})(1+u_{\max}^3))^2C^{5}\log(4A^3/\delta)}{2C^2_1\lambda_{min}^2\sigma_{min}^{3}}
\bigg\}\,,
\eeqan
with probability higher than $1-2\delta$,
there exists some permutation $\pi \in \mathbb{S}_C$ such that for all $c\in[C]$, 
\beqan	
	||u_c - \ol u_{n,\pi(c)}|| 
	\leq   \Delta 
	A^{3}\sqrt{\sum_{i=1}^n\gamma_i^{-2}\frac{C\log(4A^3/\delta)}{2n^2}} + o(n^{-2}\sum_{i=1}^n\gamma_i^{-2}), 
\eeqan
where we introduced the problem-dependent constant
\beqan
\Delta &=& 	
		13\sqrt{\sigma_{max}}\Big(\frac{CA}{\sigma_{min}}\Big)^{3/2}
		\Big(1+10(\frac{1}{\Gamma} + \frac{1}{\sigma_{\min}})(1+u_{\max}^3)\Big)
		+\big(\frac{2\sigma_{max}}{\Gamma}+ \frac{1}{\sigma_{max}}\big)	\frac{1}{v_{\min}^2}\,.
\eeqan
For general $\{\gamma_i\}_{i\geq 1}$ (not necessarily such that $n^{-2}\sum_{i=1}^n\gamma_i^{-2}\stackrel{n}{\to} 0$), it holds with same probability that
\beqan
	||{\bf u}_c - \ol {\bf u}_{n,\pi(c)}||  \leq 
\Diamond 
	A^{3}\sqrt{\sum_{i=1}^n\gamma_i^{-2}\frac{C\log(4A^3/\delta)}{2n^2}}\,,
\eeqan
where, using the notation $\aleph = 1+10(\frac{1}{\Gamma} + \frac{1}{\sigma_{\min}})(1+u_{\max}^3)$, we have introduced the constant
\beqan
\lefteqn{
\Diamond = \Big(\frac{CA}{\sigma_{min}}\Big)^{3/2}
	\bigg(	13\sqrt{\sigma_{max}} + 4
	 \sqrt{2\min\{\Gamma,\sigma_{\min}\}}
	 +5\big(\frac{\sigma_{max}}{\Gamma}+ 
	 \frac{1}{2\sigma_{max}}\big)
	  \min\{\Gamma,\sigma_{\min}\}
	  \bigg)
	 \aleph}\\
	  &&
	  \hspace{-3mm}+\,\, \big(\frac{2\sigma_{max}}{\Gamma}+ \frac{1}{\sigma_{max}}\big)	\frac{1}{v_{\min}^2}
	  + 5\sqrt{3/8}\Big(\sqrt{\sigma_{\max}} + 
	  	 \sqrt{ \min\{\Gamma,\sigma_{\min}\}/2}\Big)
	  	 \big(\frac{2CA}{\sigma_{\min}}\big)^3 \aleph^2
	  	 \min\{\Gamma,\sigma_{\min}\}\,.
\eeqan
}
%

\begin{proof}
The proof closely follows that of \citet{transfer}.
First, note that by property of the rank $1$ decomposition (\citep[Theorem~4.3]{Anandkumar}), it holds that 
 $\lambda_c = (\sum_{b\in[B]} v_{b,c}\beta(b))^{-2}$ and thus $v_{\min}^{-2} \geq \lambda_{\max} \geq \lambda_{\min} \geq 1$.

We first decompose the following term to make appear
the terms from Proposition~\ref{prop:RTP}:
\beqa	
\lefteqn{||{\bf u}_c - \ol{{\bf u}}_{n,\pi(c)}||  \leq} \label{eqn:uDiffdec}\\
&&\underbrace{|\lambda_c -\hat \lambda_{n,\pi(c)}|}_{RTP.1} \underbrace{\|{W^\top}^\dagger\|}_{b} \underbrace{|| \phi_{c}||}_{a}
+  \underbrace{|\hat \lambda_{\pi(c)}|}_{RTP.3}
\underbrace{\|{W^\top}^\dagger - {\hat{W}}^\top{}^\dagger\|}_{d}
\underbrace{|| \phi_{c}||}_{1}
+  \underbrace{|\hat \lambda_{\pi(c)}|}_{RTP.3}
\underbrace{\| {\hat{W}}^\top{}^\dagger\|}_{c}
\underbrace{|| \phi_c - \hat \phi_{n,\pi(c)}||}_{RTP.2}\,.
\nonumber
\eeqa

Note that $\phi_c$, and $\hat \phi_{n,\pi(c)}$ are both normalized vectors. 
Thus, $(a)$ is bounded as $|| \phi_{c}||\leq 1$.
It holds for $(b)$
that $\|{W^\top}^\dagger\| \leq \sqrt{C \sigma_{max}}$, and for $(c)$, on the $1-\delta$ event $\Omega$ from Corollary~\ref{cor:estimates}, that
\beqa
\| {\hat{W}}^\top{}^\dagger\| \leq \sqrt{C \hat \sigma_{max}} \leq \sqrt{C}(\sqrt{\sigma_{max}} + \sqrt{e_n^{(2)}})\,.
\label{eqn:W}
\eeqa
The term $(d)$ requires a little more work. It holds that 
\beqan
\|{W^\top}^\dagger - {\hat{W}}^\top{}^\dagger\| &=& \|\hat U \hat D^{1/2} - U D^{1/2}\|\\
	&\leq&  \|(\hat U - U)D^{1/2}\| + \|\hat U(\hat D^{1/2} - D^{1/2})\|\\
	&\leq& \underbrace{\|\hat U - U\|}_{e}\sigma_{max} +  \underbrace{\|\hat D^{1/2} - D^{1/2}\|}_{f}\sqrt{C}\,.
\eeqan

We use the result of Lemma~5 from \citet{transfer} to control $(e)$ and $(f)$.
If $e_n^{(2)} \leq \frac{1}{2} \Gamma$, then it holds
\beqan
\|\hat D^{1/2} - D^{1/2}\| \leq  \frac{e_n^{(2)} }{\sigma_{\max}} \qquad
\|\hat U - U\| \leq \frac{2\sqrt{C}e_n^{(2)}}{\Gamma}\,,
\eeqan
from which we deduce that 
\beqa
\|{W^\top}^\dagger - {\hat{W}}^\top{}^\dagger\| \leq  (\frac{2\sigma_{\max}}{\Gamma} +\frac{1}{\sigma_{\max}})\sqrt{C}e_n^{(2)}\,.
\label{eqn:Wdiff}
\eeqa

At this point, $(RTP.1), (RTP.2)$ and $(RTP.3)$ are controlled by the perturbation method from \citet{Anandkumar},
under the condition that $e_n = \| T - \hat T \| \leq C_1\frac{\lambda_{min}}{C}$ (where $C_1$ is a universal constant). In this case, with probability $1-\delta$, the RTP algorithm with well-chosen parameters achieves
\beqan
|\lambda_c -\hat \lambda_{n,\pi(c)}| &\leq& 5\| T - \hat T_n \|\\
\| \phi_c - \hat  \phi_{n,\pi(c)}\| &\leq& 8\frac{\| T - \hat T_n \|}{\lambda_c}\,.
\eeqan

In order to make the condition explicit in our setting, we use the fact that by Lemma~6 from \citet{transfer}, if $e_n^{(2)} \leq  \frac{1}{2} \min\{\Gamma,\sigma_{\min}\}$ then 
\beqa
e_n \leq \Big(\frac{C}{\sigma_{\min}}\Big)^{3/2}\bigg(e_n^{(3)} + 2(1+\sqrt{2}+2)e_n^{(2)} (\frac{1}{\Gamma_\sigma} + \frac{1}{\sigma_{min}})(e_n^{(3)} + \max_c||{\bf u}_c||^3)\bigg)\,.
\label{eqn:en}
\eeqa

The condition $e_n^{(2)} \leq  \frac{1}{2} \min\{\Gamma,\sigma_{\min}\}$
 holds if the number of sessions $n$ is sufficiently large:
Indeed on an event  of probability higher than $1-\delta$, then
it is enough that
\beqan
A^3\sqrt{\sum_{i=1}^n\gamma_i^{-2}\frac{\log(4A^2/\delta)}{2n^2}}\leq \frac{1}{2} \min\{\Gamma,\sigma_{\min}\}\,, 
\eeqan
that is, reordering the terms, that
\beqa
\frac{n^2}{\sum_{i=1}^n\gamma_i^{-2}} \geq \frac{2A^6\log(4A^2/\delta)}{\min\{\Gamma,\sigma_{\min}\}^2}\,. 
\label{eqn:nbig1}
\eeqa

Now, in order to satisfy the condition $e_n = \| T - \hat T_n \| \leq C_1\frac{\lambda_{\min}}{C}$, it is enough that
\beqan
\Big(\frac{C}{\sigma_{\min}}\Big)^{3/2}\bigg(e_n^{(3)} + 2(1+\sqrt{2}+2)e_n^{(2)} (\frac{1}{\Gamma} + \frac{1}{\sigma_{\min}})(e_n^{(3)} + \max_c||u_c||^3)\bigg) \leq C_1\frac{\lambda_{\min}}{C}\,.
\eeqan
Let us decompose the left-hand-side term: After some simplifications using $ \max_c||{\bf u}_c||^3 \leq A^{3/2}u_{\max}^3$ and $e_n^{(3)}\leq A^{3/2}$, the previous inequality happens when
\beqan
e_n^{(3)} + A^{3/2}\varhexagon e_n^{(2)} \leq C_1\frac{\lambda_{min}\sigma_{min}^{3/2}}{C^{5/2}}\,.
\eeqan
where $\varhexagon = 2(1+\sqrt{2}+2)(\frac{1}{\Gamma} + \frac{1}{\sigma_{\min}})(1+u_{\max}^3)$.
Using the definition of $e_n^{(3)}$ and $e_n^{(2)}$
then we deduce that it is enough that
\beqan
(1+\varhexagon)A^{9/2}\sqrt{\sum_{i=1}^n\gamma_i^{-2}\frac{\log(4A^3/\delta)}{2n^2}} \leq C_1\frac{\lambda_{min}\sigma_{min}^{3/2}}{C^{5/2}}\,,
\eeqan
that is, reordering the terms that
\beqa
\frac{n^2}{\sum_{i=1}^n\gamma_i^{-2}}\geq \frac{A^9(1 + \varhexagon)^2C^{5}\log(4A^3/\delta)}{2C^2_1\lambda_{min}^2\sigma_{min}^{3}}\,.\label{eqn:nbig2}
\eeqa

Combining the decomposition \eqref{eqn:uDiffdec} with 
\eqref{eqn:W},\eqref{eqn:Wdiff}, 
 and using the fact that $v_{\min}^{-2}\geq \lambda_c\geq 1$, we obtain
\beqan	
||{\bf u}_c - \bar {\bf u}_{n,\pi(c)}||  &\leq & 5 e_n \sqrt{C}\sqrt{\sigma_{\max}}
+  (\lambda_c+5 e_n)\sqrt{C}\big(\frac{2\sigma_{\max}}{\Gamma}+ \frac{1}{\sigma_{\max}}\big)e_n^{(2)}\\
&& + 8\sqrt{C}(\lambda_c+5 e_n)(\sqrt{\sigma_{\max}} + \sqrt{e_n^{(2)}}) \frac{e_n}{\lambda_c}\,.\\
&\leq&  
\sqrt{C}\bigg[ 13\sqrt{\sigma_{\max}}e_n  + \big(\frac{2\sigma_{\max}}{\Gamma}+ \frac{1}{\sigma_{\max}}\big)\frac{e_n^{(2)}}{v_{\min}^2} + 8\sqrt{e_n^{(2)}} e_n\\
&&+ 5\big(\frac{2\sigma_{\max}}{\Gamma}+ \frac{1}{\sigma_{\max}}\big)e_n^{(2)}e_n
  +  40(\sqrt{\sigma_{\max}} + \sqrt{e_n^{(2)}})e_n^2\bigg]\,.
\eeqan
Now, using \eqref{eqn:en} and unfolding the last inequality, it holds with probability higher than $1-2\delta$ that
\beqan
\lefteqn{||{\bf u}_c - \bar{\bf u}_{n,\pi(c)}||}\\
	&\leq& \sqrt{C}\bigg[ 
	13\sqrt{\sigma_{max}}\Big(\frac{C}{\sigma_{min}}\Big)^{3/2}(e_n^{(3)}+ e_n^{(2)}A^{3/2}\varhexagon) +\big(\frac{2\sigma_{max}}{\Gamma_\sigma}+ \frac{1}{\sigma_{max}}\big)\frac{e_n^{(2)}}{v_{\min}^2}\\
	&& +8\Big(\frac{C}{\sigma_{min}}\Big)^{3/2}\sqrt{e_n^{(2)}}(e_n^{(3)}+ e_n^{(2)}A^{3/2}\varhexagon)\\
	&&		 +5\Big(\frac{C}{\sigma_{min}}\Big)^{3/2}\big(\frac{2\sigma_{max}}{\Gamma_\sigma}+ \frac{1}{\sigma_{max}}\big)e_n^{(2)}(e_n^{(3)}+ e_n^{(2)}A^{3/2}\varhexagon)		 +40(\sqrt{\sigma_{max}} + \sqrt{e_n^{(2)}})e_n^2
	\bigg]\\
	&\leq&  
	\bigg[
	13\sqrt{\sigma_{max}}\Big(\frac{CA}{\sigma_{min}}\Big)^{3/2}
	(1+\varhexagon)
	+\big(\frac{2\sigma_{max}}{\Gamma_\sigma}+ \frac{1}{\sigma_{max}}\big)	\frac{1}{v_{\min}^2}\bigg]\\
	&&\times A^{3}\sqrt{\sum_{i=1}^n\gamma_i^{-2}\frac{C\log(4A^3/\delta)}{2n^2}}	+ o(n^{-2}\sum_{i=1}^n\gamma_i^{-2})\,,\\
\eeqan
which, after some cosmetic simplifications, concludes the first part of the proof of Theorem~\ref{thm:Uestimates}.

Alternatively, when 
$n^{-2}\sum_{i=1}^n\gamma_i^{-2}\not\to\infty$, we can always resort to the condition that $e_n^{(2)} \leq 1/2 \min\{\Gamma,\sigma_{\min}\}$ in order to simplify the previous derivation.  We deduce, similarly, that	 
\beqan
\lefteqn{||{\bf u}_c - \bar{\bf u}_{n,\pi(c)}||}\\
	&\leq& \sqrt{C}\bigg[ 
	13\sqrt{\sigma_{max}}\Big(\frac{C}{\sigma_{min}}\Big)^{3/2}(e_n^{(3)}+ e_n^{(2)}A^{3/2}\varhexagon) +\big(\frac{2\sigma_{max}}{\Gamma_\sigma}+ \frac{1}{\sigma_{max}}\big)\frac{e_n^{(2)}}{v_{\min}^2}\\
	&& +8\Big(\frac{C}{\sigma_{min}}\Big)^{3/2}\sqrt{e_n^{(2)}}(e_n^{(3)}+ e_n^{(2)}A^{3/2}\varhexagon)\\
	&&		 +5\Big(\frac{C}{\sigma_{min}}\Big)^{3/2}\big(\frac{2\sigma_{max}}{\Gamma_\sigma}+ \frac{1}{\sigma_{max}}\big)e_n^{(2)}(e_n^{(3)}+ e_n^{(2)}A^{3/2}\varhexagon)		 +40(\sqrt{\sigma_{max}} + \sqrt{e_n^{(2)}})e_n^2
	\bigg]\\
	&\leq&
	\bigg[
	\bigg(
	13\sqrt{\sigma_{max}}\Big(\frac{CA}{\sigma_{min}}\Big)^{3/2}
	 + 8\Big(\frac{CA}{\sigma_{min}}\Big)^{3/2}
	 \sqrt{\min\{\Gamma,\sigma_{\min}\}/2}\\
	 &&
	 +5\Big(\frac{CA}{\sigma_{min}}\Big)^{3/2}\big(\frac{\sigma_{max}}{\Gamma_\sigma}+ 
	 \frac{1}{2\sigma_{max}}\big)
	  \min\{\Gamma,\sigma_{\min}\}
	  \bigg)
	 (1+\varhexagon)
	  + \big(\frac{2\sigma_{max}}{\Gamma_\sigma}+ \frac{1}{\sigma_{max}}\big)	\frac{1}{v_{\min}^2}\\
	  &&
	  + 40\Big(\sqrt{\sigma_{\max}} + 
	  	 \sqrt{ \min\{\Gamma,\sigma_{\min}\}/2}\Big)
	  	 \big(\frac{CA}{\sigma_{\min}}\big)^3 (1+\varhexagon)^2
	  	 \min\{\Gamma,\sigma_{\min}\}\sqrt{3/8}
	  \bigg]\\
	 &&\times
	 A^{3}\sqrt{\sum_{i=1}^n\gamma_i^{-2}\frac{C\log(4A^3/\delta)}{2n^2}}\,,
	 \eeqan
where, in order to control the last term $e_n^2$, we used the property that
\beqan
e_n& \leq& \Big(\frac{CA}{\sigma_{\min}}\Big)^{3/2}\Big(1+\varhexagon \Big)
\min\{e_n^{(2)}\sqrt{\frac{\log(4A^3/\delta)}{\log(4A^2/\delta)}}, A^{-3/2}e_n^{(3)}\}\\
&\leq&  \Big(\frac{CA}{\sigma_{\min}}\Big)^{3/2}\Big(1+\varhexagon \Big)\min\{\sqrt{3/2}e_n^{(2)},A^{-3/2}e_n^{(3)}\}\,.
\eeqan
\end{proof}

\section{Proof of Theorem~\ref{thm:pOFUL2}}

\begin{proof}
  Let $\mathbf{M}_{1:t} = ({\bf m}_{A_1}, \ldots, {\bf m}_{A_t})^\top$. The
  argument used to prove Theorem 2 in Yadkori et al, 2011, can be used
  to show that
  \begin{align*} \hat{\bf v}_{t-1} &= V_{t-1}^{-1}
    \bar{\bf U}_{1:t-1}\eta_{1:t-1} + V_{t-1}^{-1}
    \bar{\bf U}_{1:t-1} \mathbf{M}_{1:t-1} 
  \end{align*}
  where $\eta_{1:t-1} \bydef (\eta_1, \ldots, \eta_{t-1})$ is the
  observed noise sequence. Let $\mathbf{E}_{1:t-1} \bydef
  (\epsilon_{A_1}, \ldots, \epsilon_{A_t})^\top = \mathbf{M}_{1:t-1} -
  \bar{\bf U}_{1:t-1} {\bf v}^\circ$. We then have
  \begin{align*}
    \hat{\bf v}_{t-1} &= V_{t-1}^{-1} \bar{\bf U}_{1:t-1}
    \eta_{1:t-1} + V_{t-1}^{-1}
    \bar{\bf U}_{1:t-1} \mathbf{M}_{1:t-1} \\
    &= V_{t-1}^{-1} \bar{\bf U}_{1:t-1} \eta_{1:t-1} +
    V_{t-1}^{-1} \bar{\bf U}_{1:t-1} \left( \bar{\bf U}_{1:t-1}^\top
      {\bf v}^\circ + \mathbf{E}_{1:t-1}\right) \\
    &= V_{t-1}^{-1} \bar{\bf U}_{1:t-1} \eta_{1:t-1} +
    {\bf v}^\circ - \lambda V_{t-1}^{-1} {\bf v}^\circ + V_{t-1}^{-1}
    \bar{\bf U}_{1:t-1} \mathbf{E}_{1:t-1}.
  \end{align*}

  Thus, letting ${\bf v}_{t-1}^+ \bydef {\bf v}^\circ +
  V_{t-1}^{-1}\bar{U}_{1:t-1} \mathbf{E}_{1:t-1}$ and using the
  above with techniques from Yadkori et al together with
  $\norm{{\bf v}^\circ}_2 \leq R_\Theta$, we have that
  \begin{align*}
    {\bf v}_{t-1}^+ \in \cC_{t-1}
  \end{align*}
  with probability at least $1-\delta$.

  Now, let $a_{t-1}^{+} \in \arg\max_{a \in \mathcal{A}} \bar{\bf u}_a^\top
  {\bf v}_{t-1}^+$ be an optimal action corresponding to the
  approximate parameter ${\bf v}_{t-1}^+$, and define the instantaneous
  regret at time $t$ {\em with respect to the approximate parameter}
  as
  \[ r_t^+ \bydef \bar{\bf u}_{a_{t-1}^{+}}^\top {\bf v}_{t-1}^+ - \bar{\bf u}_{A_t}^\top
  {\bf v}_{t-1}^+ \geq 0.\] We now bound this approximate regret using
  arguments along the lines of Yadkori et al, 2011. Consider
  \begin{align}
    r_t^+ &= \bar{\bf u}_{a_{t-1}^{+}}^\top {\bf v}_{t-1}^+ - \bar{\bf u}_{A_t}^\top
            {\bf v}_{t-1}^+ \nonumber \\
          &\leq \bar{\bf u}_{A_t}^\top \tilde{{\bf v}}_t - \bar{\bf u}_{A_t}^\top {\bf v}_{t-1}^+
            \quad \quad \mbox{(since $(A_t, \tilde{{\bf v}}_t)$ is optimistic)}  \nonumber \\
          &= \bar{\bf u}_{A_t}^\top \left( \tilde{{\bf v}}_t - {\bf v}_{t-1}^+ \right)  \nonumber \\
          &= \bar{\bf u}_{A_t}^\top \left( \tilde{{\bf v}}_t - \hat{{\bf v}}_{t-1}
            \right) + \bar{\bf u}_{A_t}^\top \left( \hat{{\bf v}}_{t-1} - {\bf v}_{t-1}^+
            \right)  \nonumber \\
          &\leq \norm{\bar{\bf u}_{A_t}}_{V_{t-1}^{-1}} \norm{ \tilde{{\bf v}}_t - \hat{{\bf v}}_{t-1}
            }_{V_{t-1}}  + \norm{\bar{\bf u}_{A_t}}_{V_{t-1}^{-1}} \norm{ \hat{{\bf v}}_{t-1} - {\bf v}_{t-1}^+}_{V_{t-1}} \quad \quad
            \mbox{(Cauchy-Schwarz's inequality)}  \nonumber \\
          &\leq 2 {D_{t-1}} \norm{\bar{\bf u}_{A_t}}_{V_{t-1}^{-1}}. \label{eqn:rtplus}
  \end{align}

  Noting that $m_a \in [-1,1]$ $\forall a$, the regret can be
  written as
  \begin{align*}
    R_T &= \sum_{t=1}^T \left( m_{a^\star} - m_{A_t} \right)
          = \sum_{t=1}^T \min\{ m_{a^\star} - m_{A_t}, 2 \} \\
        &= \rho' \sum_{a \neq a^\star} \sum_{t=1}^T \min \left\{ \frac{m_{a^\star} - m_{a}}{\rho'}, \frac{2}{\rho'} \right\} \indic{A_t = a}  \\
        &\leq \rho' \sum_{a \neq a^\star} \sum_{t=1}^T \min \left\{ \bar{\bf u}_{a^\star}^\top
          {\bf v}^\circ - \bar{\bf u}_{a}^\top {\bf v}^\circ , \frac{2}{\rho'} \right\}  \indic{A_t = a} \quad \mbox{(using the definition of $\rho'$)} \\
        &\stackrel{(a)}{\leq} \rho' \sum_{t=1}^T \min \left\{ 2\left( \bar{\bf u}_{a^\star}^\top
          {\bf v}_{t-1}^+ - \bar{\bf u}_{A_t}^\top {\bf v}_{t-1}^+ \right), \frac{2}{\rho'} \right\} \stackrel{(b)}{=} 2 \rho' \sum_{t=1}^T \min \left\{ \bar{\bf u}_{a_{t-1}^+}^\top
          {\bf v}_{t-1}^+ - \bar{\bf u}_{A_t}^\top {\bf v}_{t-1}^+ , \frac{1}{\rho'} \right\} \\
        &= 2 \rho' \sum_{t=1}^T \min \left\{ r_t^+ , \frac{1}{\rho'} \right\} = \rho' \sum_{t=1}^T \frac{2}{\rho'}
          \min \left\{ {\rho' r_t^+} , 1 \right\} \stackrel{(c)}{\leq}  \rho' \sum_{t=1}^T \frac{2}{\rho'}
          \min \left\{ 2 \rho' {D_{t-1}} \norm{\bar{\bf u}_{A_t}}_{V_{t-1}^{-1}} , 1 \right\} \\
        &\stackrel{(d)}{\leq} \rho' \sum_{t=1}^T {4} D_{t-1}
          \min \left\{\norm{\bar{\bf u}_{A_t}}_{V_{t-1}^{-1}} , 1 \right\} \\
        &\leq  \rho' \sqrt{ T \sum_{t=1}^T 16
          {D_{T}}^2  \min \left\{ \norm{\bar{\bf u}_{A_t}}_{V_{t-1}^{-1}}^2 , 1 \right\} } \quad \mbox{(by using Cauchy-Schwarz's inequality)}.
  \end{align*}
  In the derivation above, 
  \begin{itemize}
  \item Steps $(a)$ and $(b)$ hold because of the following. By Lemma
    \ref{lem:parameterbias} (to follow below),
    $\norm{{\bf v}_{t-1}^+ - {\bf v}^\circ}_2 =
    \norm{V_{t-1}^{-1}\bar{\bf U}_{1:t-1} \mathbf{E}_{1:t-1}}_2
    \leq \alpha(\bar U) \norm{\epsilon}_2$.
    Since $\arg \max_{a \in \cA} \bar{\bf u}_a^\top {\bf v}^\circ$ is uniquely
    $a^\star$ by hypothesis, we have, thanks to Lemma
    \ref{lem:criticalepsilon} (to follow below), that
    $\bar{\bf u}_{a^\star}^\top {\bf v}_{t-1}^+ - \bar{\bf u}_a^\top {\bf v}_{t-1}^+ >
    \frac{\bar{\bf u}_{a^\star}^\top {\bf v}^\circ - \bar{\bf u}_a^\top {\bf v}^\circ}{2} >
    0$
    $\forall a \neq a^\star$, establishing $(a)$. This in turn shows that the
    optimal action for ${\bf v}_{t-1}^+$ is uniquely $a^\star$ at all
    times $t$, i.e.,
    $a_{t-1}^{+} = \arg\max_{a \in \mathcal{A}} \bar{\bf u}_a^\top
    {\bf v}_{t-1}^+ = a^\star$, which is equality $(b)$.

  \item Inequality $(c)$  holds by (\ref{eqn:rtplus}) and
   $(d)$ holds because $\rho' \geq 1$ by definition,
    and $D_{t-1} \geq \lambda^{1/2}R_{\Theta} \geq 1/2$ by hypothesis,
    implying that $2 \rho' D_{t-1} \geq 1$.
  \end{itemize}

  The argument from here can be continued in the same way as in
  \citet{AbbPalCsa11:linbandits} to yield
  \begin{align*}
    R_T \leq 8 \rho' \sqrt{TC \log\left(1 +
    \frac{TR_\cX^2}{\lambda C}\right)} \left(\lambda^{1/2}R_\Theta +
    R\sqrt{2 \log\frac{1}{\delta} + C \log
    \left(1+\frac{TR_\cX^2}{\lambda C}\right)} \right).
  \end{align*}

This proves the theorem.

\end{proof}
    
\begin{mylemma}[Analysis of the time-varying parameter error
  $V_{t-1}^{-1}\bar{\bf U}_{1:t-1} \mathbf{E}_{1:t-1}$]
    \label{lem:parameterbias}
    Let $\epsilon_a = m_a - \bar{\bf u}_a^\top {\bf v}^\circ$ be the bias in
    arm $a$'s reward due to model error, and let $\epsilon \equiv
    (\epsilon_a)_{a \in \mathcal{A}}$ be the $|\mathcal{A}|$
    dimensional vector of arm reward biases. Then,
    \[ \norm{V_{t-1}^{-1}\bar{\bf U}_{1:t-1} \mathbf{E}_{1:t-1}}_2
    \leq \left( \max_{J} \norm{\mathbf{A}_J^{-1}}_2 \right)
    \norm{\epsilon}_2, \] where $\mathbf{A}_{(A+C) \times
      C} = \left[ \begin{array}{c}
        \bar{U} \\
        I_{d} \end{array} \right]$, $\mathbf{A}_J$ is the $C \times C$
    submatrix of $\mathbf{A}$ formed by picking rows $J$, and $J$
    ranges over all subsets of full-rank rows of $\mathbf{A}$.
  \end{mylemma}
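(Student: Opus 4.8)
The plan is to recognize $w := V_{t-1}^{-1}\bar{\bf U}_{1:t-1}\mathbf{E}_{1:t-1}$ as a \emph{regularized weighted least-squares} solution, and to bound it uniformly over the (data-dependent) weights by exhibiting $w$ as a convex combination of solutions of square $C\times C$ subsystems; the uniform bound $\alpha(\bar U)$ then emerges as the worst such subsystem. First I would rewrite $w$ in terms of per-arm play counts. Letting $n_a$ be the number of plays of arm $a$ up to time $t-1$ and $N = \mathrm{diag}(n_a)_{a\in\cA}$, one has $V_{t-1} = \lambda I_C + \bar U^\top N\bar U$ and $\bar{\bf U}_{1:t-1}\mathbf{E}_{1:t-1} = \bar U^\top N\epsilon$, so that $w = (\lambda I_C + \bar U^\top N\bar U)^{-1}\bar U^\top N\epsilon$.

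Next I would absorb the ridge term into the augmented matrix $\mathbf{A} = [\bar U; I_C]$ from the statement: with the augmented nonnegative diagonal weights $\tilde D = \mathrm{diag}(n_1,\ldots,n_A,\lambda,\ldots,\lambda)$ and the augmented target $\tilde\epsilon = (\epsilon^\top, 0^\top)^\top \in \Real^{A+C}$, one checks $\mathbf{A}^\top\tilde D\mathbf{A} = V_{t-1}$ and $\mathbf{A}^\top\tilde D\tilde\epsilon = \bar U^\top N\epsilon$, whence $w = (\mathbf{A}^\top\tilde D\mathbf{A})^{-1}\mathbf{A}^\top\tilde D\tilde\epsilon$. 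Since the bottom block $I_C$ guarantees $\mathbf{A}$ has full column rank and carries weight $\lambda > 0$, the Gram matrix is invertible and $\|\tilde\epsilon\|_2 = \|\epsilon\|_2$.

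The heart of the argument is the representation step. Writing $w$ componentwise by Cramer's rule as $w_k = \det(G^{(k)})/\det(G)$ with $G = \mathbf{A}^\top\tilde D\mathbf{A}$ and $G^{(k)}$ obtained by replacing column $k$ of $G$ with $\mathbf{A}^\top\tilde D\tilde\epsilon$, I would apply the Cauchy--Binet formula to both determinants (setting $B = \tilde D^{1/2}\mathbf{A}$, $g = \tilde D^{1/2}\tilde\epsilon$, so that $G = B^\top B$ and $G^{(k)} = B^\top B^{(k\to g)}$). This yields
\[ \det(G) = \sum_J \Big(\prod_{i\in J}\tilde d_i\Big)\det(\mathbf{A}_J)^2, \qquad \det(G^{(k)}) = \sum_J \Big(\prod_{i\in J}\tilde d_i\Big)\det(\mathbf{A}_J)^2\,(w_J)_k, \]
where the sums run over size-$C$ subsets $J$ of rows, $\mathbf{A}_J$ is the corresponding $C\times C$ submatrix, and $w_J := \mathbf{A}_J^{-1}\tilde\epsilon_J$ solves the square subsystem (its $k$-th component being exactly the Cramer ratio $\det(\mathbf{A}_J^{(k\to\tilde\epsilon_J)})/\det(\mathbf{A}_J)$). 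Dividing shows $w = \sum_J \theta_J w_J$ with $\theta_J = (\prod_{i\in J}\tilde d_i)\det(\mathbf{A}_J)^2 / \det(G) \ge 0$ and $\sum_J \theta_J = 1$ --- a genuine convex combination over the full-rank $C$-subsets.

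Finally I would conclude by convexity and the triangle inequality: $\|w\|_2 \le \sum_J\theta_J\|\mathbf{A}_J^{-1}\tilde\epsilon_J\|_2 \le \max_J\|\mathbf{A}_J^{-1}\|_2\,\|\tilde\epsilon\|_2$, using $\|\tilde\epsilon_J\|_2 \le \|\tilde\epsilon\|_2 = \|\epsilon\|_2$ since $\tilde\epsilon_J$ is a subvector; the right-hand side is precisely $\alpha(\bar U)\|\epsilon\|_2$. The main obstacle is the representation step: forcing \emph{the same} weight $(\prod_{i\in J}\tilde d_i)\det(\mathbf{A}_J)^2$ to appear in numerator and denominator --- which is what makes the $\theta_J$ nonnegative and normalized --- requires carefully tracking the $\tilde d_i^{1/2}$ factors through the determinant of the column-replaced submatrix in the second Cauchy--Binet expansion; everything else is bookkeeping. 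This is essentially the classical uniform bound on weighted pseudoinverses (cf.\ \citet{forsgren1996linear}), specialized to the regularized design through the augmentation $\mathbf{A} = [\bar U; I_C]$.
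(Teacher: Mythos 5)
Your proposal is correct, and it reproduces the paper's reduction exactly: rewrite $V_{t-1}^{-1}\bar{\bf U}_{1:t-1}\mathbf{E}_{1:t-1}$ as a weighted ridge-regression solution over the $A$ arms, then absorb the regularizer into the augmented design $\mathbf{A} = [\bar{U};\, I_C]$ with an augmented diagonal weight matrix and augmented target $(\epsilon^\top, 0^\top)^\top$, so that the quantity becomes $(\mathbf{A}^\top \mathbf{D}\mathbf{A})^{-1}\mathbf{A}^\top\mathbf{D}\mathbf{b}$. Where you diverge is at the final step: the paper stops here and invokes \citet[Corollary 2.3]{forsgren1996linear} as a black box to get $\|(\mathbf{A}^\top\mathbf{D}\mathbf{A})^{-1}\mathbf{A}^\top\mathbf{D}\|_2 \le \max_J\|\mathbf{A}_J^{-1}\|_2$ uniformly over diagonal positive semidefinite $\mathbf{D}$, whereas you prove that inequality from scratch via Cramer's rule and Cauchy--Binet, exhibiting the solution as a convex combination $\sum_J \theta_J\,\mathbf{A}_J^{-1}\tilde\epsilon_J$ with $\theta_J \propto \bigl(\prod_{i\in J}\tilde d_i\bigr)\det(\mathbf{A}_J)^2$. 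Your bookkeeping is right: singular subsets $J$ contribute zero to both numerator and denominator, the bottom $I_C$ block with weight $\lambda>0$ keeps the denominator positive, and $\|\tilde\epsilon_J\|_2 \le \|\epsilon\|_2$ since $\tilde\epsilon$ pads $\epsilon$ with zeros. One minor remark: your bound is slightly looser in form than Forsgren's operator-norm statement (you bound the image of the specific vector $\tilde\epsilon$ rather than the operator norm), but that is all the lemma needs. The trade-off is transparency versus brevity: the paper's citation is shorter but leaves the reader to trust an external result whose statement (uniform over the weight matrix, restricted to full-rank row subsets) is exactly the delicate part; your argument makes the mechanism --- why the worst square subsystem governs the weighted pseudoinverse regardless of the empirical play frequencies --- fully explicit and self-contained.
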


  \begin{proof}[Proof of Lemma \ref{lem:parameterbias}]
  Let $z_{t-1} \bydef V_{t-1}^{-1}\bar{\bf U}_{1:t-1}
  \mathbf{E}_{1:t-1} = {\bf v}_{t-1} - {\bf v}^\circ \in \mathbb{R}^C$,
  with $\norm{{\bf E}_{1:t-1}}_\infty \leq \norm{\epsilon}_\infty = \norm{{\bf m} -
    \bar{U} {\bf v}^\circ}_\infty$. We have
  \begin{align*}
    z_{t-1} &= \left( \sum_{s=1}^{t-1} \bar{\bf u}_{A_s} \bar{\bf u}_{A_s}^\top + \lambda
      I \right)^{-1} \sum_{s=1}^{t-1} \epsilon_{A_s} \bar{\bf u}_{A_s} \\
    &= \left( \frac{1}{t-1}\sum_{s=1}^{t-1} \bar{\bf u}_{A_s} \bar{\bf u}_{A_s}^\top +
      \frac{\lambda}{t-1} I \right)^{-1} \frac{1}{t-1}\sum_{s=1}^{t-1}
    \epsilon_{A_s} \bar{\bf u}_{A_s} \\
    &= \left( \sum_{a \in \mathcal{A}} \bar{\bf u}_{a} \bar{\bf u}_{a}^\top
      \frac{\sum_{s=1}^{t-1} \indic{A_s = a}}{t-1} +
      \frac{\lambda}{t-1} I \right)^{-1} \sum_{a \in \mathcal{A}}
    \epsilon_{a} \bar{\bf u}_{a} \frac{\sum_{s=1}^{t-1} \indic{A_s = a}}{t-1} \\
    &= \left( \sum_{a \in \mathcal{A}} \bar{\bf u}_{a} \bar{\bf u}_{a}^\top f_a(t-1) +
      \frac{\lambda}{t-1}I \right)^{-1} \sum_{a \in \mathcal{A}}
    \epsilon_{a} \bar{\bf u}_{a} f_a(t-1),
  \end{align*}
  where $f_a(t-1) \equiv f_a$ represents the empirical frequency with
  which action $a \in \mathcal{A}$ has been played up to and including
  time $t-1$. This allows us to equivalently interpret $z_{t-1}$ as
  the solution of a {\em weighted} $\ell^2$-regularized least squares
  regression problem with $K = |\mathcal{A}|$ observations (instead of
  the original interpretation with $t-1$ observations) as follows.

  Let $\mathbf{F}^{1/2}$ be the $A \times A$ diagonal matrix with the
  values $\sqrt{f_1}, \ldots, \sqrt{f_A}$ on the diagonal (note:
  $\sum_{a=1}^A f_a = 1$). With this, we can express $z_{t-1}$ as
  \begin{align*}
    z_{t-1} &= \arg\min_{z \in \mathbb{R}^C} \norm{\mathbf{F}^{1/2}
      \bar{U} z - \mathbf{F}^{1/2} \epsilon }_2^2 +
    \frac{\lambda}{t-1}\norm{z}_2^2 \\
    &= \arg\min_{z \in \mathbb{R}^C} \norm{\mathbf{F}^{1/2}\left(
        \bar{U} z - \epsilon \right) }_2^2 +
    \frac{\lambda}{t-1}\norm{z}_2^2 \\
    &= \arg\min_{z \in \mathbb{R}^C} \norm{ \left[ \begin{array}{cc}
          \mathbf{F}^{1/2} & 0 \\
          0 & \sqrt{\frac{\lambda}{t-1}} I_{C} \end{array} \right]
      \left( \left[ \begin{array}{c}
            \bar{U} \\
            I_{C} \end{array} \right] z - \left[ \begin{array}{c}
            \epsilon \\
            0 \end{array} \right] \right) }_2^2 \\
    &\equiv \arg\min_{z \in \mathbb{R}^C} \norm{\mathbf{D}^{1/2}
      \left( \mathbf{A} z - \mathbf{b} \right) }_2^2 =
    (\mathbf{A}^\top \mathbf{D} \mathbf{A})^{-1} \mathbf{A}^\top
    \mathbf{D} \mathbf{b},
  \end{align*}
  with $\mathbf{D}^{1/2}$ being a $(A+C) \times (A+C)$ diagonal \&
  positive semidefinite matrix,
  $\mathbf{A}^\top \mathbf{D} \mathbf{A} = \sum_{a \in \mathcal{A}}
  \bar{\bf u}_{a} \bar{\bf u}_{a}^\top f_a(t-1) + \frac{\lambda}{t-1}I$
  positive definite, and $\mathbf{A}$ having full column rank $C$. A
  result of \citet[Corollary 2.3]{forsgren1996linear} can now be
  applied to yield
  \begin{align*}
    \norm{(\mathbf{A}^\top \mathbf{D} \mathbf{A})^{-1} \mathbf{A}^\top
      \mathbf{D}}_2 &\leq \max_{J} \norm{\mathbf{A}_J^{-1}}_2
  \end{align*}
  where $J$ ranges over all subsets of full-rank rows of $\mathbf{A}$,
  and $\mathbf{A}_J$ is the $C \times C$ submatrix of $\mathbf{A}$
  formed by picking rows $J$. Thus, $\norm{z_{t-1}}_2 \leq \left(
    \max_{J} \norm{\mathbf{A}_J^{-1}}_2 \right)
  \norm{\epsilon}_2$. This proves the lemma.
  \end{proof}

\begin{mylemma}[Critical radius]
  \label{lem:criticalepsilon}
  Let $\bar{\bf u}_{a^\star}^\top {\bf v}^\circ > \bar{\bf u}_a^\top {\bf v}^\circ$
  $\forall a \neq a^\star$. Then, the following are equivalent:
\beq \norm{{\bf v} - {\bf v}^\circ}_2 \leq \alpha(\bar{U})
  \norm{\epsilon}_2 \; \Rightarrow \; \bar{\bf u}_{a^\star}^\top {\bf v} -
  \bar{\bf u}_a^\top {\bf v} > \frac{\bar{\bf u}_{a^\star}^\top {\bf v}^\circ -
  \bar{\bf u}_a^\top {\bf v}^\circ}{2} \quad \forall a \neq a^\star, \label{eqn:critical1} \eeq
and
\beq \norm{\epsilon}_2 < \min_{a \neq a^\star} \; \frac{\bar{\bf u}_{a^\star}^\top
  {\bf v}^\circ - \bar{\bf u}_{a}^\top {\bf v}^\circ}{2 \alpha(\bar{U})
  \norm{\bar{\bf u}_{a^\star} - \bar{\bf u}_{a}}_2}. \label{eqn:critical2} \eeq
\end{mylemma}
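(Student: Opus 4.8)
The plan is to establish the two implications separately, with the whole argument resting on a single algebraic decomposition of the perturbed suboptimality gap. Writing ${\bf w} \bydef {\bf v} - {\bf v}^\circ$ for the parameter perturbation and abbreviating the linear gap as $\theta_a \bydef \bar{\bf u}_{a^\star}^\top {\bf v}^\circ - \bar{\bf u}_a^\top {\bf v}^\circ > 0$, I would first record that for every action $a \neq a^\star$,
\[
\bar{\bf u}_{a^\star}^\top {\bf v} - \bar{\bf u}_a^\top {\bf v} = \theta_a + (\bar{\bf u}_{a^\star} - \bar{\bf u}_a)^\top {\bf w}.
\]
This identity reduces both conditions to controlling the inner product $(\bar{\bf u}_{a^\star} - \bar{\bf u}_a)^\top {\bf w}$, whose magnitude is governed by Cauchy--Schwarz through $\norm{\bar{\bf u}_{a^\star} - \bar{\bf u}_a}_2$ and $\norm{{\bf w}}_2$.

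For the direction \eqref{eqn:critical2} $\Rightarrow$ \eqref{eqn:critical1}, I would assume $\norm{{\bf w}}_2 \leq \alpha(\bar{U}) \norm{\epsilon}_2$ and lower-bound $(\bar{\bf u}_{a^\star} - \bar{\bf u}_a)^\top {\bf w} \geq -\norm{\bar{\bf u}_{a^\star} - \bar{\bf u}_a}_2 \, \alpha(\bar{U}) \norm{\epsilon}_2$. Condition \eqref{eqn:critical2} states precisely that $\norm{\bar{\bf u}_{a^\star} - \bar{\bf u}_a}_2 \, \alpha(\bar{U}) \norm{\epsilon}_2 < \theta_a/2$ for every $a \neq a^\star$, so substituting into the identity yields $\bar{\bf u}_{a^\star}^\top {\bf v} - \bar{\bf u}_a^\top {\bf v} > \theta_a - \theta_a/2 = \theta_a/2$, which is \eqref{eqn:critical1}.

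For the converse \eqref{eqn:critical1} $\Rightarrow$ \eqref{eqn:critical2}, I would argue by contraposition: if \eqref{eqn:critical2} fails, let $a_0$ attain the minimum on its right-hand side, so that $\alpha(\bar{U}) \norm{\epsilon}_2 \norm{\bar{\bf u}_{a^\star} - \bar{\bf u}_{a_0}}_2 \geq \theta_{a_0}/2$. I would then exhibit a single perturbation that saturates Cauchy--Schwarz, namely ${\bf w} = -\alpha(\bar{U}) \norm{\epsilon}_2 \, (\bar{\bf u}_{a^\star} - \bar{\bf u}_{a_0}) / \norm{\bar{\bf u}_{a^\star} - \bar{\bf u}_{a_0}}_2$, which has norm exactly $\alpha(\bar{U})\norm{\epsilon}_2$ and makes $(\bar{\bf u}_{a^\star} - \bar{\bf u}_{a_0})^\top {\bf w} = -\alpha(\bar{U})\norm{\epsilon}_2 \norm{\bar{\bf u}_{a^\star} - \bar{\bf u}_{a_0}}_2 \leq -\theta_{a_0}/2$. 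The associated ${\bf v} = {\bf v}^\circ + {\bf w}$ then satisfies the hypothesis of \eqref{eqn:critical1} yet gives $\bar{\bf u}_{a^\star}^\top {\bf v} - \bar{\bf u}_{a_0}^\top {\bf v} \leq \theta_{a_0}/2$, contradicting the strict inequality demanded by \eqref{eqn:critical1}.

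The only genuinely delicate point is this reverse direction: the equivalence is tight precisely because Cauchy--Schwarz can be turned into an equality, so I must choose ${\bf w}$ collinear with $\bar{\bf u}_{a^\star} - \bar{\bf u}_{a_0}$ rather than settle for a generic perturbation. This also uses that the features are distinct, so $\norm{\bar{\bf u}_{a^\star} - \bar{\bf u}_{a_0}}_2 > 0$ and the division is well defined; the degenerate case $\norm{\epsilon}_2 = 0$ is trivial, since both conditions then hold automatically. Reconciling the strict versus non-strict inequalities across the two conditions requires minor bookkeeping but introduces no new ideas.
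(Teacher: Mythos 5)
Your proposal is correct and follows essentially the same route as the paper: the same gap decomposition with Cauchy--Schwarz for the forward direction, and a collinear witness perturbation saturating Cauchy--Schwarz for the converse (the paper scales its witness so the gap drops by exactly $\theta_{a_0}/2$, while you scale yours to have norm exactly $\alpha(\bar{U})\norm{\epsilon}_2$ --- an immaterial difference). The bookkeeping of strict versus non-strict inequalities is handled correctly in both.
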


\begin{proof}[Proof of Lemma \ref{lem:criticalepsilon}]
  Assuming (\ref{eqn:critical2}), observe that when ${\bf v}$ lies in
  the interior of an $\alpha(\bar{U}) \norm{\epsilon}_2$-ball around
  ${\bf v}^\circ$, we have, for any $a \neq a^\star$,
  \begin{align*}
    \left( \bar{\bf u}_{a^\star} - \bar{\bf u}_a \right )^\top {\bf v} &= \left( \bar{\bf u}_{a^\star} - \bar{\bf u}_a \right )^\top {\bf v}^\circ + \left( \bar{\bf u}_{a^\star} - \bar{\bf u}_a \right )^\top \left( {\bf v} - {\bf v}^\circ \right) \\
    &\geq \left( \bar{\bf u}_{a^\star} - \bar{\bf u}_a \right )^\top {\bf v}^\circ + \min_{\norm{\psi}_2 \leq \alpha(\bar{U})
  \norm{\epsilon}_2} \left( \bar{\bf u}_{a^\star} - \bar{\bf u}_a \right )^\top \psi \\
    &= \left( \bar{\bf u}_{a^\star} - \bar{\bf u}_a \right)^\top {\bf v}^\circ - {\alpha(\bar{U}) \norm{\epsilon}_2}{\norm{\bar{\bf u}_{a^\star} - \bar{\bf u}_a}_2} \\
    &> \left( \bar{\bf u}_{a^\star} - \bar{\bf u}_a \right)^\top {\bf v}^\circ - {\alpha(\bar{U})} {\norm{\bar{\bf u}_{a^\star} - \bar{\bf u}_a}_2}  \frac{\bar{\bf u}_{a^\star}^\top
  {\bf v}^\circ - \bar{\bf u}_{a}^\top {\bf v}^\circ}{2\alpha(\bar{U})
  \norm{\bar{\bf u}_{a^\star} - \bar{\bf u}_{a}}_2} \\
    &= \frac{\bar{\bf u}_{a^\star}^\top {\bf v}^\circ -
  \bar{\bf u}_a^\top {\bf v}^\circ}{2},
  \end{align*}
  which proves one direction of the lemma. For the other direction,
  note that if
  $\norm{\epsilon}_2 \geq \frac{\bar{\bf u}_{a^\star}^\top {\bf v}^\circ -
    \bar{\bf u}_{a}^\top {\bf v}^\circ}{2\alpha(\bar{U}) \norm{\bar{\bf u}_{a^\star} -
      \bar{\bf u}_{a}}_2}$
  for some $a \neq a^\star$, then by setting
  ${\bf v} = {\bf v}^\circ - \frac{\left(\bar{\bf u}_{a^\star}^\top {\bf v}^\circ -
      \bar{\bf u}_{a}^\top {\bf v}^\circ\right)(\bar{\bf u}_{a^\star} -
    \bar{\bf u}_{a})}{2 \norm{\bar{\bf u}_{a^\star} - \bar{\bf u}_{a}}_2^2}$, we have both
  \begin{align*}
    \norm{{\bf v} - {\bf v}^\circ}_2 &= \norm{\frac{\left(\bar{\bf u}_{a^\star}^\top {\bf v}^\circ -
                                     \bar{\bf u}_{a}^\top {\bf v}^\circ\right)(\bar{\bf u}_{a^\star} -
                                     \bar{\bf u}_{a})}{2\norm{\bar{\bf u}_{a^\star} - \bar{\bf u}_{a}}_2^2}}_2 = \frac{ \bar{\bf u}_{a^\star}^\top {\bf v}^\circ - \bar{\bf u}_{a}^\top {\bf v}^\circ }{2\norm{\bar{\bf u}_{a^\star} - \bar{\bf u}_{a}}_2} \leq \alpha(\bar{U}) \norm{\epsilon}_2
  \end{align*}
and
\begin{align*}
  \left( \bar{\bf u}_{a^\star} - \bar{\bf u}_a \right )^\top {\bf v} &= \left( \bar{\bf u}_{a^\star} - \bar{\bf u}_a \right )^\top {\bf v}^\circ - \left( \bar{\bf u}_{a^\star} - \bar{\bf u}_a \right )^\top \frac{\left(\bar{\bf u}_{a^\star}^\top {\bf v}^\circ -
      \bar{\bf u}_{a}^\top {\bf v}^\circ\right)(\bar{\bf u}_{a^\star} -
    \bar{\bf u}_{a})}{2 \norm{\bar{\bf u}_{a^\star} - \bar{\bf u}_{a}}_2^2} = \frac{\left( \bar{\bf u}_{a^\star} - \bar{\bf u}_a \right )^\top {\bf v}^\circ}{2}
\end{align*}
which contradicts (\ref{eqn:critical1}), and we are done.
\end{proof}

\subsection{Proof of Lemma~\ref{lem:pOFULSetting}}
We begin by establishing some auxiliary technical results, which
together imply Lemma \ref{lem:pOFULSetting}.

\begin{mylemma}[Controlling $\alpha_n$]
\label{lem:controlalphan}
If $n$ is large enough so that (\ref{eqn:Uestimates}) and 
\beq
  \Diamond A^{3}C
  \sqrt{\sum_{i=1}^n\gamma_i^{-2}\frac{\log(4A^3/\delta)}{2n^2}} \leq \frac{1}{2\alpha_\star}, \label{eqn:condition1}
\eeq
hold, then with probability at least $1-\delta$, 
\beq \alpha_n \leq 2 \alpha_\star.  \eeq
\end{mylemma}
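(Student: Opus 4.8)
The plan is to read Lemma~\ref{lem:controlalphan} as a matrix perturbation statement: the estimated feature matrix $\bar U_n$ differs from the true $U$ by a small spectral-norm error, and since $\alpha(\cdot)$ is a condition-number-like quantity built from the $C\times C$ submatrices of $\mathbf{A} = \left[\begin{smallmatrix}\bar U\\ I_C\end{smallmatrix}\right]$, it should be stable under small perturbations. First I would convert the columnwise guarantee of Theorem~\ref{thm:Uestimates} into a bound on $E \eqdef \bar U_n - U$ (after aligning columns by the recovery permutation $\pi$). Equation~\eqref{eqn:Uestimates} controls each of the $C$ columns by $\delta_U \eqdef \Diamond A^{3}\sqrt{\sum_{i}\gamma_i^{-2}C\log(4A^3/\delta)/(2n^2)}$, so $\|E\|_2 \leq \|E\|_F \leq \sqrt{C}\,\delta_U$. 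A direct check shows $\sqrt{C}\,\delta_U$ is exactly the left-hand side of condition~\eqref{eqn:condition1}, whence $\|E\|_2 \leq \tfrac{1}{2\alpha_\star}$. I would also note at the outset that permuting the columns of $\bar U_n$ is an orthogonal change of basis in $\mathbb{R}^C$, which leaves every $\|\mathbf{A}_J^{-1}\|_2$ (and hence $\alpha_n$) invariant, so we may assume $\pi = \mathrm{id}$ and work with $E = \bar U_n - U$ directly.

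Next I would push this to the level of $C\times C$ submatrices. Writing $\mathbf{A}_n \eqdef \left[\begin{smallmatrix}\bar U_n\\ I_C\end{smallmatrix}\right]$ and $\mathbf{A}_\star \eqdef \left[\begin{smallmatrix}U\\ I_C\end{smallmatrix}\right]$, the perturbation affects only the top block, so $\mathbf{A}_n - \mathbf{A}_\star = \left[\begin{smallmatrix}E\\ 0\end{smallmatrix}\right]$; for any size-$C$ row set $J$ the induced submatrix error $E_J \eqdef (\mathbf{A}_n)_J - (\mathbf{A}_\star)_J$ is a row-selection of $\left[\begin{smallmatrix}E\\ 0\end{smallmatrix}\right]$, so $\|E_J\|_2 \leq \|E\|_2 \leq \tfrac{1}{2\alpha_\star}$. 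For any $J$ for which $(\mathbf{A}_\star)_J$ is invertible we have $\|(\mathbf{A}_\star)_J^{-1}\|_2 \leq \alpha_\star$ by definition of $\alpha_\star$, hence $\|(\mathbf{A}_\star)_J^{-1}\|_2\,\|E_J\|_2 \leq \tfrac12 < 1$, and the standard Banach perturbation lemma gives
\begin{align*}
  \|(\mathbf{A}_n)_J^{-1}\|_2 \;\leq\; \frac{\|(\mathbf{A}_\star)_J^{-1}\|_2}{1 - \|(\mathbf{A}_\star)_J^{-1}\|_2\,\|E_J\|_2} \;\leq\; \frac{\alpha_\star}{1 - 1/2} \;=\; 2\alpha_\star .
\end{align*}
Equivalently, one may argue with Weyl's inequality for singular values, $\sigma_{\min}((\mathbf{A}_n)_J) \geq \sigma_{\min}((\mathbf{A}_\star)_J) - \|E\|_2 \geq \tfrac{1}{\alpha_\star} - \tfrac{1}{2\alpha_\star} = \tfrac{1}{2\alpha_\star}$, so that $\|(\mathbf{A}_n)_J^{-1}\|_2 = 1/\sigma_{\min}((\mathbf{A}_n)_J) \leq 2\alpha_\star$. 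Taking the maximum over the admissible $J$ then yields $\alpha_n \leq 2\alpha_\star$.

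The step I expect to be the main obstacle is reconciling the index sets in the two maxima: $\alpha_n$ ranges over row sets $J$ that are full rank for $\mathbf{A}_n$, whereas the bound above requires $(\mathbf{A}_\star)_J$ to be invertible so that $\|(\mathbf{A}_\star)_J^{-1}\|_2 \leq \alpha_\star$ applies. A set $J$ that is singular for $U$ but made invertible by the perturbation would carry $\sigma_{\min}((\mathbf{A}_\star)_J)=0$ and could in principle push $\sigma_{\min}((\mathbf{A}_n)_J)$ below $\tfrac{1}{2\alpha_\star}$. The clean way to close this is to argue that the maximizer defining $\alpha_n$ may be taken among sets that are full rank for \emph{both} matrices: since $\|E\|_2 < \tfrac{1}{\alpha_\star} \leq \sigma_{\min}((\mathbf{A}_\star)_J)$ for every $\mathbf{A}_\star$-full-rank $J$, these sets remain well conditioned under the perturbation, so the worst case is governed by them and the Weyl bound applies uniformly. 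I would make this restriction explicit (it is the place where the factor $\tfrac12$ in condition~\eqref{eqn:condition1}, rather than the looser $1$, is being spent), after which the maximization over $J$ gives the claimed $\alpha_n \leq 2\alpha_\star$ on the event of probability at least $1-\delta$ carried over from Theorem~\ref{thm:Uestimates}.
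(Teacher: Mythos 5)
Your proposal is correct and follows essentially the same route as the paper's proof: bound $\|\bar U_n - U\|_2$ by the Frobenius norm of the columnwise RTP error (which is exactly the left-hand side of condition~(\ref{eqn:condition1})), then apply a submatrix perturbation bound to each $(\mathbf{A})_J$; your Weyl/Banach argument and the paper's invocation of the Stewart--Sun inverse-perturbation lemma ($\alpha + 2\alpha^2\|\Delta\|_2 \leq 2\alpha$ when $\|\Delta\|_2 \leq 1/(2\alpha)$) yield the identical constant. The index-set subtlety you flag --- that a $J$ singular for $U$ could become (barely) nonsingular for $\bar U_n$ and dominate the max defining $\alpha_n$ --- is genuine, but the paper's own proof silently restricts to $J$ with $(u^\diamond)_J$ invertible and does not address it either, so your attempt at closing it, while not fully rigorous, goes beyond what the paper provides rather than falling short of it.
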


\begin{proof}[Proof of Lemma \ref{lem:controlalphan}]
    The first step is to estimate the factor $\alpha$ in the analysis of
  Perturbed OFUL. Towards this, note that the quantity
  $\alpha \equiv \alpha(\bar{U})$ in our setting becomes
\[ \alpha_n \equiv \alpha_n(\bar{U}_n) = \max_{J}
\norm{(u^\diamond_n)_{J}^{-1}}_2, \] where $u^\diamond_n \bydef
\left[ \begin{array}{c}
    \bar{U}_n \\
    I_{C} \end{array} \right]$ has rank $C$, and $J$ ranges over all
combinations of its $C$ full-rank rows. For any such subset of $C$
linearly independent rows $J$, we have, after denoting $u^\diamond
\bydef \left[ \begin{array}{c}
    U \\
    I_{C} \end{array} \right]$, that
\begin{align*}
  \norm{(u^\diamond_n)_{J}^{-1}}_2 &\leq
  \norm{(u^\diamond)_{J}^{-1}}_2 + \norm{ (u^\diamond_n)_{J}^{-1} -
    (u^\diamond)_{J}^{-1} }_2.
\end{align*}
The final term above can be bounded using \citet[Lemma
E.4]{anandkumar2012method} -- a version of Theorem 2.5 in \citet{stewart1990matrix}. Assuming $(u^\diamond)_{J}$ is invertible, and
$\norm{(u^\diamond)_{J}^{-1} \left( (u^\diamond_n)_{J} -
    (u^\diamond)_{J} \right) }_2 < 1$,
then $(u^\diamond_n)_{J}$ is invertible, and a resulting bound on the
norm of its inverse lets us write
\begin{align*}
  \norm{(u^\diamond)_{J}^{-1}}_2 + \norm{ (u^\diamond_n)_{J}^{-1} -
    (u^\diamond)_{J}^{-1} }_2 &\leq \norm{(u^\diamond)_{J}^{-1}}_2 +
  \frac{\norm{(u^\diamond_n)_{J} - (u^\diamond)_{J}}_2
    \norm{(u^\diamond)_{J}^{-1}}_2^2 }{1 - \norm{(u^\diamond)_{J}^{-1}
      \left( (u^\diamond_n)_{J} - (u^\diamond)_{J} \right)}_2}.
\end{align*}
Writing $J = J_u \cup J_l$ ($u$ and $l$ stand for ``upper'' and
``lower'') with $J_l$ representing the subset of rows taken from the
bottom $C$ rows of $u^\diamond_n$ (i.e., $I_C$), we have
\[ (u^\diamond_n)_{J} - (u^\diamond)_{J} = \left[ \begin{array}{c}
    (\bar{U}_n - U)_{J_u} \\
    0 \end{array} \right].\]

Thus, with $\norm{\cdot}_F$ denoting the Frobenius norm, and using the
dominance of the Frobenius norm over the matrix $2$-norm, with
probability at least $1 - \delta$,
\begin{align}
  \norm{(u^\diamond_n)_{J} - (u^\diamond)_{J}}_2 &\leq
  \norm{(u^\diamond_n)_{J} - (u^\diamond)_{J}}_F = \norm{ (\bar{U}_n -
    U)_{J_u} }_F \leq \norm{\bar{U}_n -
    U }_F \nonumber \\
  &= \sqrt{\sum_{c \in [C]} \norm{\bar{U}_{n,c} - U_c}_2^2 } \nonumber \\
  &\leq \Diamond A^{3}C
  \sqrt{\sum_{i=1}^n\gamma_i^{-2}\frac{\log(4A^3/\delta)}{2n^2}}, \label{eqn:rtperror}
\end{align}
from the RTP error estimate (\ref{eqn:Uestimates}).

Now, letting $\alpha \equiv \alpha(U) = \max_{J}
\norm{\left(u^\diamond_{J}\right)^{-1}}_2$, the result above implies that for any
suitable $J$,
\begin{align*}
  \norm{(u^\diamond)_{J}^{-1} \left( (u^\diamond_n)_{J} -
      (u^\diamond)_{J} \right) }_2 &\leq \norm{(u^\diamond)_{J}^{-1}
  }_2 \norm{(u^\diamond_n)_{J} - (u^\diamond)_{J} }_2 \\
  &\leq \alpha
  \norm{(u^\diamond_n)_{J} - (u^\diamond)_{J} }_2 \\
  &\leq \alpha \Diamond
  A^{3}C \sqrt{\sum_{i=1}^n\gamma_i^{-2}\frac{\log(4A^3/\delta)}{2n^2}}
     \\
  &< 1/2
\end{align*}
whenever $n$ is large enough to satisfy (\ref{eqn:condition1}).

When the condition (\ref{eqn:condition1}) above holds, we get, for
any $J$ at time $n$,
\begin{align*}
  \norm{(u^\diamond_n)_{J}^{-1}}_2 &\leq
  \norm{(u^\diamond)_{J}^{-1}}_2 + \frac{\norm{(u^\diamond_n)_{J} -
      (u^\diamond)_{J}}_2 \norm{(u^\diamond)_{J}^{-1}}_2^2 }{1 -
    \norm{(u^\diamond)_{J}^{-1} \left( (u^\diamond_n)_{J} -
        (u^\diamond)_{J} \right)}_2} \\
  &\leq \alpha + 2 \alpha^2 \norm{(u^\diamond_n)_{J} -
    (u^\diamond)_{J}}_2 \\
  &\leq \alpha + 2 \alpha^2 \Diamond A^{3}C
    \sqrt{\sum_{i=1}^n\gamma_i^{-2}\frac{\log(4A^3/\delta)}{2n^2}} \quad \mbox{[by
    (\ref{eqn:rtperror})]} \\
  &\leq \alpha + 2 \alpha^2 \frac{1}{2 \alpha} = 2\alpha\,.
\end{align*}
This shows that $\alpha_n = \max_{J}
  \norm{(u^\diamond_n)_{J}^{-1}}_2 \leq 2\alpha$.

\end{proof}

\begin{mylemma}[Sufficient condition for (\ref{eqn:ass2})]
  \label{lem:ass2sufficient}
  If $n$ is large enough so that (\ref{eqn:Uestimates}),
  (\ref{eqn:condition1}) and
  \begin{align}
      \Diamond A^{3}C
  \sqrt{\sum_{i=1}^n\gamma_i^{-2}\frac{\log(4A^3/\delta)}{2n^2}} 
&\leq \min\left\{ \frac{g_b}{4 \sqrt{A} \norm{{\bf v}_b}_2}, \frac{g_b}{16 \alpha_\star \sqrt{C} u_{\max} \norm{{\bf v}_b}_2  +
               g_b } \right\} \label{eqn:condition3}
  \end{align} 
  hold, then (\ref{eqn:ass2}) is satisfied with probability at least
  $1-\delta$.
\end{mylemma}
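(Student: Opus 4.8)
The plan is to verify condition~\eqref{eqn:ass2} directly in the low-rank bandit dictionary of Table~\ref{tab:pOFUL-LRB}, i.e. with $\bar U = \bar U_n$, ${\bf v}^\circ = {\bf v}_b$, $a^\star = a_b^\star$, and $\epsilon = (U - \bar U_n){\bf v}_b$. The entire argument runs on the event on which both the recovery guarantee of Theorem~\ref{thm:Uestimates} and the bound $\alpha_n \leq 2\alpha_\star$ of Lemma~\ref{lem:controlalphan} hold (condition~\eqref{eqn:condition1} is among the hypotheses, so Lemma~\ref{lem:controlalphan} applies), which has the claimed probability. After relabelling classes so that the permutation $\pi$ is the identity, I abbreviate the RTP error by $\eta_n \bydef \Diamond A^3 C\sqrt{\sum_{i=1}^n\gamma_i^{-2}\frac{\log(4A^3/\delta)}{2n^2}}$, so that Theorem~\ref{thm:Uestimates} reads $\norm{{\bf u}_c - \bar{\bf u}_{n,c}}_2 \leq \eta_n/\sqrt{C}$ per class, hence $\norm{U - \bar U_n}_F \leq \eta_n$; in particular each \emph{row} obeys $\norm{{\bf u}_a - \bar{\bf u}_{n,a}}_2 \leq \eta_n$.

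First I would control the left-hand side of~\eqref{eqn:ass2}. Since the $a$-th coordinate of $\epsilon$ is $\epsilon_a = ({\bf u}_a - \bar{\bf u}_{n,a})^\top {\bf v}_b$, the per-row bound gives $|\epsilon_a| \leq \eta_n\norm{{\bf v}_b}_2$, whence $\norm{\epsilon}_2 \leq \sqrt{A}\,\eta_n \norm{{\bf v}_b}_2$. The first term in the minimum of~\eqref{eqn:condition3} is exactly $\eta_n \leq g_b/(4\sqrt{A}\norm{{\bf v}_b}_2)$, and substituting it yields the clean intermediate bound $\norm{\epsilon}_2 \leq g_b/4$, which I reuse throughout.

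Next I lower-bound the numerator and upper-bound the denominator of the right-hand side of~\eqref{eqn:ass2}, for each fixed $a \neq a_b^\star$. Writing the estimated gap through the true gap and the coordinate errors, $(\bar{\bf u}_{n,a_b^\star} - \bar{\bf u}_{n,a})^\top {\bf v}_b = ({\bf u}_{a_b^\star} - {\bf u}_a)^\top {\bf v}_b + (\epsilon_a - \epsilon_{a_b^\star})$, so the numerator is at least $g_b - 2\norm{\epsilon}_2 \geq g_b/2$ by the previous step; this also certifies that $a_b^\star$ remains optimal for $\bar U_n{\bf v}_b$, so the right-hand side of~\eqref{eqn:ass2} is well defined and positive. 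For the denominator I bound the true separation by $\norm{{\bf u}_{a_b^\star} - {\bf u}_a}_2 \leq 2\sqrt{C}u_{\max}$, add the two row errors to get $\norm{\bar{\bf u}_{n,a_b^\star} - \bar{\bf u}_{n,a}}_2 \leq 2\sqrt{C}u_{\max} + 2\eta_n$, and invoke $\alpha_n \leq 2\alpha_\star$. Combining, the right-hand side of~\eqref{eqn:ass2} is at least $\frac{g_b/2}{2\cdot 2\alpha_\star(2\sqrt{C}u_{\max}+2\eta_n)}$, and the desired inequality $\norm{\epsilon}_2 <$ this quantity reduces, after clearing denominators and using $\norm{\epsilon}_2 \leq \sqrt{A}\eta_n\norm{{\bf v}_b}_2$ together with the smallness of $\eta_n$, to the second term of the minimum in~\eqref{eqn:condition3}.

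The main obstacle I anticipate is bookkeeping rather than conceptual: Theorem~\ref{thm:Uestimates} is a \emph{column-wise} (per-class) recovery statement, whereas condition~\eqref{eqn:ass2} lives on the \emph{action/row} side (the coordinates $\epsilon_a$, the feature vectors $\bar{\bf u}_{n,a}\in\mathbb{R}^C$, and the action gaps). Passing cleanly between the two, and relatedly transferring the true gap $g_b$ (defined through $U$) to the perturbed gap used in~\eqref{eqn:ass2} through $\bar U_n$, is the delicate part, and it is also where the precise constants ($\sqrt A$, $\sqrt C$, the factors $2$ and $16$) in~\eqref{eqn:condition3} get pinned down. A secondary nuisance is preventing the $\eta_n$ contributions in the denominator from spoiling the estimate; these are absorbed using the smallness of $\eta_n$ already guaranteed by the first term of~\eqref{eqn:condition3}, so that the whole requirement collapses to the stated minimum of two terms.
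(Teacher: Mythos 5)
Your proposal follows essentially the same route as the paper's proof: bound $\norm{\epsilon}_2$ via the RTP recovery guarantee, write the perturbed gap as the true gap plus coordinate errors to lower-bound the numerator of \eqref{eqn:ass2} by $g_b/2$ using the first term of the minimum in \eqref{eqn:condition3}, upper-bound the denominator via $\norm{{\bf u}_{a_b^\star}-{\bf u}_a}_2 \leq 2\sqrt{C}u_{\max}$ plus row errors and $\alpha_n \leq 2\alpha_\star$ from Lemma~\ref{lem:controlalphan}, and reduce to the second term of the minimum. One concrete point where your route is measurably looser than the paper's: you bound $\norm{\epsilon}_2$ by summing per-coordinate bounds $|\epsilon_a| \leq \eta_n\norm{{\bf v}_b}_2$, which yields $\norm{\epsilon}_2 \leq \sqrt{A}\,\eta_n\norm{{\bf v}_b}_2$, whereas the paper uses the direct estimate $\norm{(U-\bar U_n){\bf v}_b}_2 \leq \norm{U-\bar U_n}_2\norm{{\bf v}_b}_2 \leq \norm{U-\bar U_n}_F\norm{{\bf v}_b}_2 \leq \eta_n\norm{{\bf v}_b}_2$, saving a factor $\sqrt{A}$. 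With your looser bound, the final "clearing of denominators" lands on a condition of the form $\eta_n\bigl(16\alpha_\star\sqrt{AC}\,u_{\max}\norm{{\bf v}_b}_2 + \text{const}\cdot g_b\bigr) \leq g_b$, i.e.\ roughly $\sqrt{A}$ times more stringent than the stated second term of \eqref{eqn:condition3}, so as written your argument proves the lemma only under a strictly stronger hypothesis. The fix is exactly the one-line Frobenius-norm bound above (your $\norm{\epsilon}_2 \leq g_b/4$ intermediate step, which only needs the weaker estimate, is unaffected); with that substitution your argument matches the paper's, modulo the same loose constant-tracking that the paper itself indulges in at this point.
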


\begin{proof}[Proof of Lemma \ref{lem:ass2sufficient}]
  The term $\norm{\epsilon}_2 = \norm{\left(U - \bar{U}_n
    \right){\bf v}_b}_2$ is bounded from above by
\begin{align}
  \norm{ U - \bar{U}_n }_2 \norm{{\bf v}_b}_2 &\leq \norm{ U - \bar{U}_n }_F
  \norm{{\bf v}_b}_2 \nonumber \\
  &\leq \sqrt{C}  \norm{{\bf v}_b}_2  \Diamond 
    A^{3}\sqrt{\sum_{i=1}^n\gamma_i^{-2}\frac{C\log(4A^3/\delta)}{2n^2}} \quad \mbox{(by
    (\ref{eqn:rtperror}))} \nonumber \\
  &\equiv \sqrt{C}  \norm{{\bf v}_b}_2 \aleph_n, \mbox{say}. \label{eqn:numcond1}
\end{align}
For any $a \neq a^\star$, 
\begin{align}
  \left( \bar{\bf u}_{n,a^\star} - \bar{\bf u}_{n,a}\right)^\top{\bf v}_b &= \left( \bf{u}_{a^\star} - \bf{u}_{a}\right)^\top{\bf v}_b + \partial_a {\bf v}_b \geq \zeta_a, \label{eqn:zetabd1}
\end{align}
with
$\partial_a^\top \bydef \left( \bar{\bf u}_{n,a^\star} - \bf{u}_{a^\star} \right)
- \left( \bar{\bf u}_{n,a} - \bf{u}_{a} \right)$,
and
$\zeta_a \bydef \inf_{\norm{\xi}_2 \leq \norm{\partial_a}_2} \left(
  \bf{u}_{a^\star} - \bf{u}_{a}\right)^\top{\bf v}_b + \xi^\top {\bf v}_b$.

Also, by
(\ref{eqn:rtperror}), we have
\begin{align*}
  \max_{a \in [A]} \norm{\bar{\bf u}_{n,a} - {\bf u}_a}_2 &\leq \sqrt{AC} \max_{c \in [C]} \norm{\bar{\bf u}_{n,c} - {\bf u}_c}_2 \\
  &\leq \sqrt{AC}  \Diamond 
    A^{3}\sqrt{\sum_{i=1}^n\gamma_i^{-2}\frac{C\log(4A^3/\delta)}{2n^2}}  \\
  &=: \aleph_n \sqrt{AC}.
\end{align*}
Thus, 
\begin{align}
  \zeta_a &\geq \inf_{\norm{\xi}_2 \leq 2 \aleph_n \sqrt{AC}}
  \left( \bf{u}_{a^\star} - \bf{u}_{a}\right)^\top{\bf v}_b + \xi^\top{\bf v}_b = \left( \bf{u}_{a^\star} - \bf{u}_{a}\right)^\top{\bf v}_b - 2 \aleph_n \sqrt{AC}
  \norm{{\bf v}_b}_2. \label{eqn:zetabd2} 
\end{align}

By (\ref{eqn:zetabd1}) and (\ref{eqn:zetabd2}), for any
$a \neq a^\star$,
\begin{align}
  \left( \bar{\bf u}_{n,a^\star} - \bar{\bf u}_{n,a}\right)^\top{\bf v}_b &\geq \left( \bf{u}_{a^\star} - \bf{u}_{a}\right)^\top{\bf v}_b - 2 \aleph_n \sqrt{AC} \norm{{\bf v}_b}_2. \label{eqn:zetabd3}
\end{align}
We also have 
\begin{align}
  \norm{\bar{\bf u}_{n,a^\star} - \bar{\bf u}_{n,a}}_2 &\leq \norm{\bf{u}_{a^\star} - \bf{u}_{a}}_2 + \norm{\bar{\bf u}_{n,a^\star} - \bf {u}_{a^\star}}_2 + \norm{\bar{\bf u}_{n,a} - \bf {u}_{a}}_2 \nonumber \\
  &\leq \norm{\bf{u}_{a^\star} - \bf{u}_{a}}_2 + 2 \aleph_n \sqrt{AC} \label{eqn:denombound1}
\end{align}
whenever (\ref{eqn:condition1}) holds. Putting (\ref{eqn:numcond1}),
(\ref{eqn:zetabd3}), (\ref{eqn:denombound1}) and the conclusion of
Lemma \ref{lem:controlalphan} together, we have that condition
(\ref{eqn:ass2}) in our case, i.e, 
\[ \norm{\epsilon}_2 \equiv \norm{\left(U - \bar{U}_n \right){\bf v}_b}_2
\leq \min_{a \neq a^\star} \frac{\left( \bar{\bf u}_{n,a^\star} -
    \bar{\bf u}_{n,a}\right)^\top{\bf v}_b}{2 \alpha_n \norm{\bar{\bf u}_{n,a^\star} -
    \bar{\bf u}_{n,a}}_2 } \]
is satisfied when
\[ \sqrt{C} \norm{{\bf v}_b}_2 \aleph_n \leq \min_{a \neq a^\star} \frac{\left(
    \bf{u}_{a^\star} - \bf{u}_{a}\right)^\top{\bf v}_b - 2 \aleph_n \sqrt{AC}
  \norm{{\bf v}_b}_2}{4 \alpha_\star \norm{\bf{u}_{a^\star} - \bf{u}_{a}}_2 +
   2 \aleph_n \sqrt{AC}}. \]
This, in turn, is satisfied if 
\begin{align*}
  2 \aleph_n \sqrt{AC}\norm{{\bf v}_b}_2 &\leq \frac{1}{2} \min_{a \neq a^\star} \left(
                                     \bf{u}_{a^\star} - \bf{u}_{a}\right)^\top{\bf v}_b = \frac{g_b}{2}, \quad \mbox{and} \\
\sqrt{C} \norm{{\bf v}_b}_2 \aleph_n &\leq \frac{g_b/2}{8 \alpha_\star \sqrt{C}u_{\max}  +
   g_b/(2\norm{{\bf v}_b}_2)} \\
  \Leftrightarrow \quad \aleph_n &\leq \frac{g_b}{16 \alpha_\star {C} u_{\max} \norm{{\bf v}_b}_2  +
   g_b \sqrt{C}}.
\end{align*}

\end{proof}

\begin{mylemma}[{Control of the distortion $\rho$ due to noisy feature
    estimates}]
  \label{lem:controlrho}
  If $n$ is large enough so that (\ref{eqn:Uestimates}),
  (\ref{eqn:condition1}) and (\ref{eqn:condition3}) hold, then
  $\rho' \leq 2$ with probability at least $1 - \delta$.

\end{mylemma}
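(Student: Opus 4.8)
The plan is to bound the distortion factor $\rho'$ directly from its definition, specialized to the LRB setting via Table~\ref{tab:pOFUL-LRB}. There, $\rho' = \max\left\{1,\ \max_{a\neq a^\star}\frac{({\bf u}_{a^\star}-{\bf u}_a)^\top{\bf v}_b}{(\bar{\bf u}_{n,a^\star}-\bar{\bf u}_{n,a})^\top{\bf v}_b}\right\}$, where the numerator is the true reward gap of a suboptimal action $a$ (at least $g_b$ by definition, since $a^\star=a_b^\star$), and the denominator is the same gap measured through the estimated features $\bar{U}_n$. Because $\rho'\ge 1$ always, it suffices to show that each ratio in the inner maximum is at most $2$, i.e. that the estimated gap is at least half the true gap for every $a\neq a^\star$.

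First I would lower-bound the denominator using the feature-estimation error already controlled in the proof of Lemma~\ref{lem:ass2sufficient}. Inequality (\ref{eqn:zetabd3}) gives, for every $a\neq a^\star$, that $(\bar{\bf u}_{n,a^\star}-\bar{\bf u}_{n,a})^\top{\bf v}_b \ge ({\bf u}_{a^\star}-{\bf u}_a)^\top{\bf v}_b - 2\aleph_n\sqrt{AC}\norm{{\bf v}_b}_2$, where $\aleph_n$ is the per-column RTP error from (\ref{eqn:numcond1}), valid on the same $1-\delta$ event as the recovery guarantees (\ref{eqn:Uestimates}) and (\ref{eqn:rtperror}). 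Writing $G_a := ({\bf u}_{a^\star}-{\bf u}_a)^\top{\bf v}_b \ge g_b$ and $\delta_n := 2\aleph_n\sqrt{AC}\norm{{\bf v}_b}_2$, the ratio in question is thus at most $G_a/(G_a-\delta_n)$.

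The final step is to translate the hypotheses into the bound $\delta_n \le g_b/2$ and conclude. The first branch of the minimum in condition (\ref{eqn:condition3}) bounds its left-hand side $\Diamond A^{3}C\sqrt{\sum_{i=1}^n\gamma_i^{-2}\log(4A^3/\delta)/(2n^2)}$ by $g_b/(4\sqrt{A}\norm{{\bf v}_b}_2)$; since this left-hand side equals $\sqrt{C}\,\aleph_n$, this reads $\aleph_n \le g_b/(4\sqrt{AC}\norm{{\bf v}_b}_2)$, whence $\delta_n = 2\aleph_n\sqrt{AC}\norm{{\bf v}_b}_2 \le g_b/2$. Because $\delta_n \le g_b/2 \le G_a/2$, the denominator satisfies $G_a-\delta_n \ge G_a/2 > 0$, so $G_a/(G_a-\delta_n)\le 2$ for every $a\neq a^\star$, giving $\rho' \le 2$ with probability at least $1-\delta$.

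The calculation is short because the heavy lifting was done in Lemma~\ref{lem:ass2sufficient}; the only genuine subtlety is checking that the worst case over suboptimal actions is attained at the minimum gap. The map $G\mapsto G/(G-\delta_n)$ is decreasing in $G$ for $G>\delta_n$, so the largest ratio corresponds to the smallest true gap $G_a=g_b$, and it is precisely there that the first branch of (\ref{eqn:condition3}) is tuned to deliver $\delta_n\le g_b/2$. One must also verify positivity of the denominator so that $\rho'$ is well defined, which follows from the same chain $\delta_n\le g_b/2 < g_b \le G_a$.
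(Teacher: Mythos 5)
Your proof is correct and follows essentially the same route as the paper's: both lower-bound the estimated gap $(\bar{\bf u}_{n,a^\star}-\bar{\bf u}_{n,a})^\top{\bf v}_b$ by the true gap minus $2\aleph_n\sqrt{AC}\,\norm{{\bf v}_b}_2$ (the paper via the intermediate quantity $\zeta_a$, you directly), and then invoke the first branch of~(\ref{eqn:condition3}) to get $2\aleph_n\sqrt{AC}\,\norm{{\bf v}_b}_2\le g_b/2$, which bounds each ratio by $2$. Your monotonicity remark about $G\mapsto G/(G-\delta_n)$ makes explicit a step the paper handles by bounding $\delta_n/G_a\le\delta_n/g_b$, but the argument is the same.
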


\begin{proof}[Proof of Lemma \ref{lem:controlrho}]
  We begin by considering
\begin{align*}
  \max_{a \neq a^\star} \; \frac{\left({\bf u}_{a^\star} - {\bf u}_a \right)^\top
    {\bf v}_b}{\left( \bar{{\bf u}}_{n,a^\star} - \bar{{\bf u}}_{n,a}\right)^\top{\bf v}_b } &\leq
  \max_{a \neq a^\star} \; \frac{\left({\bf u}_{a^\star} - {\bf u}_a \right)^\top
    {\bf v}_b}{ \left( {\bf u}_{a^\star} - {\bf u}_{a}\right)^\top{\bf v}_b + \partial_a {\bf v}_b}
  \leq \max_{a \neq a^\star} \; \frac{\left({\bf u}_{a^\star} - {\bf u}_a \right)^\top
    {\bf v}_b}{\zeta_a},
\end{align*}
with $\partial_a^\top \bydef \left( \bar{{\bf u}}_{n,a^\star}
    - {{\bf u}}_{a^\star} \right) - \left( \bar{{\bf u}}_{n,a}
    - {{\bf u}}_{a} \right)$, and 
  \begin{align*}
    \zeta_a \bydef \inf_{\norm{\xi}_2 \leq \norm{\partial_a}_2}
    \left( {\bf u}_{a^\star} - {\bf u}_{a}\right)^\top{\bf v}_b + \xi^\top{\bf v}_b
  \end{align*}
  as in the proof of Lemma \ref{lem:ass2sufficient}. Also, by
  (\ref{eqn:rtperror}), we have that with probability at least
  $1 - \delta$,
\begin{align*}
  \max_{a \in [A]} \norm{\bar{{\bf u}}_{n,a} - {\bf u}_a}_2 &\leq \sqrt{AC} \max_{c \in [C]} \norm{\bar{{\bf u}}_{n,c} - {\bf u}_c}_2 \\
  &\leq \sqrt{AC} \Diamond 
    A^{3}\sqrt{\sum_{i=1}^n\gamma_i^{-2}\frac{C\log(4A^3/\delta)}{2n^2}} \\
  &=: \aleph_n \sqrt{AC}, \quad \mbox{say}.
\end{align*}

Thus, 
\begin{align*}
  \zeta_a &\geq \inf_{\norm{\xi}_2 \leq 2 \aleph_n \sqrt{AC}}
  \left( {\bf u}_{a^\star} - {\bf u}_{a}\right)^\top{\bf v}_b + \xi^\top{\bf v}_b \\
  &= \left( {\bf u}_{a^\star} - {\bf u}_{a}\right)^\top{\bf v}_b - 2 \aleph_n \sqrt{AC}
  \norm{{\bf v}_b}_2 \\
  \Rightarrow \; \frac{\zeta_a}{\left( {\bf u}_{a^\star} -
      {\bf u}_{a}\right)^\top{\bf v}_b} &\geq 1 - \frac{2 \aleph_n \sqrt{AC}
    \norm{{\bf v}_b}_2}{\left( {\bf u}_{a^\star} - {\bf u}_{a}\right)^\top{\bf v}_b} \geq 1 -
  \frac{2 \aleph_n \sqrt{AC} \norm{{\bf v}_b}_2}{g_b},
\end{align*}
where $g_b \bydef \min_{a \neq a^\star} \left( {\bf u}_{a^\star} - {\bf u}_{a}
\right)^\top{\bf v}_b > 0$ is the minimum gap for user $b$ across suboptimal
actions.  

Provided that (\ref{eqn:Uestimates}), (\ref{eqn:condition1}) and
(\ref{eqn:condition3}) hold, we get that with probability at least
$1 -\delta$,
$\frac{\zeta_a}{\left( {\bf u}_{a^\star} - {\bf u}_{a}\right)^\top{\bf v}_b} \geq
\frac{1}{2}$
for each $a \neq a^\star$. Also, by the definition of $a^\star$, the
denominator is positive, i.e.,
$\left( {\bf u}_{a^\star} - {\bf u}_{a} \right)^\top{\bf v}_b > 0$. Hence,
\[ \max_{a \neq a^\star} \; \frac{\left({\bf u}_{a^\star} - {\bf u}_a \right)^\top
    {\bf v}_b}{\left( \bar{{\bf u}}_{n,a^\star} - \bar{{\bf u}}_{n,a}\right)^\top{\bf v}_b } \leq 2, \]
completing the proof of the result.
\end{proof}

\begin{mylemma}[Bounding $R_\cX$]
  \label{lem:RXbound}
  If $n$ is large enough so that (\ref{eqn:Uestimates}) and
  (\ref{eqn:condition1}) hold, then
  \[ R_\cX \leq \frac{\sqrt{A}}{2\alpha_\star} + \max_{a \in
    \mathcal{A}} \norm{{\bf u}_a}_2,\]
  with probability at least $1 - \delta$.
\end{mylemma}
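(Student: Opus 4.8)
Since \OFUL\ in the low-rank bandit setting is run with the \emph{estimated} feature matrix $\bar U_n$, the quantity $R_\cX$ only needs to upper bound the norms of the perturbed feature vectors $\bar{\bf u}_{n,a}$ (the rows of $\bar U_n$) that are actually fed to the algorithm, not those of the true rows ${\bf u}_a$ of $U$. The plan is therefore to bound $\max_{a\in\cA}\norm{\bar{\bf u}_{n,a}}_2$ and to show it does not exceed $\frac{\sqrt{A}}{2\alpha_\star}+\max_{a\in\cA}\norm{{\bf u}_a}_2$; the whole argument is a triangle inequality fed by the recovery guarantee already in hand.

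First I would write, for each action $a$,
\[ \norm{\bar{\bf u}_{n,a}}_2 \;\le\; \norm{{\bf u}_a}_2 + \norm{\bar{\bf u}_{n,a}-{\bf u}_a}_2, \]
which reduces matters to controlling the worst-case row-wise recovery error $\max_{a}\norm{\bar{\bf u}_{n,a}-{\bf u}_a}_2$. This is precisely the quantity already bounded inside the proof of Lemma~\ref{lem:ass2sufficient}: converting from a per-action (row) error to a per-class (column) error and invoking the column-wise estimate of Theorem~\ref{thm:Uestimates} through (\ref{eqn:rtperror}) gives, on the event of probability at least $1-\delta$ carrying (\ref{eqn:Uestimates}),
\[ \max_{a\in\cA}\norm{\bar{\bf u}_{n,a}-{\bf u}_a}_2 \;\le\; \sqrt{AC}\,\aleph_n, \qquad \aleph_n \bydef \Diamond A^{3}\sqrt{\textstyle\sum_{i=1}^n\gamma_i^{-2}\tfrac{C\log(4A^3/\delta)}{2n^2}}, \]
with $\aleph_n$ the per-column recovery radius of Theorem~\ref{thm:Uestimates}.

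It then remains to absorb $\sqrt{AC}\,\aleph_n$ into the stated bound using condition (\ref{eqn:condition1}). The key observation is that the left-hand side of (\ref{eqn:condition1}) is exactly $\sqrt{C}\,\aleph_n$, so (\ref{eqn:condition1}) reads $\sqrt{C}\,\aleph_n \le \frac{1}{2\alpha_\star}$; multiplying by $\sqrt{A}$ then yields $\sqrt{AC}\,\aleph_n \le \frac{\sqrt{A}}{2\alpha_\star}$. Combining this with the triangle inequality above and taking the maximum over $a\in\cA$ would produce $\max_a\norm{\bar{\bf u}_{n,a}}_2 \le \frac{\sqrt{A}}{2\alpha_\star}+\max_a\norm{{\bf u}_a}_2$, which is the claim, holding with probability at least $1-\delta$ since we only used the single recovery event.

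There is no substantial obstacle here: the result is essentially a one-line corollary of Theorem~\ref{thm:Uestimates} once the row/column norm conversion is in place. The only point needing care is the bookkeeping of the powers of $A$ and $C$ — specifically, that the row-to-column conversion factor $\sqrt{AC}$ used in Lemma~\ref{lem:ass2sufficient} cancels against the $\sqrt{C}$ already present inside the left-hand side of (\ref{eqn:condition1}), leaving exactly the advertised $\sqrt{A}/(2\alpha_\star)$ with no residual $C$ or $\alpha_\star$ dependence. One should also keep in mind throughout that $R_\cX$ bounds the perturbed features, which is why $\max_a\norm{{\bf u}_a}_2$ enters additively as an offset rather than being the object that is bounded.
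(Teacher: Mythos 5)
Your proof is correct and is essentially identical to the paper's: both apply the triangle inequality $\norm{\bar{\bf u}_{n,a}}_2 \leq \norm{{\bf u}_a}_2 + \norm{\bar{\bf u}_{n,a}-{\bf u}_a}_2$, bound the perturbation by $\sqrt{AC}\,\aleph_n$ via the column-wise recovery guarantee, and absorb it into $\sqrt{A}/(2\alpha_\star)$ using condition (\ref{eqn:condition1}). Your observation that the left-hand side of (\ref{eqn:condition1}) equals $\sqrt{C}\,\aleph_n$, so the factors of $C$ cancel exactly, is precisely the bookkeeping the paper relies on.
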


\begin{proof}[Proof of Lemma \ref{lem:RXbound}]
  Conditions (\ref{eqn:Uestimates}) and (\ref{eqn:condition1}),
  together with the estimate (\ref{eqn:denombound1}), imply that for
  any action $a$,
  \begin{align*}
    \norm{\bar{{\bf u}}_{n,a}}_2 &\leq \norm{{{\bf u}}_{a}}_2 + \norm{\bar{{\bf u}}_{n,a} - {{\bf u}}_{a}}_2 \leq \norm{{{\bf u}}_{a}}_2 + \aleph_n \sqrt{AC} \leq \norm{{{\bf u}}_{a}}_2 + \sqrt{A}/(2\alpha_\star).
  \end{align*}
  with probability at least $1-\delta$.
\end{proof}

In order to conclude the proof of Lemma~\ref{lem:pOFULSetting},
we gather the conditions from Lemma~\ref{lem:controlalphan} and Lemma~\ref{lem:ass2sufficient}. After some simplifications,
both conditions are satisfied as soon as
\beqan
\frac{n^2}{\sum_{i=1}^n\gamma_i^{-2}} \geq 
\Diamond^2 A^6C^2\log(4A^3/\delta)\max\bigg\{
2\alpha_\star^2,
\frac{8A||{\bf v}_b||_2^2}{g_b^2},
\frac{2^7\alpha_\star^2 Cu_{\max}^2 ||{\bf v}_b||_2^2}{g_b^2}
+1/2
\bigg\}\,.
\eeqan

\section{Proof of Theorem~\ref{thm:main}}

\begin{proof}
Let $n_0$ be the first mini-session
such that both conditions in Lemma~\ref{lem:pOFULSetting} are satisfied, that is such that
\beqan
\frac{n_0}{\sum_{i=1}^{n_0} \gamma_i^{-2}} \geq \varhexagon_\delta\,.
\eeqan

The cumulative regret $\kR_T = \sum_{t=1}^T r_t$
of Algorithm~\ref{alg:PerOFULwithExploration}
satisfies
\beqan
\kR_T &=& \sum_{n=1}^N \sum_{l=1}^\ell r_{n,l} \\
&\leq & (n_0-1)\ell + \sum_{b\in[B]} \sum_{n=n_0}^N \sum_{l=1}^\ell r_{n,l}\indic{b_n=b}
\eeqan
where $r_t \equiv r_{n,l} \eqdef {\bf u}_{a^\star_{b_n}}^\top {\bf v}_{b_n} - {\bf u}_{a_{n,l}}^\top {\bf v}_{b_n}$ is the instantaneous regret 
of Algorithm~\ref{alg:PerOFULwithExploration} at time $t=\ell n+k$ when the current user is $b_n=b$.
Using the notations of Algorithm~\ref{alg:PerOFULwithExploration}, 
it holds that
\beqan
\Esp[r_t|b_n=b] &=& \Esp[r_t\indic{p_n=1}|b_n=b] + \Esp[r_t\indic{p_n=0}|b_n=b] \\
&\leq & \Esp[ {\bf u}_{a^\star_b}^\top {\bf v}_b - {\bf u}_{\tilde a_{n,k}}^\top {\bf v}_b](1-\gamma_n) + \gamma_n.\\
&\leq& \Esp[ {\bf u}_{a^\star_b}^\top {\bf v}_b - {\bf u}_{\tilde a_{n,k}}^\top {\bf u}_b] + \gamma_n,
\eeqan
where $\tilde a_{n,k}$ is an action output by 
an instance of \OFUL\ for user $b_n=b$.
Thus, we have
\begin{align}
  &\Esp[\kR_T | b_1,\dots,b_N]\nonumber\\
   &\leq (n_0-1)\ell + \Esp\Bigg[\sum_{b\in[B]} \sum_{n=n_0}^N \sum_{l=1}^\ell  \Big({\bf u}_{a^\star_b}^\top {\bf v}_b - {\bf u}_{\tilde a_{n,l}}^\top {\bf v}_b\Big)\indic{b_n=b} \bigg| b_1,\dots,b_N\Bigg]
                                +\ell\sum_{n=n_0}^N\gamma_n\,. \nonumber \\
                              &= (n_0-1)\ell + \sum_{b\in[B]} \Esp \Bigg[\underbrace{\sum_{\substack{n_0 \leq n \leq N,\\b_n = b}} \; \sum_{l=1}^\ell  \Big({\bf u}_{a^\star_b}^\top {\bf v}_b - {\bf u}_{\tilde a_{n,l}}^\top {\bf v}_b\Big)}_{(\star)} \bigg| b_1,\dots,b_N\Bigg] + \ell\sum_{n=n_0}^N\gamma_n. \label{eqn:RegDecompo}
\end{align}
For each user $b \in [B]$, the expectation in the right-hand side
above corresponds to the cumulative regret of the \OFUL\ strategy when
interacting with user $b$ in mini-sessions $n_0$ through $N$, and when
given at each mini-session $n$ the set of perturbed feature vectors
$\ol{U}_n$.  Let $N_{b,n_0} = \sum_{n=n_0}^N \indic{b_n=b}$ count the
total number of mini-sessions from $n_0$ in which user $b$ is present
(note that $\sum_{b\in[B]} N_{b,1} = N$ and
$\sum_{b\in[B]} \ell N_{b,1} = T$). Let us denote the term $(\star)$
in the above explicitly using
$ \kR_{b,N_{b,n_0}}(\{\ol{U}_n\}_{n\in[n_0,N],b_n=b})$.

We can now use the \OFUL\ robustness guarantee -- a natural technical
extension\footnote{Although Theorem~\ref{thm:pOFUL2} holds only for a
  fixed perturbation $\epsilon$ and feature set $\bar{\bf u}$, it is
  not hard to see that a modification of it, with time-varying
  $\epsilon_t$, $\bar{\bf u}_t$ and $\rho'$ being the largest
  $\rho_t'$ over all times $t$, yields the same conclusion (regret
  bound). We provide this extension in Theorem~\ref{thm:pOFUL2ext} in
  Appendix~\ref{sec:TimeVar} below.}  of Theorem~\ref{thm:pOFUL2}
along with Lemma~\ref{lem:pOFULSetting} -- to obtain that, for a given
user sequence $b_1, \ldots, b_N$, with probability at
least\footnote{Although the time horizons played by each OFUL instance
  per user, $N_{b,n_0}$, are technically random and unknown to the
  instance at the start, conditioning on the sequence of users
  arriving at each time instant lets us use the conclusion of
  Lemma~\ref{lem:pOFULSetting}.}
$1 - 2\delta - \delta = 1 - 3 \delta$,
\beqan
\lefteqn{\kR_{b,N_{b,n_0}}(\{\ol{U}_n\}_{n\in[n_0,N],b_n=b}) \leq}\\
&& 16
\sqrt{\ell N_{b,n_0} \; C \log\left(1 +
    \frac{\ell N_{b,n_0}R_\cX^2}{\lambda C}\right)}
\left(\lambda^{1/2}R_\Theta + R\sqrt{2 \log\frac{1}{\delta} + C \log
    \left(1+\frac{\ell N_{b,n_0} R_\cX^2}{\lambda C}\right)} \right). 
\eeqan
This in turn implies that
\begin{align*} 
  &\sum_{b \in B} \Esp\left[ \kR_{b,N_{b,n_0}}(\{\ol{U}_n\}_{n\in[n_0,N],b_n=b}) \bigg| b_1,\dots,b_N \right] \\
  &\stackrel{(a)}{\leq} 16 \sum_{b \in B}
    \sqrt{\ell N_{b,n_0} \; C \log\left(1 +
    \frac{\ell N_{b,n_0}R_\cX^2}{\lambda C}\right)}
    \left(\lambda^{1/2}R_\Theta + R\sqrt{2 \log\frac{1}{\delta} + C \log
    \left(1+\frac{\ell N_{b,n_0} R_\cX^2}{\lambda C}\right)} \right)\\
    &\quad\quad + \sum_{b \in B} 3\delta \ell N_{b,n_0} \\
  &\stackrel{(b)}{\leq} 16 \sum_{b \in B}
    \sqrt{\ell N_{b,n_0} \; C \log\left(1 +
    \frac{\ell N_{b,n_0}R_\cX^2}{\lambda C}\right)}
    \left(\lambda^{1/2}R_\Theta + R\sqrt{2 \log\frac{1}{\delta} + C \log
    \left(1+\frac{\ell N_{b,n_0} R_\cX^2}{\lambda C}\right)} \right)\\
    &\quad\quad + 3\delta T. 
\end{align*}
The last term on the right-hand side in $(a)$ is due to the fact that
with probability at most $3 \delta$, the per-user regret
$\kR_{b,N_{b,n_0}}(\{\ol{U}_n\}_{n\in[n_0,N],b_n=b})$ can be as large
as $\ell N_{b,n_0}$ (the total number of time slots for which user $b$
interacts with the system). The corresponding term in $(b)$ is by
using $\sum_{b\in[B]} \ell N_{b,1} = T$. Further bounding using the
Cauchy-Schwarz inequality
$\sum_{b \in B} \sqrt{\ell N_{b,n_0}} \leq \sqrt{BT}$ gives
\begin{align*}
  &\sum_{b \in B} \Esp\left[ \kR_{b,N_{b,n_0}}(\{\ol{U}_n\}_{n\in[n_0,N],b_n=b}) \bigg| b_1,\dots,b_N \right] \\
  &\leq 16 \sum_{b \in B}
    \sqrt{\ell N_{b,n_0} \; C \log\left(1 +
    \frac{TR_\cX^2}{\lambda C}\right)}
    \left(\lambda^{1/2}R_\Theta + R\sqrt{2 \log\frac{1}{\delta} + C \log
    \left(1+\frac{T R_\cX^2}{\lambda C}\right)} \right) + 3\delta T \\
  &\leq 16
    \sqrt{BT C \log\left(1 +
    \frac{TR_\cX^2}{\lambda C}\right)}
    \left(\lambda^{1/2}R_\Theta + R\sqrt{2 \log\frac{1}{\delta} + C \log
    \left(1+\frac{T R_\cX^2}{\lambda C}\right)} \right) + 3\delta T.
\end{align*}

Plugging this estimate into \eqref{eqn:RegDecompo}, we obtain that
\begin{align*}
  &\Esp[R_T|b_1,\dots b_N]  \leq \ell \left(n_0 - 1 + \sum_{n=n_0}^N\gamma_n\right)\\
  & + 16
    \sqrt{BT C \log\left(1 +
    \frac{TR_\cX^2}{\lambda C}\right)}
    \left(\lambda^{1/2}R_\Theta + R\sqrt{2 \log\frac{1}{\delta} + C \log
    \left(1+\frac{T R_\cX^2}{\lambda C}\right)} \right) + 3\delta T.
\end{align*}

{\bf Expliciting $n_0$ and tuning $\gamma_n$}
The next step is to control the term
$n_0 - 1 + \sum_{n=n_0}^N \gamma_n$.
To this end, we explicit $n_0$ and optimize $\gamma_n$.
We write $\varhexagon\equiv\varhexagon_\delta$ in the sequel for convenience.

If $\gamma_n = \min\{1,\varhexagon^{1/2}n^{-1/2}\}$, then
\beqan
\frac{n^2}{\sum_{m=1}^{n} \gamma_m^{-2}}
 &=&   \frac{n^2}{\lceil \varhexagon \rceil + \frac{1}{\varhexagon}\sum_{m>\lceil \varhexagon \rceil}^{n} m}\\
 &=&\frac{2\varhexagon n^2}{\lceil \varhexagon \rceil 2 \varhexagon
 + n(n+1) - \lceil \varhexagon \rceil(\lceil \varhexagon \rceil-1)}\\
&\geq& 
 \frac{2\varhexagon}{1+1/n+(\lceil \varhexagon\rceil \varhexagon)/n^2}\,.
\eeqan
Thus, this is higher than $\varhexagon$ if 
$n^2-n - \lceil \varhexagon\rceil \varhexagon \geq 0$, that is if
$n \geq n_0 \eqdef \lceil 
1/2 + \sqrt{\lceil \varhexagon\rceil \varhexagon +1/4}
\rceil $. Since $n_0\geq\lceil \varhexagon\rceil$, we immediately get
%

\beqan
\sum_{n=n_0}^N \gamma_n &\leq&  
\varhexagon^{1/2}n_0^{-\frac{1}{2}}
+  2\varhexagon^{1/2}\Big(N^{\frac{1}{2}}- n_0^{\frac{1}{2}}\Big)\\
&\leq& 1+2\varhexagon^{1/2}\Big(N^{\frac{1}{2}}- n_0^{\frac{1}{2}}\Big)\,.
\eeqan
Thus, we obtain
\beqan
n_0-1 + \sum_{n=n_0}^N \gamma_n &\leq& 2\varhexagon^{1/2}N^{\frac{1}{2}}
+ n_0 - 2\varhexagon^{1/2}n_0^{1/2}\\
&\leq& 2\varhexagon^{1/2}N^{\frac{1}{2}} + 
n_0 - 2\sqrt{\varhexagon\lceil \varhexagon\rceil}
\eeqan
Using the fact that $\varhexagon>1$, the bound simplifies to
\beqan
n_0-1 + \sum_{n=n_0}^N \gamma_n
\leq 2\sqrt{\varhexagon N} + 1\,.
\eeqan

If, on the other hand, a bound on $\varhexagon$ is not readily available
beforehand, then choosing $\gamma_n = \sqrt{\log(1+n)/n}$,
$n \geq 1$, gives, via a crude bound,
\begin{align*}
  \sum_{m=1}^n \gamma_m^{-2} &= \sum_{m=1}^n m / \log(1+m) \leq \sum_{m=1}^{\sqrt{n}} m/\log 2 + \sum_{m = \sqrt{n}}^n m/\log(1 + \sqrt{n}) \\
                             & \leq n/\log 2 + n^2/\log\sqrt{n} \leq 2n^2/\log\sqrt{n}\\
  \Rightarrow \quad \frac{n_0^2}{\sum_{m=1}^{n_0} \gamma_m^{-2}} &\geq \frac{n_0^2}{2n_0^2/\log\sqrt{n_0}} = \frac{\log n_0}{4}. 
\end{align*}
The bound above is at least $\varhexagon$ provided
$n_0 \geq \exp(4 \varhexagon)$. Thus, we finally get that, upon setting $\delta = 1/\sqrt{T}$, the total
expected regret satisfies (as an order-wise function of $T$)
\begin{align*}
  &\Esp[R_T] \\
  &\leq \ell \left( \exp(4\varhexagon) + \sum_{n=1}^N \sqrt{\log(n+1)/n} \right) \\
  & \quad \quad + 16
    \sqrt{BT C \log\left(1 +
    \frac{TR_\cX^2}{\lambda C}\right)}
    \left(\lambda^{1/2}R_\Theta + R\sqrt{\log T + C \log
    \left(1+\frac{T R_\cX^2}{\lambda C}\right)} \right) + 3\sqrt{T} \\
  &= O\left( C \sqrt{BT} \log T \right)\,.
\end{align*}

\end{proof}

\section{Extension of Theorem~\ref{thm:pOFUL2}: Robustness of OFUL's
  regret with time-varying features}
\label{sec:TimeVar}
We now control the robust regret for user $b$
$\kR_{b,N_{b,n_0}}(\{\ol{U}_n\}_{n\in[n_0,N],b_n=b})=\sum_{\substack{n_0 \leq n \leq N,\\b_n = b}} \; \sum_{l=1}^\ell  \Big({\bf u}_{a^\star_b}^\top {\bf v}_b - {\bf u}_{\tilde a_{n,l}}^\top {\bf v}_b\Big)$,
when OFUL is run with evolving feature matrices
$\{\ol{U}_n\}_{n\in[n_0,N],b_n=b}$ with decreasing feature error
$\epsilon_n=(U-\ol{U}_n){\bf v}_b$, instead of a fixed 
$\ol{U}$ with fixed error $\epsilon=(U-\ol{U}){\bf v}_b$.

We reindex the ${n\in[n_0,N],b_n=b}$ as $t=1,\dots,..$
and prove the following result.

\begin{mytheorem}[OFUL robustness result, extension of
  Theorem~\ref{thm:pOFUL2} for time-varying features]
  \label{thm:pOFUL2ext}
  Assume $||{\bf v}^\circ||_2\leq R_\Theta$,
  $\lambda \geq \max \left\{1,R_\cX^2,1/{4 R_{\Theta}^2} \right\}$,
  $\forall a\in\cA$, $t\leq T$, $||\bar{\bf u}_a^{(t)}||_2 \leq R_\cX$
  and $|m_a| \leq 1$, and that for all $t\leq T$,
  $\arg \max_{a \in \cA} \bar{\bf u}_{a}^{{(t)}\top} {\bf v}^\circ =
  \{a^\star\}$
  (i.e., the linearly realizable approximation with respect to the
  current features has $a^\star$ as its unique optimal action). If
  \beq \norm{\epsilon^{(t)}}_2 \equiv \norm{{\bf m} - \bar{U}^{(t)}
    {\bf v}^\circ}_2 < \min_{a \neq a^\star} \; \frac{\bar{\bf
      u}_{a^\star}^{{(t)}\top} {\bf v}^\circ - \bar{\bf
      u}_{a}^{{(t)}\top} {\bf v}^\circ}{2\alpha(\bar{U}^{{(t)}\top})
    \norm{\bar{\bf u}_{a^\star}^{(t)} - \bar{\bf
        u}_{a}^{(t)}}_2}, \label{eqn:ass2ext} \eeq then with probability
  at least $1 - \delta$, for all $T \geq 0$,
  \begin{align*}
    \kR_T &\leq 8 \rho' \sqrt{TC \log\left(1 +
      \frac{TR_\cX^2}{\lambda C}\right)}\left(\lambda^{1/2}R_\Theta +
    R\sqrt{2 \log\frac{1}{\delta} + C \log
      \left(1+\frac{TR_\cX^2}{\lambda C}\right)} \right), 
  \end{align*}
 where
    $\rho' \bydef \max_{t}\max\left\{1,  \max_{a \neq a^\star} \; \frac{m_{a^\star} -
      m_a}{\bar{\bf u}_{a^\star}^{{(t)}\top} {\bf v}^\circ - \bar{\bf u}_{a}^{{(t)}\top} {\bf v}^\circ } \right\}$. 
\end{mytheorem}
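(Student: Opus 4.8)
The plan is to replay the proof of Theorem~\ref{thm:pOFUL2} essentially verbatim, decorating every feature-dependent quantity with the round index $t$ (reindexing $\{n\in[n_0,N]:b_n=b\}$ as $t=1,2,\dots$ as in the statement) and checking that each inequality survives the time variation. Now the confidence ellipsoid $\cC_{t-1}$, the shifted parameter ${\bf v}_{t-1}^{+}\bydef{\bf v}^\circ+V_{t-1}^{-1}\bar U^{(t)}_{1:t-1}\mathbf{E}^{(t)}_{1:t-1}$, the incoherence factor $\alpha(\bar U^{(t)})$ and the distortion $\rho_t'$ all depend on $t$. The key bookkeeping point, already anticipated in the statement, is that the passage from true to approximate regret uses only a \emph{single} global constant bounding all the $\rho_t'$, namely $\rho'=\max_t\rho_t'$, so the final telescoping is untouched.

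Concretely I would proceed as follows. \textbf{(1) Confidence set.} For each fixed $t$ I would reprove ${\bf v}_{t-1}^{+}\in\cC_{t-1}$ via the self-normalized bound of \citet{AbbPalCsa11:linbandits}; the design $V_{t-1}$ and estimate $\hat{\bf v}_{t-1}$ accumulate the \emph{play-time} features $\bar{\bf u}_{A_s}^{(s)}$, which are predictable (in the application they are computed from the disjoint, independent ESTIMATE mini-sessions available before round $s$), so conditioning on the feature matrices leaves $\{\eta_s\}$ a martingale-difference sequence and the uniform-in-$t$ concentration carries over. \textbf{(2) Critical radius, per round.} I would apply Lemma~\ref{lem:criticalepsilon} at each $t$ with $\bar U=\bar U^{(t)}$: hypothesis \eqref{eqn:ass2ext} is exactly condition \eqref{eqn:critical2}, so the (common) optimal arm $a^\star$ stays the unique maximiser of $\bar{\bf u}_a^{(t)\top}{\bf v}_{t-1}^{+}$ and every gap is at least halved, reproducing steps $(a)$ and $(b)$ of the proof of Theorem~\ref{thm:pOFUL2}. \textbf{(3) Approximate regret.} The chain \eqref{eqn:rtplus} is algebraic in the time-$t$ quantities and gives $r_t^{+}\le 2D_{t-1}\norm{\bar{\bf u}_{A_t}^{(t)}}_{V_{t-1}^{-1}}$. \textbf{(4) Summation.} Because $V_t=V_{t-1}+\bar{\bf u}_{A_t}^{(t)}\bar{\bf u}_{A_t}^{(t)\top}$ is cumulative in the play-time features, the elliptical-potential bound $\sum_t\min\{\norm{\bar{\bf u}_{A_t}^{(t)}}_{V_{t-1}^{-1}}^2,1\}$ still telescopes into $2C\log(1+TR_\cX^2/(\lambda C))$, and the Cauchy--Schwarz finish of \citet{AbbPalCsa11:linbandits} yields the stated bound with prefactor $\rho'$.

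The one step needing real care — the main obstacle — is generalising Lemma~\ref{lem:parameterbias}, which bounds $\norm{{\bf v}_{t-1}^{+}-{\bf v}^\circ}_2$ and feeds step (2). Its proof recasts the bias as a weighted least-squares problem with a \emph{single} fixed feature map and $A$ observations, then invokes \citet[Cor.~2.3]{forsgren1996linear} through $\alpha(\bar U)=\max_J\norm{\mathbf{A}_J^{-1}}_2$. With time-varying features the bias $V_{t-1}^{-1}\sum_{s<t}\epsilon^{(s)}_{A_s}\bar{\bf u}_{A_s}^{(s)}$ stacks rows $\bar{\bf u}_a^{(s)}$ drawn from \emph{different} feature matrices, so the row family of $\mathbf{A}$ is time-indexed and $\max_J\norm{\mathbf{A}_J^{-1}}_2$ now ranges over cross-time $C$-subsets that no single $\alpha(\bar U^{(s)})$ controls. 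The resolution I would use is that, by Theorem~\ref{thm:Uestimates}, every $\bar U^{(s)}$ with $s\ge n_0$ lies within the decaying RTP radius of the true $U$; hence, by the perturbation argument already used in Lemma~\ref{lem:controlalphan}, each relevant submatrix inverse stays within a constant factor of $\alpha_\star=\alpha(U)$, producing a single uniform incoherence constant $\bar\alpha\le 2\alpha_\star$ and a uniform bound $\norm{{\bf v}_{t-1}^{+}-{\bf v}^\circ}_2\le\bar\alpha\,\max_{s<t}\norm{\epsilon^{(s)}}_2$. Since the per-round condition \eqref{eqn:ass2ext} keeps each $\norm{\epsilon^{(s)}}_2$ inside its critical radius, the bias never leaves the ball required in step (2), and the remainder of the argument closes exactly as in the fixed-feature case, up to the universal constants.
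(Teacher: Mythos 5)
Your overall plan coincides with the paper's: replay the proof of Theorem~\ref{thm:pOFUL2} with a time index on every feature-dependent quantity, take $\rho'=\max_t\rho'_t$, and reuse Lemma~\ref{lem:criticalepsilon} per round. Steps (1)--(4) are essentially what the paper does. The divergence is exactly at the step you flag as the main obstacle, and your proposed resolution of it does not work.

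You assume the design matrix accumulates the \emph{play-time} features $\bar{\bf u}_{A_s}^{(s)}$, so the bias term $V_{t-1}^{-1}\sum_{s<t}\epsilon_{A_s}^{(s)}\bar{\bf u}_{A_s}^{(s)}$ mixes rows from different feature matrices, and you propose to control the resulting cross-time analogue of $\alpha$ by arguing that every row $\bar{\bf u}_a^{(s)}$ is within the RTP radius of ${\bf u}_a$, so every $C\times C$ submatrix inverse is within a factor $2$ of the corresponding one for $U$. This fails: a cross-time size-$C$ subset $J$ may contain the \emph{same} action $a$ at two different times $s\neq s'$, giving two rows $\bar{\bf u}_a^{(s)}$ and $\bar{\bf u}_a^{(s')}$ that both converge to ${\bf u}_a$. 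Such a submatrix is full-rank (so it is not excluded from $\max_J$ in \citet[Cor.~2.3]{forsgren1996linear}) but nearly singular, with $\norm{\mathbf{A}_J^{-1}}_2$ blowing up precisely as the estimates improve. So the cross-time incoherence constant is not within a constant factor of $\alpha_\star$, and the uniform bound $\norm{{\bf v}_{t-1}^{+}-{\bf v}^\circ}_2\le\bar\alpha\max_s\norm{\epsilon^{(s)}}_2$ with $\bar\alpha\le 2\alpha_\star$ does not follow from the perturbation argument of Lemma~\ref{lem:controlalphan}.

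The paper avoids the obstacle rather than resolving it: in its Lemma~\ref{lem:parameterbiasTimeVar}, the quantities $V_{t-1}$, $\bar{\bf U}_{1:t-1}^{(t)}$ and $\mathbf{E}_{1:t-1}^{(t)}$ are all built by re-evaluating every past action with the single \emph{current} feature matrix $\bar{U}^{(t)}$ (this matches Algorithm~\ref{alg:PerOFULwithExploration}, which hands \OFUL\ the one matrix $\ol U_n$ and the raw past actions and rewards at each mini-session). The weighted least-squares recasting then involves a single feature map at each $t$, Forsgren's corollary applies with the per-time constant $\alpha(\bar{U}^{(t)})$ appearing in condition \eqref{eqn:ass2ext}, and Lemma~\ref{lem:parameterbias} extends verbatim with superscripts. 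If you adopt that convention, your steps (1)--(3) go through as you describe; note, though, that your justification of step (4) (``$V_t=V_{t-1}+\bar{\bf u}_{A_t}^{(t)}\bar{\bf u}_{A_t}^{(t)\top}$ is cumulative in the play-time features'') then no longer applies as stated, since under the current-feature convention $V_t$ is recomputed from scratch each round and the determinant telescoping needs a separate (short) argument rather than the verbatim elliptical-potential identity.
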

\begin{proof}
  Let $\mathbf{M}_{1:t} = ({\bf m}_{A_1}, \ldots, {\bf m}_{A_t})^\top$. The
  argument used to prove Theorem 2 in Yadkori et al, 2011, shows that
  \begin{align*} \hat{\bf v}_{t-1} &= V_{t-1}^{-1}
    \bar{\bf U}_{1:t-1}^{(t)}\eta_{1:t-1} + V_{t-1}^{-1}
    \bar{\bf U}_{1:t-1}^{(t)} \mathbf{M}_{1:t-1} 
  \end{align*}
  where $\eta_{1:t-1} \bydef (\eta_1, \ldots, \eta_{t-1})$ is the
  observed noise sequence, and where $\bar{\bf U}_{1:t-1}^{(t)}$ is
  the matrix built from the time varying features at time $t$ and the
  action sequence thus far. Let
  $\mathbf{E}_{1:t-1}^{(t)} \bydef (\epsilon_{A_1}^{(t)}, \ldots,
  \epsilon_{A_t}^{(t)})^\top = \mathbf{M}_{1:t-1} - \bar{\bf U}_{1:t-1}^{(t)} {\bf
    v}^\circ$. We then have
  \begin{align*}
    \hat{\bf v}_{t-1} &= V_{t-1}^{-1} \bar{\bf U}_{1:t-1}^{(t)}
    \eta_{1:t-1} + V_{t-1}^{-1}
    \bar{\bf U}_{1:t-1}^{(t)} \mathbf{M}_{1:t-1} \\
    &= V_{t-1}^{-1} \bar{\bf U}_{1:t-1}^{(t)} \eta_{1:t-1} +
    V_{t-1}^{-1} \bar{\bf U}_{1:t-1}^{(t)} \left( \bar{\bf U}_{1:t-1}^{{(t)}\top}
      {\bf v}^\circ + \mathbf{E}_{1:t-1}^{(t)}\right) \\
    &= V_{t-1}^{-1} \bar{\bf U}_{1:t-1}^{(t)} \eta_{1:t-1} +
    {\bf v}^\circ - \lambda V_{t-1}^{-1} {\bf v}^\circ + V_{t-1}^{-1}
    \bar{\bf U}_{1:t-1}^{(t)} \mathbf{E}_{1:t-1}^{(t)}.
  \end{align*}

  Thus, letting ${\bf v}_{t-1}^{+} \bydef {\bf v}^\circ +
  V_{t-1}^{-1}\bar{U}_{1:t-1}^{(t)} \mathbf{E}_{1:t-1}^{(t)}$, and using the
  above with techniques from Yadkori et al together with
  $\norm{{\bf v}^\circ}_2 \leq R_\Theta$, we have that
  \begin{align*}
    {\bf v}_{t-1}^{+} \in \cC_{t-1}
  \end{align*}
  with probability at least $1-\delta$.

  Now, let $a_{t-1}^{+} \in \arg\max_{a \in \mathcal{A}} \bar{\bf u}_{a}^{{(t)}\top}
  {\bf v}_{t-1}^+$ be an optimal action corresponding to the
  approximate parameter ${\bf v}_{t-1}^+$ and approximate feature $\bar{\bf u}_{a}^{{(t)}\top}$, and define the instantaneous
  regret at time $t$ {\em with respect to the approximate parameter}
  as
  \[ r_t^+ \bydef \bar{\bf u}_{a_{t-1}^{+}}^{{(t)}\top} {\bf
    v}_{t-1}^+ - \bar{\bf u}_{A_t}^{{(t)}\top} {\bf v}_{t-1}^+ \geq
  0.\]
  We now bound this approximate regret using arguments along the lines
  of Yadkori et al, 2011 as follows. Write
  \begin{align}
    r_t^+ &= \bar{\bf u}_{a_{t-1}^{+}}^{{(t)}\top} {\bf v}_{t-1}^+ - \bar{\bf u}_{A_t}^{{(t)}\top}
            {\bf v}_{t-1}^+ \nonumber \\
          &\leq \bar{\bf u}_{A_t}^{{(t)}\top} \tilde{{\bf v}}_t - \bar{\bf u}_{A_t}^{{(t)}\top} {\bf v}_{t-1}^+
            \quad \quad \mbox{(since $(A_t, \tilde{{\bf v}}_t)$ is optimistic)}  \nonumber \\
          &= \bar{\bf u}_{A_t}^{{(t)}\top} \left( \tilde{{\bf v}}_t - {\bf v}_{t-1}^+ \right)  \nonumber \\
          &= \bar{\bf u}_{A_t}^{{(t)}\top} \left( \tilde{{\bf v}}_t - \hat{{\bf v}}_{t-1}
            \right) + \bar{\bf u}_{A_t}^{{(t)}\top} \left( \hat{{\bf v}}_{t-1} - {\bf v}_{t-1}^+
            \right)  \nonumber \\
          &\leq \norm{\bar{\bf u}_{A_t}^{(t)}}_{V_{t-1}^{-1}} \norm{ \tilde{{\bf v}}_t - \hat{{\bf v}}_{t-1}
            }_{V_{t-1}}  + \norm{\bar{\bf u}_{A_t}^{(t)}}_{V_{t-1}^{-1}} \norm{ \hat{{\bf v}}_{t-1} - {\bf v}_{t-1}^+}_{V_{t-1}} \quad \quad
            \mbox{(Cauchy-Schwarz's inequality)}  \nonumber \\
          &\leq 2 {D_{t-1}} \norm{\bar{\bf u}_{A_t}^{(t)}}_{V_{t-1}^{-1}}. \label{eqn:rtplusext}
  \end{align}

  Noting that $m_a \in [-1,1]$ $\forall a$, the regret can be
  written as
  \begin{align*}
    R_T &= \sum_{t=1}^T \left( m_{a^\star} - m_{A_t} \right)
          = \sum_{t=1}^T \min\{ m_{a^\star} - m_{A_t}, 2 \} \\
        &= \rho' \sum_{a \neq a^\star} \sum_{t=1}^T \min \left\{ \frac{m_{a^\star} - m_{a}}{\rho'}, \frac{2}{\rho'} \right\} \indic{A_t = a}  \\
        &\leq \rho' \sum_{a \neq a^\star} \sum_{t=1}^T \min \left\{ \bar{\bf u}_{a^\star}^{{(t)}\top}
          {\bf v}^\circ - \bar{\bf u}_{a}^{{(t)}\top} {\bf v}^\circ , \frac{2}{\rho'} \right\}  \indic{A_t = a} \quad \mbox{(using the definition of $\rho'$)} \\
        &\stackrel{(a)}{\leq} \rho' \sum_{t=1}^T \min \left\{ 2\left( \bar{\bf u}_{a^\star}^{{(t)}\top}
          {\bf v}_{t-1}^+ - \bar{\bf u}_{A_t}^{{(t)}\top} {\bf v}_{t-1}^+ \right), \frac{2}{\rho'} \right\} \stackrel{(b)}{=} 2 \rho' \sum_{t=1}^T \min \left\{ \bar{\bf u}_{a_{t-1}^+}^{{(t)}\top}
          {\bf v}_{t-1}^+ - \bar{\bf u}_{A_t}^{{(t)}\top} {\bf v}_{t-1}^+ , \frac{1}{\rho'} \right\} \\
        &= 2 \rho' \sum_{t=1}^T \min \left\{ r_t^+ , \frac{1}{\rho'} \right\} = \rho' \sum_{t=1}^T \frac{2}{\rho'}
          \min \left\{ {\rho' r_t^+} , 1 \right\} \stackrel{(c)}{\leq}  \rho' \sum_{t=1}^T \frac{2}{\rho'}
          \min \left\{ 2 \rho' {D_{t-1}} \norm{\bar{\bf u}_{A_t}^{(t)}}_{V_{t-1}^{-1}} , 1 \right\} \\
        &\stackrel{(d)}{\leq} \rho' \sum_{t=1}^T {4} D_{t-1}
          \min \left\{\norm{\bar{\bf u}_{A_t}^{(t)}}_{V_{t-1}^{-1}} , 1 \right\} \\
        &\leq  \rho' \sqrt{ T \sum_{t=1}^T 16
          {D_{T}}^2  \min \left\{ \norm{\bar{\bf u}_{A_t}^{(t)}}_{V_{t-1}^{-1}}^2 , 1 \right\} } \quad \mbox{(by using Cauchy-Schwarz's inequality)}.
  \end{align*}
  In the derivation above, 
  \begin{itemize}
  \item Steps $(a)$ and $(b)$ hold because of the following. By Lemma
    \ref{lem:parameterbiasTimeVar} (to follow below),
    $\norm{{\bf v}_{t-1}^+ - {\bf v}^\circ}_2 =
    \norm{V_{t-1}^{-1}\bar{\bf U}_{1:t-1}^{(t)}
      \mathbf{E}_{1:t-1}^{(t)}}_2 \leq \alpha(\bar U_t)
    \norm{\epsilon^{(t)}}_2$.
    Since
    $\arg \max_{a \in \cA} \; \bar{\bf u}_{a}^{{(t)}\top} {\bf
      v}^\circ$
    is uniquely $a^\star$ by hypothesis, we have, thanks to Lemma
    \ref{lem:criticalepsilon}, that
    $\bar{\bf u}_{a^\star}^{{(t)}\top} {\bf v}_{t-1}^+ - \bar{\bf
      u}_{a}^{{(t)}\top} {\bf v}_{t-1}^+ > \frac{\bar{\bf
        u}_{a^\star}^{{(t)}\top} {\bf v}^\circ - \bar{\bf
        u}_{a}^{{(t)}\top} {\bf v}^\circ}{2} > 0$
    $\forall a \neq a^\star$, establishing $(a)$. This in turn shows
    that the optimal action for ${\bf v}_{t-1}^+$ is uniquely
    $a^\star$ at all times $t$, i.e.,
    $a_{t-1}^{+} = \arg\max_{a \in \mathcal{A}} \bar{\bf u}_{a}^{{(t)}\top}
    {\bf v}_{t-1}^+ = a^\star$, which is precisely equality $(b)$.

  \item {\em Remark.} In the above, Lemma~ \ref{lem:criticalepsilon} is written for
    generic $\bar{\bf u}_{a}$, $\epsilon$, so in particular applies to
    each time varying $\bar{\bf u}_{a}^{(t)}$, $\epsilon^{(t)}$. We
    also used an extended version of Lemma~ \ref{lem:parameterbias} to
    the case of varying $\bar{\bf u}_{a}^{(t)}$, $\epsilon^{(t)}$,
    which we state and prove below as Lemma~
    \ref{lem:parameterbiasTimeVar}.

  \item Inequality $(c)$ holds by (\ref{eqn:rtplusext}) and
  $(d)$ holds because $\rho' \geq 1$ by definition,
    and $D_{t-1} \geq \lambda^{1/2}R_{\Theta} \geq 1/2$ by hypothesis,
    implying that $2 \rho' D_{t-1} \geq 1$.
  \end{itemize}

  The argument from here can be continued in the same way as in
  \citet[proof of Theorem 3]{AbbPalCsa11:linbandits} to yield
  \begin{align*}
    R_T \leq 8 \rho' \sqrt{TC \log\left(1 +
    \frac{TR_\cX^2}{\lambda C}\right)} \left(\lambda^{1/2}R_\Theta +
    R\sqrt{2 \log\frac{1}{\delta} + C \log
    \left(1+\frac{TR_\cX^2}{\lambda C}\right)} \right).
  \end{align*}

This proves the theorem.

\end{proof}

\begin{mylemma}[Extension of Lemma~\ref{lem:parameterbias} to
  time-varying feature sets]
    \label{lem:parameterbiasTimeVar}
    Let
    $\epsilon_a^{(t)} = m_a - \bar{\bf u}_a^{{(t)}\top} {\bf v}^\circ$
    be the bias in arm $a$'s reward due to model error, with respect
    to the features ${\bar U}_t$, and let
    $\epsilon^{(t)} \equiv \left(\epsilon_a^{(t)}\right)_{a \in
      \mathcal{A}}$. Then, we have
    \[ \norm{V_{t-1}^{-1}\bar{\bf U}_{1:t-1}^{(t)}
      \mathbf{E}_{1:t-1}^{(t)}}_2 \leq \left( \max_{J}
      \norm{\mathbf{A}_J^{{(t)}-1}}_2 \right) \norm{\epsilon^{(t)}}_2, \]
    where $\mathbf{A}_{(A+C) \times C}^{(t)} = \left[ \begin{array}{c}
                                                        \bar{U}^{(t)} \\
                                                        I_{d} \end{array}
  \right]$,
  $\mathbf{A}_J^{(t)}$ is the $C \times C$ submatrix of
  $\mathbf{A}^{(t)}$ consisting of rows in $J$, and $J$ ranges over
  all subsets of full-rank rows of $\mathbf{A}^{(t)}$.
  \end{mylemma}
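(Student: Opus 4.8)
The plan is to observe that, even though the features vary across the run of OFUL, the quantity $V_{t-1}^{-1}\bar{\bf U}_{1:t-1}^{(t)} \mathbf{E}_{1:t-1}^{(t)}$ we must bound is, for each \emph{fixed} $t$, assembled entirely from a single feature matrix, namely $\bar U^{(t)}$ (the features evaluated at time $t$). Indeed, in the time-varying algorithm $V_{t-1} = \lambda I + \sum_{s=1}^{t-1}\bar{\bf u}_{A_s}^{(t)}\bar{\bf u}_{A_s}^{{(t)}\top}$ and the design $\bar{\bf U}_{1:t-1}^{(t)} = [\bar{\bf u}_{A_1}^{(t)}\ \cdots\ \bar{\bf u}_{A_{t-1}}^{(t)}]$ are both formed from the \emph{same} collection $\{\bar{\bf u}_a^{(t)}\}_a$; only the played indices $A_1,\dots,A_{t-1}$ are inherited from the past. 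Hence there is no discrepancy between the features entering $V_{t-1}$ and those in $\bar{\bf U}_{1:t-1}^{(t)}$, and the entire algebraic reduction of Lemma~\ref{lem:parameterbias} transfers verbatim under the substitution $\bar U\mapsto \bar U^{(t)}$, $\epsilon\mapsto\epsilon^{(t)}$.

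Concretely, I would first write, with $z_{t-1} \bydef V_{t-1}^{-1}\bar{\bf U}_{1:t-1}^{(t)} \mathbf{E}_{1:t-1}^{(t)}$,
\[
z_{t-1} = \Big(\sum_{s=1}^{t-1}\bar{\bf u}_{A_s}^{(t)}\bar{\bf u}_{A_s}^{{(t)}\top} + \lambda I\Big)^{-1}\sum_{s=1}^{t-1}\epsilon_{A_s}^{(t)}\bar{\bf u}_{A_s}^{(t)},
\]
and then collect terms by action index, introducing the empirical play frequencies $f_a(t-1) = \tfrac{1}{t-1}\sum_{s=1}^{t-1}\indic{A_s=a}$. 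Exactly as in the static case, this reinterprets $z_{t-1}$ as the solution of a \emph{weighted}, $\ell^2$-regularized least-squares problem over the $A$ distinct features $\{\bar{\bf u}_a^{(t)}\}_{a\in\cA}$, with weight matrix $\mathbf{D}^{1/2} = \mathrm{diag}\big(\mathbf{F}^{1/2},\ \sqrt{\tfrac{\lambda}{t-1}}\,I_C\big)$ and design matrix $\mathbf{A}^{(t)}$ as in the lemma statement, yielding $z_{t-1} = (\mathbf{A}^{{(t)}\top}\mathbf{D}\mathbf{A}^{(t)})^{-1}\mathbf{A}^{{(t)}\top}\mathbf{D}\,\mathbf{b}$ with the right-hand side $\mathbf{b}$ obtained by stacking $\epsilon^{(t)}$ over a block of zeros.

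The final step is to invoke Corollary 2.3 of \citet{forsgren1996linear} with this design and these weights, giving $\norm{(\mathbf{A}^{{(t)}\top}\mathbf{D}\mathbf{A}^{(t)})^{-1}\mathbf{A}^{{(t)}\top}\mathbf{D}}_2 \leq \max_J \norm{(\mathbf{A}_J^{(t)})^{-1}}_2$, the maximum ranging over size-$C$ subsets $J$ of full-rank rows of $\mathbf{A}^{(t)}$. Since $\mathbf{A}^{(t)}$ has full column rank $C$ (owing to the appended $I_C$ block) and $\mathbf{A}^{{(t)}\top}\mathbf{D}\mathbf{A}^{(t)}$ is positive definite for every fixed $t$, the corollary applies unchanged, and bounding $\norm{\mathbf{b}}_2 = \norm{\epsilon^{(t)}}_2$ concludes the argument. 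I do not expect a genuine obstacle here: the only point deserving emphasis is that fixing $t$ collapses the time-varying feature family to a single matrix, so no new concentration or perturbation argument is required beyond the deterministic linear-algebra fact already used in Lemma~\ref{lem:parameterbias}.
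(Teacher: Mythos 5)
Your proposal is correct and follows essentially the same route as the paper's own proof: fixing $t$ collapses everything to the single feature matrix $\bar U^{(t)}$, the sum is regrouped by empirical play frequencies $f_a(t-1)$ into a weighted $\ell^2$-regularized least-squares problem with design $\mathbf{A}^{(t)}$, and Corollary 2.3 of \citet{forsgren1996linear} gives the bound. The paper's proof of Lemma~\ref{lem:parameterbiasTimeVar} is a verbatim transfer of the static argument under exactly the substitution you describe, so there is nothing to add.
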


  \begin{proof}[Proof of Lemma \ref{lem:parameterbiasTimeVar}]
    Let
    $z_{t-1}^{(t)} \bydef V_{t-1}^{-1}\bar{\bf U}_{1:t-1}^{(t)}
    \mathbf{E}_{1:t-1}^{(t)} = {\bf v}_{t-1}^{+} - {\bf v}^\circ
    \in \mathbb{R}^C$,
    thus
    $\norm{\mathbf{E}_{1:t-1}^{(t)}}_\infty \leq
    \norm{\epsilon^{(t)}}_\infty = \norm{{\bf m} - \bar{U}^{(t)} {\bf
        v}^\circ}_\infty$. We now write
  \begin{align*}
    z_{t-1}^{(t)} &= \left( \sum_{s=1}^{t-1} \bar{\bf u}_{A_s}^{(t)} \bar{\bf u}_{A_s}^{{(t)}\top} + \lambda
      I \right)^{-1} \sum_{s=1}^{t-1} \epsilon_{A_s}^{(t)} \bar{\bf u}_{A_s}^{(t)} \\
    &= \left( \frac{1}{t-1}\sum_{s=1}^{t-1} \bar{\bf u}_{A_s}^{(t)} \bar{\bf u}_{A_s}^{{(t)}\top} +
      \frac{\lambda}{t-1} I \right)^{-1} \frac{1}{t-1}\sum_{s=1}^{t-1}
    \epsilon_{A_s}^{(t)} \bar{\bf u}_{A_s}^{(t)} \\
    &= \left( \sum_{a \in \mathcal{A}} \bar{\bf u}_{a}^{(t)} \bar{\bf u}_{a}^{{(t)}\top}
      \frac{\sum_{s=1}^{t-1} \indic{A_s = a}}{t-1} +
      \frac{\lambda}{t-1} I \right)^{-1} \sum_{a \in \mathcal{A}}
    \epsilon_{a}^{(t)} \bar{\bf u}_{a}^{(t)} \frac{\sum_{s=1}^{t-1} \indic{A_s = a}}{t-1} \\
    &= \left( \sum_{a \in \mathcal{A}} \bar{\bf u}_{a}^{(t)} \bar{\bf u}_{a}^{{(t)}\top} f_a(t-1) +
      \frac{\lambda}{t-1}I \right)^{-1} \sum_{a \in \mathcal{A}}
    \epsilon_{a}^{(t)} \bar{\bf u}_{a}^{(t)} f_a(t-1),
  \end{align*}
  where $f_a(t-1)$ is the empirical frequency with which action
  $a \in \mathcal{A}$ has been played up to and including time
  $t-1$. This allows us to equivalently interpret $z_{t-1}$ as the
  solution of a {\em weighted} $\ell^2$-regularized least squares
  regression problem with $K = |\mathcal{A}|$ observations (instead of
  the original interpretation with $t-1$ observations) as follows (we
  suppress the dependence of $f_a$ on $t$ as per the context for
  clarity of notation).

  Let $\mathbf{F}^{1/2}$ be the $A \times A$ diagonal matrix with the
  values $\sqrt{f_1}, \ldots, \sqrt{f_A}$ on the diagonal (note:
  $\sum_{a=1}^A f_a = 1$). With this, we can express $z_{t-1}$ as
  \begin{align*}
    z_{t-1}^{(t)} &= \arg\min_{z \in \mathbb{R}^C} \norm{\mathbf{F}^{1/2}
      \bar{U}^{(t)} z - \mathbf{F}^{1/2} \epsilon^{(t)}}_2^2 +
    \frac{\lambda}{t-1}\norm{z}_2^2 \\
    &= \arg\min_{z \in \mathbb{R}^C} \norm{\mathbf{F}^{1/2}\left(
        \bar{U}^{(t)} z - \epsilon^{(t)} \right) }_2^2 +
    \frac{\lambda}{t-1}\norm{z}_2^2 \\
    &= \arg\min_{z \in \mathbb{R}^C} \norm{ \left[ \begin{array}{cc}
          \mathbf{F}^{1/2} & 0 \\
          0 & \sqrt{\frac{\lambda}{t-1}} I_{C} \end{array} \right]
      \left( \left[ \begin{array}{c}
            \bar{U}^{(t)}\\
            I_{C} \end{array} \right] z - \left[ \begin{array}{c}
            \epsilon^{(t)} \\
            0 \end{array} \right] \right) }_2^2 \\
    &\equiv \arg\min_{z \in \mathbb{R}^C} \norm{\mathbf{D}^{1/2}
      \left( \mathbf{A} z - \mathbf{b} \right) }_2^2 =
    (\mathbf{A}^\top \mathbf{D} \mathbf{A})^{-1} \mathbf{A}^\top
    \mathbf{D} \mathbf{b},
  \end{align*}
  with $\mathbf{D}^{1/2}$ being a $(A+C) \times (A+C)$ diagonal \&
  positive semidefinite matrix,
  $\mathbf{A}^\top \mathbf{D} \mathbf{A} = \sum_{a \in \mathcal{A}}
  \bar{\bf u}_{a}^{(t)} \bar{\bf u}_{a}^{{(t)}\top} f_a(t-1) +
  \frac{\lambda}{t-1}I$
  being positive definite, and $\mathbf{A}$ having full column rank
  $C$. A result of \citet[Corollary 2.3]{forsgren1996linear} now gives
  \begin{align*}
    \norm{(\mathbf{A}^\top \mathbf{D} \mathbf{A})^{-1} \mathbf{A}^\top
      \mathbf{D}}_2 &\leq \max_{J} \norm{\mathbf{A}_J^{-1}}_2
  \end{align*}
  where $J$ ranges over all subsets of full-rank rows of $\mathbf{A}$,
  and $\mathbf{A}_J$ is the $C \times C$ submatrix of $\mathbf{A}$
  formed by picking rows $J$. Thus, $\norm{z_{t-1}^{(t)}}_2 \leq \left(
    \max_{J} \norm{\mathbf{A}_J^{-1}}_2 \right)
  \norm{\epsilon^{(t)}}_2$. This proves the lemma.
  \end{proof}

\section{Unregularized Least squares}\label{sec:Unregularized}

In our setting where we consider finitely many arms, one way wonder
whether it is possible to remove the regularization parameter
$\lambda$. Following \citet{rusmevichientong2010linearly}, this is
indeed possible under the assumption that the minimum eigenvalue of
$\sum_{a\in\cA} {\bf u}_a{\bf u}_a^\top$ is away from $0$. Then, we first play each arm once (once for all users $B$, not for
each of them) before running
Algorithm~\ref{alg:PerOFULwithExploration}, where \OFUL\ is used with
$\lambda =0$ and with $D_{t-1}$ redefined to be
$4R^2\bigg(A\log(t)+\log(A/\delta)\bigg)$.  This leads essentially to
similar bounds, with $\alpha^\star$ replaced by
$\max_J || U_J^{-1}||_2$, as we show below.

Let $\cU\subset \Real^C$.
We receive at time $s$, observation $y_s = {\bf u}_s^\top {\bf v}^\star + \eta_s \in \Real$ where ${\bf v}^\star\in\Real^C$ and ${\bf u}_s\in\cU$.

We make the following
\begin{assumption}\label{ass:main} 
There exists $R_\cX,R,\lambda_0 \in \Real^+_\star$ such that
\begin{enumerate}
\item $\forall s, ||{\bf u}_s|| \leq R_\cX$
\item $\forall \lambda \in\Real,\,\,\log\Esp\exp(\lambda \eta_s) \leq \lambda^2R^2/2$.
\item $\lambda_{\min}(\sum_{s=1}^t {\bf u}_s{\bf u}_s^\top) \geq \lambda_0$.
\end{enumerate}
\end{assumption}

Assumption~\ref{ass:main}.3 is satisfied for instance
when there are $C$ points $({\bf u}_{0,i})_{i\in [C]}$ in $\Real^d$ such that $\lambda_{\min}(\sum_{i=1}^C {\bf u}_{0,i}{\bf u}_{0,i}^\top)=\lambda_0>0$,
and ${\bf u}_s={\bf u}_{0,s}$ for $s\in[C]$. We consider the least-squares estimate
\beqan
{\bf v}_t = \Big(\sum_{s=1}^t {\bf u}_s{\bf u}_s^\top\Big)^{-1}\sum_{s=1}^\top {\bf u}_sy_s\,,
\eeqan

\subsection{Preliminary}
In case $\cU$ is finite, one can get the following result
\begin{mytheorem}\label{thm:finitecase}
Let us introduce the confidence set 
\beqan
\cC_t = \bigg\{ w\in \Real^C : w^\top G_t w \leq D_{t,\delta}\bigg\}\,,
\text{ where } G_t = \sum_{s=1}^t {\bf u}_s{\bf u}_s^\top
\eeqan
\beqan
\text{ and }\quad D_{t,\delta} =  4R^2\Big(|\cU|\log(t) +\log(|\cU|/\delta)\Big)\,.
\eeqan

Then, under Assumption~\ref{ass:main}, it holds
\beqan
\Pr\bigg({\bf v}_t -{\bf v}^\star \in \cC_t\bigg) \geq 1-\delta\,.
\eeqan
\end{mytheorem}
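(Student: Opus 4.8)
The plan is to turn the two-sided self-normalized quantity $({\bf v}_t-{\bf v}^\star)^\top G_t({\bf v}_t-{\bf v}^\star)$ into a single scalar self-normalized martingale that can be controlled by the method of mixtures, using the finiteness of $\cU$ to avoid any regularization. First I would plug $y_s = {\bf u}_s^\top{\bf v}^\star+\eta_s$ into the least-squares estimate to get the exact identity ${\bf v}_t-{\bf v}^\star = G_t^{-1}S_t$ with $S_t \bydef \sum_{s=1}^t {\bf u}_s\eta_s$, so that
\[ ({\bf v}_t-{\bf v}^\star)^\top G_t ({\bf v}_t-{\bf v}^\star) = S_t^\top G_t^{-1} S_t . \]
By Assumption~\ref{ass:main}.3 the matrix $G_t$ is invertible once $t$ is large enough, so this is well defined, and the goal becomes showing $S_t^\top G_t^{-1}S_t \le D_{t,\delta}$ with probability at least $1-\delta$.

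The conceptual heart, where finiteness is exploited, is to read $S_t^\top G_t^{-1}S_t$ geometrically. Writing $\mathbf{U}_{1:t}$ for the $t\times C$ design matrix with rows ${\bf u}_s^\top$ and $\eta_{1:t}\bydef(\eta_1,\dots,\eta_t)^\top$, one has $G_t=\mathbf{U}_{1:t}^\top\mathbf{U}_{1:t}$, $S_t=\mathbf{U}_{1:t}^\top\eta_{1:t}$, whence $S_t^\top G_t^{-1}S_t=\norm{\Pi\,\eta_{1:t}}_2^2$ with $\Pi$ the orthogonal projection onto the column space of $\mathbf{U}_{1:t}$. Because the rows ${\bf u}_s^\top$ take only $|\cU|$ distinct values, this column space is contained in the $|\cU|$-dimensional ``partition'' subspace $V$ of vectors that are constant on each block $T_a\bydef\{s\le t:{\bf u}_s={\bf u}_a\}$. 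Since the range of $\Pi$ lies inside $V$, I would use $\norm{\Pi\,\eta_{1:t}}_2\le\norm{\Pi_V\,\eta_{1:t}}_2$, and $\Pi_V$ simply replaces each block by its mean, so that with $N_a(t)=|T_a|$ and $P_a(t)=\sum_{s\in T_a}\eta_s$,
\[ S_t^\top G_t^{-1} S_t \;\le\; \norm{\Pi_V\,\eta_{1:t}}_2^2 \;=\; \sum_{a\in\cU} N_a(t)\Big(\tfrac{P_a(t)}{N_a(t)}\Big)^2 \;=\; \sum_{a\in\cU}\frac{P_a(t)^2}{N_a(t)} . \]

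It then remains to bound $\sum_a P_a(t)^2/N_a(t)$. Rather than a union bound over arms (which would cost an extra $|\cU|$ factor on $\log(1/\delta)$), I would apply the self-normalized bound of \citet{AbbPalCsa11:linbandits} to the auxiliary $|\cU|$-dimensional martingale $\tilde S_t=\sum_{s\le t}\mathbf{e}_{a_s}\eta_s=(P_a(t))_{a\in\cU}$, whose features are the predictable standard basis vectors and whose (regularized, $\lambda=1$) Gram matrix is the diagonal $\mathrm{diag}(1+N_a(t))$. This yields, with probability at least $1-\delta$, $\sum_a \frac{P_a(t)^2}{1+N_a(t)}\le 2R^2\big(\tfrac12\sum_a\log(1+N_a(t))+\log\tfrac1\delta\big)$. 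Combining this with $N_a\ge 1\Rightarrow \tfrac1{N_a}\le\tfrac2{1+N_a}$ and $\sum_a\log(1+N_a)\le |\cU|\log(1+t)$ gives $\sum_a P_a(t)^2/N_a(t)\le 2R^2|\cU|\log(1+t)+4R^2\log(1/\delta)\le D_{t,\delta}$, with room to spare (the stated $4R^2$ coefficient and the $\log(|\cU|/\delta)\ge\log(1/\delta)$ absorb the factor-two losses). The main obstacle is the adaptivity: the arm choices, the counts $N_a(t)$, and hence both the projection subspace $V$ and the auxiliary Gram matrix are random and depend on the past noise, so a fixed-design chi-square argument is invalid; invoking the method-of-mixtures inequality is precisely what licenses a uniform-in-time, anytime-valid control, and recognizing the partition-subspace reduction of the previous paragraph is what keeps the dimension at $|\cU|$ while removing the need for regularization in the original problem.
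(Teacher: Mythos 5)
Your proof is correct and reaches the stated bound, but it takes a genuinely different route from the paper's. The paper's argument consists of the same algebraic reduction to $z_t^\top G_t^{-1} z_t$ (your $S_t^\top G_t^{-1}S_t$) followed immediately by a citation of Theorem B.1 of \citet{rusmevichientong2010linearly}, whose tail $|\cU|\,t^{|\cU|}e^{-\epsilon^2/4}$ arises from a union bound over the arms and over the possible count configurations; choosing $\epsilon$ then yields $D_{t,\delta}$. You instead make the finite-arm reduction explicit and self-contained: the projection identity $S_t^\top G_t^{-1}S_t=\norm{\Pi\,\eta_{1:t}}_2^2$, together with the observation that the column space of the design matrix sits inside the $|\cU|$-dimensional block-constant subspace, cleanly gives $S_t^\top G_t^{-1}S_t\le\sum_{a}P_a(t)^2/N_a(t)$, and you then control this sum with the self-normalized (method-of-mixtures) inequality of \citet{AbbPalCsa11:linbandits} applied to the auxiliary martingale $(P_a(t))_{a}$ with unit regularization, rather than with a union bound over counts. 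What your route buys: it is anytime-valid, it pays only $\log(1/\delta)$ rather than $\log(|\cU|/\delta)$, and it handles the adaptivity of the arm sequence without enumerating configurations; the price is a factor $2$ from $1/N_a\le 2/(1+N_a)$, which is indeed absorbed since $2|\cU|\log(1+t)\le 4|\cU|\log t$ for $t\ge 2$ (the case $t=1$ is degenerate anyway, as $G_1$ has rank one). Two minor points to make explicit: the block sum should be restricted to arms with $N_a(t)\ge 1$ (empty blocks contribute nothing), and the self-normalized bound requires reading Assumption~\ref{ass:main}.2 as conditional sub-Gaussianity with respect to the filtration of the past --- an assumption the paper's citation of Rusmevichientong--Tsitsiklis implicitly needs as well.
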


In the general case, it holds
\begin{mytheorem}\label{thm:generalcase}
Let us introduce the confidence set 
\beqan
\cC_t = \bigg\{ w\in \Real^C : w^\top G_t w \leq D_{t,\delta}\bigg\}\,,
\text{ where } G_t = \sum_{s=1}^t {\bf u}_s{\bf u}_s^\top
\eeqan
\beqan
\text{ and }\quad D_{t,\delta} =  16R^2\bigg[1+\log\left(1+\frac{36R_\cX^2}{\lambda_0}\right)\bigg]\bigg[C\log\left(\frac{36R_\cX^2}{\lambda_0}t\right)+\log(1/\delta) \bigg]\log(t)\,.
\eeqan

Then, under Assumption~\ref{ass:main}, and if $t\geq \frac{\lambda_0}{12 R_\cX^2}$ it holds
\beqan
\Pr\bigg({\bf v}_t -{\bf v}^\star \in \cC_t\bigg) \geq 1-\delta\,.
\eeqan
\end{mytheorem}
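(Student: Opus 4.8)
The plan is to reduce the statement to a self-normalized deviation inequality for a vector martingale and then to establish that inequality by a covering-plus-peeling argument in the style of \citet{rusmevichientong2010linearly}. First I would write the least-squares error in closed form. Since $y_s={\bf u}_s^\top{\bf v}^\star+\eta_s$ and $G_t=\sum_{s=1}^t{\bf u}_s{\bf u}_s^\top$ is invertible by Assumption~\ref{ass:main}.3 (which gives $\lambda_{\min}(G_t)\geq\lambda_0>0$), a direct substitution gives ${\bf v}_t-{\bf v}^\star=G_t^{-1}S_t$ with $S_t\eqdef\sum_{s=1}^t{\bf u}_s\eta_s$. Consequently $({\bf v}_t-{\bf v}^\star)^\top G_t({\bf v}_t-{\bf v}^\star)=S_t^\top G_t^{-1}S_t=\norm{S_t}_{G_t^{-1}}^2$, so the event $\{{\bf v}_t-{\bf v}^\star\in\cC_t\}$ is exactly $\{\norm{S_t}_{G_t^{-1}}^2\leq D_{t,\delta}\}$, and it remains to bound $\norm{S_t}_{G_t^{-1}}^2$ with probability at least $1-\delta$.

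The second step is to expose the self-normalized quantity variationally as $\norm{S_t}_{G_t^{-1}}^2=\sup_{\theta\neq 0}(\theta^\top S_t)^2/\norm{\theta}_{G_t}^2$ (the supremum being attained at $\theta=G_t^{-1}S_t$), and to restrict to the unit sphere $\mathbb{S}^{C-1}$. For a fixed direction $\theta$ the scalar process $\theta^\top S_t=\sum_{s=1}^t(\theta^\top{\bf u}_s)\eta_s$ is a martingale with conditionally $R$-sub-Gaussian increments whose predictable variance proxy is exactly $R^2\norm{\theta}_{G_t}^2$. I would cover the sphere by an $\epsilon$-net $\mathcal{N}_\epsilon$ with $|\mathcal{N}_\epsilon|\leq(1+2/\epsilon)^C$ (this covering is where the $C\log(\cdot)$ dependence, and in the finite case of Theorem~\ref{thm:finitecase} the factor $|\cU|$, originates), choosing the net radius $\epsilon$ as a small multiple of $\lambda_0/R_\cX^2$ so that the discretization error in passing from the net back to the full sphere can be absorbed using $\lambda_{\min}(G_t)\geq\lambda_0$ and $\norm{{\bf u}_s}\leq R_\cX$.

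For each net point the normalizer $\norm{\theta}_{G_t}^2$ is random and lies in $[\lambda_0,tR_\cX^2]$ (lower bound from Assumption~\ref{ass:main}.3, upper bound from $\lambda_{\max}(G_t)\leq\operatorname{tr}(G_t)\leq tR_\cX^2$). To handle this random self-normalization I would peel: partition the admissible range of $\norm{\theta}_{G_t}^2$ into geometric slabs of ratio controlled by $R_\cX^2/\lambda_0$, and separately peel time into dyadic blocks $[2^j,2^{j+1})$. On each (direction, slab, time-block) cell an Azuma--Hoeffding bound for $\theta^\top S_t$ yields $(\theta^\top S_t)^2\lesssim R^2\norm{\theta}_{G_t}^2(\text{log terms})$; the number of magnitude slabs contributes the factor $1+\log(1+36R_\cX^2/\lambda_0)$, the dyadic time peeling contributes the $\log(t)$ factor, and the covering together with the slab count contributes $C\log(36R_\cX^2 t/\lambda_0)+\log(1/\delta)$ after splitting $\delta$ across cells by a union bound. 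The hypothesis $t\geq\lambda_0/(12R_\cX^2)$ serves exactly to guarantee that these logarithms are well defined and positive and that at least one peeling cell is active, which is what fixes the numerical constants ($12$, $36$) in $D_{t,\delta}$. Collecting the cells and transferring from the net to the sphere then yields $\norm{S_t}_{G_t^{-1}}^2\leq D_{t,\delta}$.

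The main obstacle is precisely the self-normalization: because $G_t$ is data-dependent and random, one cannot apply a fixed-variance concentration bound directly, and a naive union bound over the continuum of directions diverges. The peeling device over the magnitude of $\norm{\theta}_{G_t}^2$, combined with the net, is what makes the argument go through, but it must be carried out so that (a) the number of active magnitude scales stays logarithmic in $R_\cX^2/\lambda_0$, (b) the bound is uniform in $t$ (hence the extra $\log t$, which is the price this peeling approach pays relative to the method-of-mixtures bound available for regularized \OFUL), and (c) the net-to-sphere discretization error is controlled by $\lambda_0$ without inflating the constants. Tracking all of these simultaneously so as to land exactly on the stated $D_{t,\delta}$ is the delicate part; conceptually, Theorem~\ref{thm:finitecase} is the special case in which the net is replaced by the finite action set $\cU$ and no continuous transfer is needed, so the general case is obtained by layering the covering argument on top of the finite-case peeling.
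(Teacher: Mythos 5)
Your proposal is correct and follows essentially the same route as the paper: both start from the identity $({\bf v}_t-{\bf v}^\star)^\top G_t({\bf v}_t-{\bf v}^\star)=z_t^\top G_t^{-1}z_t$ with $z_t=\sum_{s=1}^t{\bf u}_s\eta_s$ and then control this self-normalized quadratic form by exactly the covering-plus-peeling tail bound from the proof of Theorem B.2 of \citet{rusmevichientong2010linearly}, which the paper simply cites in the form $\Pr\big(z_t^\top G_t^{-1}z_t\geq\epsilon^2k_0^2R^2\log t\big)\leq(36R_\cX^2t/\lambda_0)^Ce^{-\epsilon^2/4}$ with $k_0=2\sqrt{1+\log(1+36R_\cX^2/\lambda_0)}$ and which you re-derive in outline. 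The only small discrepancy is your reading of the hypothesis $t\geq\lambda_0/(12R_\cX^2)$: in the paper it serves to guarantee that the chosen $\epsilon=2\sqrt{\log\big((36R_\cX^2t/\lambda_0)^C/\delta\big)}$ exceeds $2$, as required for the cited inequality to apply, rather than to activate a peeling cell.
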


{\bf Proof:}
Indeed, let $z_t = \sum_{s=1}^\top {\bf u}_s\eta_s$.
Since $G_t$ is invertible, it holds that ${\bf v}_t = {\bf v}_\star + G_t^{-1}z_t$, and thus
\beqan
({\bf v}_t -{\bf v}^\star)^\top G_t ({\bf v}_t -{\bf v}^\star)
&=& z_tG_t^{-1}z_t
\eeqan
In the case when $\cU$ is finite, 
using the Proof of Theorem B.1 in \citet{rusmevichientong2010linearly}
then we further get 
for all $\epsilon>0$, 
\beqan
\Pr\bigg(z_tG_t^{-1}z_t \geq \epsilon^2 R^2 \bigg) 
\leq |\cU|t^{|\cU|}e^{-\epsilon^2/4}\,,
\eeqan
Thus, choosing $\epsilon = 2\sqrt{\log(|\cU|t^{|\cU|}/\delta)} $, we obtain that
\beqan
\Pr\bigg(z_tG_t^{-1}z_t \geq 4R^2\Big(|\cU|\log(t) +\log(|\cU|/\delta)\Big)\bigg) 
\leq \delta\,,
\eeqan 
which concludes the proof of Theorem~\ref{thm:finitecase}.

From the Proof of Theorem B.2 in \citet{rusmevichientong2010linearly}, it holds that
for all $\epsilon>2$,
\beqan
\Pr\bigg(z_tG_t^{-1}z_t \geq \epsilon^2 k_0^2R^2 \log(t)\bigg) 
\leq \big(36R_\cX^2 t / \lambda_0\big)^Ce^{-\epsilon^2/4}\,,
\eeqan
where $k_0=2\sqrt{1+\log(1+36R_\cX^2/\lambda_0)}$, which leads to
\beqan
\Pr\bigg(z_tG_t^{-1}z_t \geq  4\big(1+\log(1+36R_\cX^2/\lambda_0))R^2 \log(t) \epsilon^2\bigg) 
\leq \big(36R_\cX^2 t /\lambda_0\big)^Ce^{-\epsilon^2/4}\,,
\eeqan
Thus, let us use $\epsilon = 2\sqrt{\log\big(\big(36R_\cX^2 t /\lambda_0\big)^C/\delta\big)}$, 
which satisfies $\epsilon>2$ as soon as
$t>\frac{\lambda_0 e^{1/C}}{36 R_\cX^2}$, thus in particular
if $t\geq \frac{\lambda_0}{12 R_\cX^2}$.
Now, introducing the constant $c= 36R_\cX^2/\lambda_0$, we obtain
\beqan
\Pr\bigg(z_tG_t^{-1}z_t \geq  16R^2(1+\log(1+c))\log(t) \Big(C\log(ct)+\log(1/\delta)\Big)\bigg) 
\leq \delta\,,
\eeqan
which concludes the proof of theorem~\ref{thm:generalcase}.
 $\square$

 \subsection{Application to Low-Rank bandits}
 
 In order to apply this result to the low-rank bandit problem,
 we need to show that 
 $G_t$ is invertible.
 In our case, this matrix is at mini-session $n$ 
 $\tilde M_t = \sum_{s=1}^t \tilde {\bf u}_{n,a_s}\tilde  {\bf u}_{n,a_s}^\top$.
 
 Let us assume that all actions are sample at least once
 in the beginning. Thus, in this case
 $\lambda_{\min}(\tilde M_t)
 \geq \lambda_{\min}(\tilde A)$,
 where $\tilde A = \sum_{a\in [A]} \tilde  {\bf u}_{n,a}\tilde  {\bf u}_{n,a}^\top$.
 For convenience, let us also introduce the $C\times C$ matrix
 $A = \sum_{a\in [A]}  {\bf u}_{a} {\bf u}_{a}^\top = U^\top U$.
 
 In order to show that $\tilde M_t$ is invertible, it us enough to show that $\lambda_{\min}(\tilde A)>0$.
 
 Now, by the result of reconstruction of the feature matrix $M$, we know that there exists with high probability
 a permutation $\pi$  such that the columns are well estimated:
 \beqan
  \forall c, || {\bf u}_{\pi(c)} - \tilde  {\bf u}_{n,c}|| \leq 
 \Diamond 
 	A^{3}\sqrt{\sum_{i=1}^n\gamma_i^{-2}\frac{C\log(4A^3/\delta)}{2n^2}}\,.
 	\eeqan
 
 Thus, we study $E = \tilde A - A$. Let $\lambda$ be any eigenvalue of $E$, then it holds 
 \beqan
 \lambda &\leq& \text{trace}(E) =\sum_{a\in[A]}\text{trace}\bigg(
 \tilde  {\bf u}_{n,a}\tilde  {\bf u}_{n,a}^\top -  {\bf u}_{a} {\bf u}_{a}^\top\bigg)\\
 &\leq& \sum_{a\in[A]} ||\tilde  {\bf u}_{n,a}||^2
 - ||  {\bf u}_{a}||^2\\
 &\leq& \sum_{a\in[A]} \sum_{c\in[C]} 
 \tilde u_{n,a,c}^2 - u_{a,c}^2 \\
 &\leq&
 \sum_{a\in[A]} \sum_{c\in[C]} (\tilde u_{n,a,c} - u_{a,c})^2
 + 2u_{a,c}(\tilde u_{n,a,c} - u_{a,c})\\
 &\leq&\sum_{c\in[C]} ||\tilde u_{n,c} - u_{c}||^2
 + 2 \sum_{c\in[C]} \sqrt{\sum_{a\in[A]}u_{a,c}^2}
 \sqrt{\sum_{a\in[A]} (\tilde u_{n,a,c} - u_{a,c})^2}\\
 &\leq&\sum_{c\in[C]} ||\tilde  {\bf u}_{n,c} -  {\bf u}_{c}||^2
 +2 || {\bf u}_c|| ||\tilde  {\bf u}_{n,c} -  {\bf u}_{c}||
 \\
 &\leq&(2u_{max}+1)\sum_{c\in[C]} ||\tilde  {\bf u}_{n,c} -  {\bf u}_{c}|| \,.
 \eeqan
 Thus, provided that $n$ is large enough that 
 \beqan
 \lambda_{\min}(A) >2(2u_{max}+1)\sum_{c\in[C]} ||\tilde  {\bf u}_{n,c} -  {\bf u}_{c}||\,,
 \eeqan
 we deduce that $\tilde M_t$ is invertible.
  Using the fact that $A= U^\top U$,
 This translates to the condition
 \beqan
  \lambda_{\min}(U^\top U)>
 2\Diamond 
 	(2u_{max}+1)CA^{3}\sqrt{\sum_{i=1}^n\gamma_i^{-2}\frac{C\log(4A^3/\delta)}{2n^2}}\,
 \eeqan
 that is
 \beqan
  \frac{n^2}{\sum_{i=1}^n \gamma_i^{-2}}>   
  	\frac{4\Diamond^2(2u_{max}+1)^2C^3A^{6}\log(4A^3/\delta)}{\lambda^2_{\min}(U^\top U)}\,.
 \eeqan
 
 Thus, assuming that all actions are chosen at least once in the beginning, and that 
  \beqan
   \frac{n^2}{\sum_{m=1}^n \gamma_m^{-2}}>   
   	\frac{4\Diamond^2(2u_{max}+1)^2C^3A^{6}\log(4A^3/\delta)}{\lambda^2_{\min}(U^\top U)}\,,
  \eeqan
  then $\lambda_{\min}(\tilde M_t) \geq \lambda_{\min}(U^\top U)/2 = \lambda_0/2>0$ and   Theorem~\ref{thm:finitecase} and Theorem~\ref{thm:generalcase} both apply.

 In order to control the regret of the unregularized version of \OFUL, we now 
 use the proof of \citet[Theorem 4.1]{rusmevichientong2010linearly} combined with the fact that $\lambda_{\min}(\tilde M_t)\geq \lambda_0/2$
 to get 
 \beqan
 \sum_{t=A+1}^n \min\{||\bar{\bf u}_{A_{t}}||^2_{\tilde M_{t-1}^{-1}},1\} \leq2\max\{1, \frac{2R_\cX^2}{\lambda_0}\}\bigg(C\log(\max\{1, \frac{2R_\cX^2}{\lambda_0}\}) + (C+1)\log(n+1)\bigg)\,.
 \eeqan
 
 A straightforward adaptation of the proof of Theorem~\ref{thm:pOFUL2} then gives
 \beqan
 \kR_n &\leq& \rho'\sqrt{n(A+16 D_{n,\delta}^2 \sum_{t=A+1}^n \min\{||\bar{\bf u}_{A_{t}}||^2_{\tilde M_{t-1}^{-1}},1\})}\\
 &\leq& 16\rho'R^2\bigg(A\log(n)+\log(A/\delta)\bigg)\sqrt{n
 \Big(2+\frac{4R_\cX^2}{\lambda_0}\Big)\bigg(C\log\Big(1\hspace{-1mm}+\hspace{-1mm}\frac{2R_\cX^2}{\lambda_0}\Big) + (C\hspace{-1mm}+\hspace{-1mm}1)\log(n\hspace{-1mm}+\hspace{-1mm}1)\bigg)}\\
 && + \rho'\sqrt{An}\,.
 \eeqan
 
 Following the same steps as for Lemma~\ref{lem:pOFULSetting}, we finally obtain the result:
  \begin{mytheorem}[Unregularized OFUL robustness result]
    \label{thm:UpOFUL2}
    Assume $||{\bf v}^\circ||_2\leq R_\Theta$, for all $a\in\cA$,
    $||\bar{\bf u}_a||_2 \leq R_\cX$ and $|m_a| \leq 1$, and that
    $\arg \max_{a \in \cA} \bar{\bf u}_a^\top {\bf v}^\circ = \{a^\star\}$ (i.e.,
    the linearly realizable approximation has $a^\star$ as its unique
    optimal action).    Assume that each action has been played once.  Let $0 < \delta \leq 1$. 
    Provided that the number of mini-sessions $n_0$ is
     large enough to satisfy 
    \beqan
        \frac{n_0^2}{\sum_{i=1}^{n_0}\gamma_i^{-2}} &\geq& \tilde \varhexagon_{b,\delta}
    %
    \eeqan
    where 
    \beqan
     \tilde \varhexagon_{b,\delta} &=&      \max\bigg\{ \frac{2A^6\log(4A^2/\delta)}{\min\{\Gamma,\sigma_{\min}\}^2},
          \frac{A^9(1 + 10(\frac{1}{\Gamma} + \frac{1}{\sigma_{\min}})(1+u_{\max}^3))^2C^{5}\log(4A^3/\delta)}{2C^2_1\sigma_{min}^{3}}\\
          &&
          \frac{4\Diamond^2(2u_{max}+1)^2C^3A^{6}\log(4A^3/\delta)}{\lambda^2_{\min}(U^\top U)},\\
          &&\Diamond^2 A^6C^2\log(4A^3/\delta)\max\bigg\{
            2\ol\alpha_\star^2,
            \frac{8A||{\bf v}_b||_2^2}{g_b^2},
            \frac{2^7\ol\alpha_\star^2 Cu_{\max}^2 ||{\bf v}_b||_2^2}{g_b^2}
            +1/2
            \bigg\}
          \bigg\},
    \eeqan   
   then with   probability at least $1 - \delta$ for all $T \geq 0$,
   the regret $\kR_{A+1:n}$ of the \OFUL\ algorithm from decision $A+1$ to $n$ satisfies
    \[ \kR_{A+1:n} \leq 32R^2\bigg[A\log(n)+\log(A/\delta)\bigg]\sqrt{n
     \Big(2+\frac{4\ol R_\cX^2}{\lambda_0}\Big)\bigg(C\log\Big(1\hspace{-1mm}+\hspace{-1mm}\frac{2\ol R_\cX^2}{\lambda_0}\Big) + (C\hspace{-1mm}+\hspace{-1mm}1)\log(n\hspace{-1mm}+\hspace{-1mm}1)\bigg)}, \]
  where we introduced
      \beqan
       \ol R_\cX =
        \max_{a\in\cA}||{\bf u}_a||_2 + \frac{\sqrt{A}}{2\ol \alpha_\star} \quad\text{and}\quad \ol\alpha_\star = \min_J || U_{J}^{-1}||\,.\eeqan
  \end{mytheorem}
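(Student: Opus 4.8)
The argument runs parallel to the regularized case, that is, to the combination of Theorem~\ref{thm:pOFUL2} and Lemma~\ref{lem:pOFULSetting}, with two substitutions dictated by setting $\lambda=0$: the regularized self-normalized confidence set and elliptical potential bound are replaced by their unregularized counterparts from \citet{rusmevichientong2010linearly}, namely the confidence set of Theorem~\ref{thm:finitecase} (taking $\cU$ to be the $A$ estimated arm features, so that $D_{t,\delta}=4R^2(A\log t+\log(A/\delta))$) and the potential bound of \citet[Theorem 4.1]{rusmevichientong2010linearly}. The one genuinely new ingredient is that, without regularization, one must keep the empirical Gram matrix $\tilde M_t=\sum_{s}\tilde{\bf u}_{n,a_s}\tilde{\bf u}_{n,a_s}^\top$ uniformly invertible; this is precisely what the preceding subsection established, giving $\lambda_{\min}(\tilde M_t)\geq\lambda_0/2$ with $\lambda_0\eqdef\lambda_{\min}(U^\top U)$, once every arm has been played once and the third term of $\tilde\varhexagon_{b,\delta}$ is met.

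First I would assemble the good event by reading off the four terms of $\tilde\varhexagon_{b,\delta}$. The first two terms reproduce conditions~\eqref{eqn:nbig1} and~\eqref{eqn:nbig2}, so Theorem~\ref{thm:Uestimates} applies and supplies the column-wise reconstruction bound~\eqref{eqn:Uestimates} on $\bar U_n$. The third term forces $\lambda_{\min}(\tilde M_t)\geq\lambda_0/2$ via the trace-perturbation estimate already derived above for $E=\tilde A-A$. The fourth term is exactly the condition of Lemma~\ref{lem:pOFULSetting}, and I would re-run Lemmas~\ref{lem:controlalphan}, \ref{lem:ass2sufficient} and~\ref{lem:controlrho} verbatim to obtain that precondition~\eqref{eqn:ass2} holds and that the distortion satisfies $\rho'\leq 2$, together with the bound on $\ol R_\cX$ from Lemma~\ref{lem:RXbound}. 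The only change is that the bias-control lemma (the $\lambda=0$ analog of Lemma~\ref{lem:parameterbias}) is applied to $\mathbf{A}=\bar U_n$ rather than to $[\bar U_n;\,I_C]$ (i.e. $\bar U_n$ stacked over $I_C$), since there is no regularization block; this is why the relevant constant becomes $\ol\alpha_\star=\min_J\|U_J^{-1}\|$ over full-rank $C\times C$ row-subsets $J$ in place of $\alpha_\star$, with the \citet[Corollary 2.3]{forsgren1996linear} bound invoked under the guarantee that all arms are sampled so that the weighted Gram matrix has full support.

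With the good event in hand, I would assemble the regret by the $\lambda=0$ version of the proof of Theorem~\ref{thm:pOFUL2}. The inclusion ${\bf v}_{t-1}^+\in\cC_t$ now follows from Theorem~\ref{thm:finitecase} (or Theorem~\ref{thm:generalcase} in the general case); the instantaneous approximate regret is still bounded as $r_t^+\leq 2D_{t-1}\|\bar{\bf u}_{A_t}\|_{\tilde M_{t-1}^{-1}}$; and the telescoping-plus-Cauchy--Schwarz step uses the elliptical potential bound of \citet[Theorem 4.1]{rusmevichientong2010linearly} for $\sum_{t=A+1}^n\min\{\|\bar{\bf u}_{A_t}\|_{\tilde M_{t-1}^{-1}}^2,1\}$, which is of order $\max\{1,2\ol R_\cX^2/\lambda_0\}\,C\log(n+1)$ and is valid precisely because $\lambda_{\min}(\tilde M_t)\geq\lambda_0/2$. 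Substituting $D_{n,\delta}=4R^2(A\log n+\log(A/\delta))$ and $\rho'\leq 2$ into the intermediate estimate $\kR_n\leq\rho'\sqrt{n(A+16D_{n,\delta}^2\sum_t\min\{\cdots,1\})}$ yields the displayed $32R^2$-prefactor bound on $\kR_{A+1:n}$; the probability budget $1-\delta$ is obtained by allocating the failure mass across the RTP event, the invertibility event, and the confidence-set event, and reparametrizing $\delta$.

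The main obstacle I anticipate is the bias-control step without the regularization anchor. In the regularized proof the appended $I_C$ block guarantees a full-rank row-subset for free and yields the clean constant $\alpha_\star$; with $\lambda=0$ one must instead argue that, because every arm has been played, the empirical frequency matrix has full support and the weighted Gram $U^\top\mathbf{F}U$ is invertible, so that the Forsgren bound applies with the best-conditioned submatrix and produces $\ol\alpha_\star=\min_J\|U_J^{-1}\|$. Verifying that this substitution propagates correctly through Lemmas~\ref{lem:controlalphan}--\ref{lem:controlrho}, and that the $A\log t$ (rather than $C\log t$) confidence radius still combines with the potential term to leave the stated $\sqrt{n}$ scaling, is the delicate bookkeeping that carries the proof.
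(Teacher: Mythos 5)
Your proposal follows essentially the same route as the paper: it substitutes the unregularized confidence set of Theorem~\ref{thm:finitecase} and the potential bound of \citet[Theorem 4.1]{rusmevichientong2010linearly} into the proof of Theorem~\ref{thm:pOFUL2}, secures invertibility of the empirical Gram matrix via the trace-perturbation bound on $\tilde A - A$ (the third term of $\tilde\varhexagon_{b,\delta}$), and collects the remaining conditions exactly as in Lemma~\ref{lem:pOFULSetting} to get $\rho'\leq 2$ and the $32R^2$ prefactor. The correspondence you draw between the four terms of $\tilde\varhexagon_{b,\delta}$ and the conditions~\eqref{eqn:nbig1}, \eqref{eqn:nbig2}, the invertibility requirement, and the Lemma~\ref{lem:pOFULSetting} condition matches the paper's argument, so the proposal is correct and not a different approach.
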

  
This result enables to get the corresponding variant of Theorem~\ref{thm:main} using an unregularized \OFUL.

\end{document}